\documentclass{article}

\usepackage{microtype}
\usepackage{wrapfig}
\usepackage{graphicx}
\usepackage{subcaption}
\usepackage{booktabs} 
\usepackage{nicefrac}
\usepackage{dsfont}
\usepackage{svg}
\usepackage{hyperref}

\usepackage{listings}
\usepackage{xcolor}

\definecolor{codegreen}{rgb}{0,0.6,0}
\definecolor{codegray}{rgb}{0.5,0.5,0.5}
\definecolor{codepurple}{rgb}{0.58,0,0.82}
\definecolor{backcolour}{gray}{0.9}

\lstdefinestyle{mystyle}{
    backgroundcolor=\color{backcolour},   
    commentstyle=\color{codegreen},
    keywordstyle=\color{magenta},
    numberstyle=\tiny\color{codegray},
    stringstyle=\color{codepurple},
    basicstyle=\ttfamily\footnotesize,
    breakatwhitespace=false,         
    breaklines=true,                 
    captionpos=b,                    
    keepspaces=true,                 
    numbers=left,                    
    numbersep=5pt,                  
    showspaces=false,                
    showstringspaces=false,
    showtabs=false,                  
    tabsize=2
}
\usepackage{hyperref}

\usepackage[accepted]{icml2026}

\usepackage{amsmath}
\usepackage{amssymb}
\usepackage{mathtools}
\usepackage{amsthm}

\usepackage{thmtools}
\usepackage{thm-restate}

\usepackage[capitalize,noabbrev]{cleveref}

\usepackage{amsmath,amsfonts,bm}

\def\eqref#1{equation~\ref{#1}}

\def\1{\bm{1}}

\def\vone{{\bm{1}}}

\def\vtheta{{\bm{\theta}}}

\def\vq{{\bm{q}}}

\def\vu{{\bm{u}}}

\def\vx{{\bm{x}}}

\def\vz{{\bm{z}}}

\def\mI{{\bm{I}}}
\def\mJ{{\bm{J}}}

\def\mM{{\bm{M}}}

\DeclareMathAlphabet{\mathsfit}{\encodingdefault}{\sfdefault}{m}{sl}
\SetMathAlphabet{\mathsfit}{bold}{\encodingdefault}{\sfdefault}{bx}{n}

\theoremstyle{plain}
\newtheorem{theorem}{Theorem}[section]

\theoremstyle{definition}
\newtheorem{definition}[theorem]{Definition}

\theoremstyle{remark}

\newcommand{\normtwo}[1]{\lVert #1 \rVert_2}

\usepackage[textsize=tiny]{todonotes}

\icmltitlerunning{Accelerating Q-learning through Efficient Value-Sharing across Actions}

\begin{document}

\twocolumn[
    \icmltitle{Accelerating Q-learning through Efficient Value-Sharing across Actions}



  \icmlsetsymbol{equal}{*}

  \begin{icmlauthorlist}
    \icmlauthor{Prabhat Nagarajan}{uofa,amii}
    \icmlauthor{Brett Daley}{brett}
    \icmlauthor{Martha White}{uofa,amii,cifar}
    \icmlauthor{Marlos C.\ Machado}{uofa,amii,cifar}
  \end{icmlauthorlist}

  \icmlaffiliation{uofa}{Department of Computing Science, University of Alberta, Edmonton, AB, Canada.}
  \icmlaffiliation{amii}{Alberta Machine Intelligence Institute.}
  \icmlaffiliation{brett}{Meta (independent work, not on Meta's behalf).}
  \icmlaffiliation{cifar}{Canada CIFAR AI Chair}

  \icmlcorrespondingauthor{Prabhat Nagarajan}{nagarajan@ualberta.ca}

  \icmlkeywords{mean-expansion layer, dueling, Q-learning}

  \vskip 0.3in
]



\printAffiliationsAndNotice{}  

\begin{abstract}
  
Action-values are foundational to many control algorithms such as Q-learning.
Therefore learning action-values efficiently is central to reinforcement learning (RL). 
However, learning them can be slow, requiring many updates to move values from their initialization, typically near zero, to their true values, which may be far from zero. 
Moreover, action-value learning algorithms typically update each state–action pair independently, without learning shared value structure across actions within a state.
In this paper, we address these inefficiencies by introducing the \textit{mean-expansion layer}, which accelerates action-value learning by sharing values across actions within a state and by changing the problem from directly learning potentially large action-values to learning a lower-norm representation of them.
In deep RL, this layer can be applied as a parameter-free addition to Q-network architectures without altering the underlying algorithm.
Applied to deep Q-networks and implicit quantile networks, it improves aggregate performance across 57 Atari games while increasing action gaps and dramatically reducing value overestimation.

\end{abstract}

\section{Introduction} \label{sec:intro}

Many reinforcement learning (RL) algorithms learn action-value functions, or Q-functions.
Q-functions represent the expected discounted return for state-action pairs.
Agents can make decisions by referencing their learned Q-function and selecting high-value actions.
Consequently, learning action-value functions efficiently is a central focus in RL~\citep{dqn}.
We draw attention to two observations.

The first observation is that the true action-values being learned are often of high (Euclidean) norm, due to each action-value representing an entire return.
Large norms can be problematic when we consider that almost universally, action-value estimates are initialized to be close to zero.
To add to this, the update rules of action-value learning algorithms like Q-learning~\citep{watkins1989learning,q_learning} make small, incremental changes to the predicted values.
Consequently, when the true action-value functions have high norm, many updates are needed to change the small initial values to the large values over the course of training~\citep{rdq}.
We can reason, then, that lower-norm outputs can be found faster.

The second motivating observation is that in many practical settings, action-values in a state are not mean-zero, as values are often correlated within a state.
If one action in a state has a positive or negative action-value, other actions are also likely to as well, and may even share similar values.
The majority of action-value learning methods, however, update each action-value in a state individually and independently of other actions.
Action-value learning could be accelerated, for example, by learning a common state-dependent baseline value that is shared across all actions in a state.
Doing so would transform the challenging problem of independently learning potentially large individual action-values to an easier one of learning smaller residuals.

These two ideals---value sharing through a state-dependent baseline and learning lower-norm solutions---are very related.
Learning a single scalar baseline for each state that is shared across actions requires the baseline be stored only once, rather than stored repeatedly in each action-value.
These ideals motivate the goal of our approach, which is \textbf{to represent a vector of action-values with a lower-norm vector by efficiently sharing value across actions}.

In deep RL, there are existing methods that share values to accelerate learning, namely dueling network approaches~\citep{dueling,va_learning,rdq}.
At each state, dueling network approaches produce a state-dependent baseline value along with action-specific residuals to construct Q-values.
In practice, however, learning this baseline requires adding extra parameters to a Q-network through a separate multi-layer output stream.

In this paper, we derive and analyze a value-sharing method which we call the \textit{mean-expansion layer} that does not require an explicit baseline nor extra model parameters.
This layer allows us to to share values across actions in a state by constructing an \textit{implicit baseline} which is shared by all actions to construct action-values.
This layer is implementable as a parameter-free modification to a Q-network.
It introduces no changes to the underlying algorithm and can be easily applied to both tabular and deep Q-learning. 
Simple PyTorch~\citep{pytorch} code to implement this layer is shown in Appendix~\ref{appendix:mel_pseudocode}.

Our empirical findings demonstrate several benefits of the mean-expansion layer.
We first show that it can accelerate learning in a controlled gridworld setting.
In deep RL, we find that it accelerates learning and improves aggregate performance when applied to deep Q-networks~\citep[DQN;][]{dqn} and implicit quantile networks~\cite{iqn} in 57 Atari 2600 games~\citep{ale}.
Lastly, we find that it substantially reduces overestimation and increases the action gap in DQN --- both desirable properties in deep RL.

\section{Preliminaries} \label{sec:background}
We formulate the RL problem as a finite Markov decision process (MDP).
A finite MDP is a tuple $(\mathcal{S}, \mathcal{A}, P, \mu, R, \gamma,)$, where $\mathcal{S}$ is a finite set of environment states and $\mathcal{A}$ is the finite set of actions available to the agent.
In this paper we use $n$ to denote $|\mathcal{A}|$, the cardinality of the action space.
The state transition probability $P(s'|s,a)$ is the probability that the agent transitions to state $s'$ after taking action $a$ in state $s$.
The distribution $\mu \in \Delta(\mathcal{S})$ is a start-state distribution, where $\mu(s)$ denotes the probability of an episode beginning in state~$s$.
The reward function $R$ maps state transitions $(s,a,s')$ to a bounded scalar reward $r = R(s,a,s')$, which the agent receives at the next timestep as it transitions to $s'$. 

The agent's objective is to maximize its expected discounted return $\mathbb{E}_{P, \pi, \mu}\left[\sum_{t=0}^\infty \gamma^t R_{t+1} \right]$.
The quantity $\gamma \in [0, 1)$ is a discount rate that exponentially discounts future rewards.
A policy $\pi(\cdot|s) \in \Delta(\mathcal{A})$ is a decision rule defined for all actions that specifies the probability $\pi(a|s)$ of taking action $a$ in state $s$.
The expected discounted return for following policy $\pi$ after taking action $a$ in state $s$ is known as the action-value function for policy $\pi$, formalized as
\[
q_{\pi}(s,a) \coloneqq \mathbb{E}_{P, \pi}\left[\sum_{t=0}^\infty \gamma^t R_{t+1} \big| S_t = s, A_t = a\right].
\]
To learn to maximize the expected discounted return, algorithms like Q-learning~\citep{q_learning} learn an estimated action-value function $Q$ that approximates the action-value function of the optimal policy $q^* = \max_{\pi} q_{\pi}(s,a), \forall s \in \mathcal{S}, a \in \mathcal{A}$.

Deep Q-networks (DQN)~\citep{dqn} adapt Q-learning~\citep{q_learning} to leverage neural networks.
The algorithm learns a Q-network, which is a Q-function parameterized with a set of neural network parameters $\bm{\theta}$. 
Given a state $s$, the Q-network outputs a vector of action-values:
$\mathbf{q}(s; \bm{\theta}) = Q(s, \cdot; \bm{\theta}) = [Q(s,a_1; \bm{\theta}), ..., Q(s,a_n; \bm{\theta})]^\top$.
The agent's most recent $M$ experience transitions $(s, a, r, s')$ are stored in a replay buffer $\mathcal{D}$~\citep{lin_replay}.
This buffer serves as a training dataset for supervised regression of target values, where the targets $y(a, r, s')$ are constructed from the Q-learning update, $y(a,r,s') = r + \gamma \max_{a'}Q(s', a'; \bm{\theta}^{-})$.
The parameters $\bm{\theta}^-$ are the parameters of the \textit{target network}, a time-delayed copy of the Q-network, used to produce stable targets for a fixed interval.
The target network parameters are periodically copied from the Q-network parameters: $\bm{\theta}^- \gets \bm{\theta}$.
As the agent interacts with the environment, the algorithm samples transitions from $\mathcal{D}$ uniformly at random and minimizes the loss  ${\mathbb{E}}_{(s,a,r,s') \sim \mathcal{U}(\mathcal{D})}[(y(a, r,s') - Q(s,a;\mathbf{\bm{\theta}}))^2]$.

\section{The Mean-Expansion Layer} \label{sec:mst}

In this section, we aim to represent a vector $\vq$ of action-values with a lower-norm vector in a manner that shares values efficiently across actions.
To do so, we derive the \textit{mean-expansion layer} (ME layer), a simple, invertible, linear transformation.
We are motivated by the \textit{baseline-residual decomposition}, which represents a vector with a scalar baseline and a vector of residuals.
We begin this section by analyzing the norm-reducing properties of baseline-residual decompositions.
To strive for simplicity in neural network implementations and to further reduce norm, we seek a representation of $\vq$ in which we do not have to explicitly carry an extra baseline parameter.

Let $\vq \in \mathbb{R}^n$.
We can represent each entry of $\vq$ as $q_i = (q_i - b) + b$, where $b \in \mathbb{R}$ is a scalar \textit{baseline} and each $z_i = q_i - b$ is a \textit{residual}. The vector $\vz = [z_1,...,z_n]^\top$ is a \textit{residual vector}.
\begin{definition}[\textit{Baseline-residual vector}]
    The vector $\mathbf{u}(\vq, b) = [z_1, ..., z_n, b]^\top$ is the \textit{baseline-residual vector} of $\vq$ with baseline $b$.
    \end{definition}

This decomposition of $\vq$ explicitly separates the shared component $b$ from $\vz$.
That is, changing $b$ changes all entries $q_i$, $i \in \{1,...,n\}$.
By contrast, changing $z_i$ affects only $q_i$.
The vector $\vu(\vq,b)$ can be used to construct $\vq$:
\begin{equation} \label{baseline_residual_decomposition}
        \vq
    = \left(\vq - b\vone\right)
    + b\vone = \vz + b\vone,
\end{equation}
where $\vone$ denotes the all-ones vector.
\subsection{Norm-reducing baselines}

Representing $\vq$ through a baseline-residual vector $\mathbf{u}(\vq, b)$ has the advantage of being efficient, in that its Euclidean norm is often less than that of $\vq$: $\|\mathbf{u}(\vq, b)\|_2^2 < \|\vq\|_2^2$.
In fact, if the entries of $\vq$ have non-zero mean, then it can be represented with a lower-norm baseline-residual decomposition.
Moreover, we can compute the baseline that results in the minimum norm baseline-residual vector.

\begin{restatable}{prop}{OptimalBaseline}{\text{(Norm-minimizing baseline)}.}
\label{prop:optimal_baseline}
    Given a vector $\vq \in \mathbb{R}^n$, the norm-minimizing baseline is $b^{*} = \nicefrac{\sum_{i=1}^{n} q_i}{(n + 1)}$. That is,
    \begin{align} \label{equation:optimal_baseline}
        b^{*} = \underset{b}{\textup{argmin }} \|\mathbf{u}(\vq,b)\|_2^2 = \frac{\sum_{i=1}^{n} q_i}{n + 1}.
    \end{align}
\end{restatable}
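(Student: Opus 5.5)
The plan is to write the squared norm of the baseline-residual vector as an explicit function of the scalar $b$ and minimize it by elementary calculus. By the definition of $\vu(\vq,b)$, the residuals are $z_i = q_i - b$ and the last coordinate is $b$, so
\begin{equation*}
f(b) \coloneqq \|\vu(\vq,b)\|_2^2 = \sum_{i=1}^{n} (q_i - b)^2 + b^2 .
\end{equation*}
This is a quadratic polynomial in $b$ with leading coefficient $n+1 > 0$. It is therefore strictly convex, and it has a unique global minimizer where its derivative vanishes.

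The key steps, in order, are:
\begin{enumerate}
\item Differentiate to get $f'(b) = -2\sum_{i=1}^n (q_i - b) + 2b = -2\sum_{i=1}^n q_i + 2(n+1)b$.
\item Set $f'(b)=0$ and solve, which gives $b^* = \sum_{i=1}^n q_i/(n+1)$.
\item Invoke strict convexity, via $f''(b) = 2(n+1) > 0$, to conclude that this stationary point is the unique argmin. This justifies writing $\operatorname{argmin}$ as a single value in the statement.
\end{enumerate}
As an alternative that avoids calculus, one can complete the square:
\begin{equation*}
f(b) = (n+1)\bigl(b - b^*\bigr)^2 + \|\vq\|_2^2 - (n+1)(b^*)^2 .
\end{equation*}
This form makes the minimizer immediate. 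It also yields, as a remark, the minimal value $\|\vq\|_2^2 - \bigl(\sum_i q_i\bigr)^2/(n+1)$, which is strictly below $\|\vq\|_2^2$ exactly when the entries of $\vq$ have nonzero sum. That supports the preceding claim in the text.

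There is no real obstacle here. The only point requiring care is the uniqueness of the minimizer, which I will handle through the positive leading coefficient, or equivalently through the completed-square form.
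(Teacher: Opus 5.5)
Your proposal is correct and follows essentially the same route as the paper: write $f(b)=\sum_{i=1}^n (q_i-b)^2+b^2$, note convexity, and solve $f'(b)=0$ to get $b^*=\sum_{i=1}^n q_i/(n+1)$. Your use of $f''(b)=2(n+1)>0$ (or the completed-square form) to get strict convexity makes the uniqueness of the argmin explicit, which the paper leaves implicit by saying only that $f$ is convex.
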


\begin{proof}
    See Appendix~\ref{appendix:theory}.
\end{proof}

More generally, we consider any baseline-residual decomposition efficient if its vector has lower norm than $\vq$.
\begin{definition}[\textit{Norm-reducing baseline}]
    Any $b \in \mathbb{R}$ such that $\|\mathbf{u}(\vq, b)\|^2 < \|\vq\|^2$ is called a \textit{strictly norm-reducing baseline}.
\end{definition}

In Proposition~\ref{prop:good_baselines} in Appendix~\ref{appendix:theory}, we show that in general, if the mean value of the entries of $\vq$,
\begin{equation}
    \mu_{\vq} = \frac{1}{n}\sum_{i=1}^{n} q_i,
\end{equation}
is non-zero, then any baseline non-inclusively between 0 and $2b^{*}$ is a strictly norm-reducing baseline.

In this paper, we only consider baselines between $0$ and $\mu_{\vq}$, which are all strictly norm-reducing baselines,  other than $b=0$.
We go one step further in this section, eliminating the baseline from our representation, resulting in even further norm reduction than a baseline-residual vector.

\subsection{Deriving the Mean-Expansion Layer}

To avoid learning a baseline explicitly, our approach is to store only a residual vector and reconstruct a desired baseline implicitly.
To decide our objective, consider how the mean of the residual vector relates to the baseline.
Let $\vz = \vq - b\vone$ be the residual vector of $\vq$ with baseline~$b$. 
The absolute value of the mean of this residual vector $g_{\vz}(b) = | \frac{1}{n}\sum_{i}(q_i - b) | = |\mu_\vq - b| = |\mu_{\vz}|$ is a decreasing function from $b=0$ to $b = \mu_{\vq}$, regardless of the sign of $\mu_\vq$ (Proposition~\ref{thm:residual_mean}, Appendix~\ref{appendix:theory}).
That is, the mean $\mu_{\vz}$ becomes smaller from $b=0$ to $b=\mu_{\vq}$.
Moreover, the Euclidean norm of the residual vector $\vz$ is also decreasing from $b=0$ to $b=\mu_{\vq}$ (Proposition~\ref{thm:residual_norm}, Appendix~\ref{appendix:theory}).
Consequently, to encourage a lower norm residual vector, we minimize
\begin{equation} \label{vst_optimization}
   \min_{\vz} \left( \frac{1}{n}\normtwo{\vz - \vq}^2 + k \left(\frac{1}{n}\sum_{i=1}^n z_i\right)^2 \right),
\end{equation}
where $k \ge 0$ is a penalty coefficient on the mean value of the vector's entries.
The first term encourages $\vz$ to fit $\vq$ exactly, or equivalently, $\vz = \vq - b\vone$, with $b=0$.
The second term penalizes the mean of the entries of $\vz$ for having large magnitude, encouraging $\vz = \vq - b\vone$, for some $b \in [0, \mu_\vq)$, as we will soon show.

Solutions that minimize this objective trade off directly representing $\vq - 0$ and representing some residual vector $\vq - b\vone$.
The choice of $k$ dictates the degree of this tradeoff, or how much the mean should be penalized for being large.
The fraction $\nicefrac{1}{n}$ on the first term reflects the fact that $\normtwo{\vz - \vq}^2$ is a sum over $n$ squared deviations, so its loss grows with $n$.
To account for this sum, we normalize the sum of squared deviations by $n$.
By contrast, regardless of the value of $n$, the second term squares a single scalar value (i.e., the mean), and does not require normalization as $n$ grows.

We can characterize the solution to this objective as a solution to a linear system of equations.
Let $\mJ$ be the all-ones matrix, i.e., the $n \times n$ matrix of ones.

\begin{restatable}{prop}{RegularizationInterpretation}
\label{thm:regularization}
Given vector $\vq \in \mathbb{R}^n$, the unique minimizer of Equation~\ref{vst_optimization} is the solution to the system of equations:
\begin{equation} \label{me_transformation}
    \vq = \left(\mI + \frac{k}{n}\mJ \right)\vz.
\end{equation}
\end{restatable}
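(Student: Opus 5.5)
The plan is to treat the objective in Equation~\ref{vst_optimization} as a quadratic function of $\vz$ and identify its minimizer through first-order conditions. Using $\frac{1}{n}\sum_i z_i = \frac{1}{n}\vone^\top \vz$, we can write the objective as
\begin{equation*}
L(\vz) = \frac{1}{n}\|\vz - \vq\|_2^2 + \frac{k}{n^2}\,\vz^\top \vone\vone^\top \vz = \frac{1}{n}\|\vz - \vq\|_2^2 + \frac{k}{n^2}\,\vz^\top \mJ \vz,
\end{equation*}
since $\vone\vone^\top = \mJ$.

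First, we argue that $L$ is strictly convex. Its Hessian is $\frac{2}{n}\left(\mI + \frac{k}{n}\mJ\right)$. The matrix $\mJ$ is symmetric positive semidefinite: its eigenvalues are $n$ (eigenvector $\vone$) and $0$ (on $\vone^\perp$). Hence for $k \ge 0$ the eigenvalues of $\mI + \frac{k}{n}\mJ$ are $1 + k$ and $1$, all positive. The Hessian is therefore positive definite, so $L$ is strictly convex and coercive. This gives a unique global minimizer, characterized exactly by the vanishing of the gradient.

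Second, we compute the gradient:
\begin{equation*}
\nabla L(\vz) = \frac{2}{n}(\vz - \vq) + \frac{2k}{n^2}\mJ\vz.
\end{equation*}
Setting it to zero and multiplying by $\frac{n}{2}$ gives $\vz - \vq + \frac{k}{n}\mJ\vz = 0$, that is, $\vq = \left(\mI + \frac{k}{n}\mJ\right)\vz$, which is Equation~\ref{me_transformation}. By the eigenvalue computation above, $\mI + \frac{k}{n}\mJ$ is invertible, so this system has exactly one solution, and that solution is the unique minimizer.

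No step here is a real obstacle. The only point needing care is justifying uniqueness, i.e., that the system is nonsingular for every $k \ge 0$. The spectral decomposition of $\mJ$ settles this directly. Alternatively, we can write the explicit inverse $\left(\mI + \frac{k}{n}\mJ\right)^{-1} = \mI - \frac{k}{n(1+k)}\mJ$ via Sherman--Morrison, using $\mJ^2 = n\mJ$.
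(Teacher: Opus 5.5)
Your proposal is correct and follows essentially the same route as the paper: rewrite the penalty as $\frac{k}{n^2}\vz^\top\mJ\vz$, take the gradient, and set it to zero to obtain $\vq = (\mI + \frac{k}{n}\mJ)\vz$. Your positive-definiteness argument (eigenvalues $1$ and $1+k$) also supplies the uniqueness and global-minimality justification that the paper's proof leaves implicit.
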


That is, to solve the objective in Equation~\ref{vst_optimization}, we can solve the system in Equation~\ref{me_transformation}.
For $k\ge 0$, we call the invertible matrix $\mM_k = \mI+\frac{k}{n}\mJ$ the \textit{mean-expansion layer}, where $k$ is a mean-scaling coefficient.

Returning to the language of baselines and residuals, the vector $\vz$ is still a residual vector, but the baseline vector $b\vone = \frac{k}{n}\mJ\vz$ is inferred from $\vz$.
In other words, given some target vector of values $\vq$, for some desired baseline $b \in [0, \mu_{\bm{q}})$, $\vq$ can be entirely represented by the residuals $\vz$ that represent the solution to the system described in Equation~\ref{me_transformation}.

\subsection{Geometric Properties} 
\label{subsection:geometry}

\begin{figure}[t]
    \centering
    \includegraphics[width=0.5\textwidth]{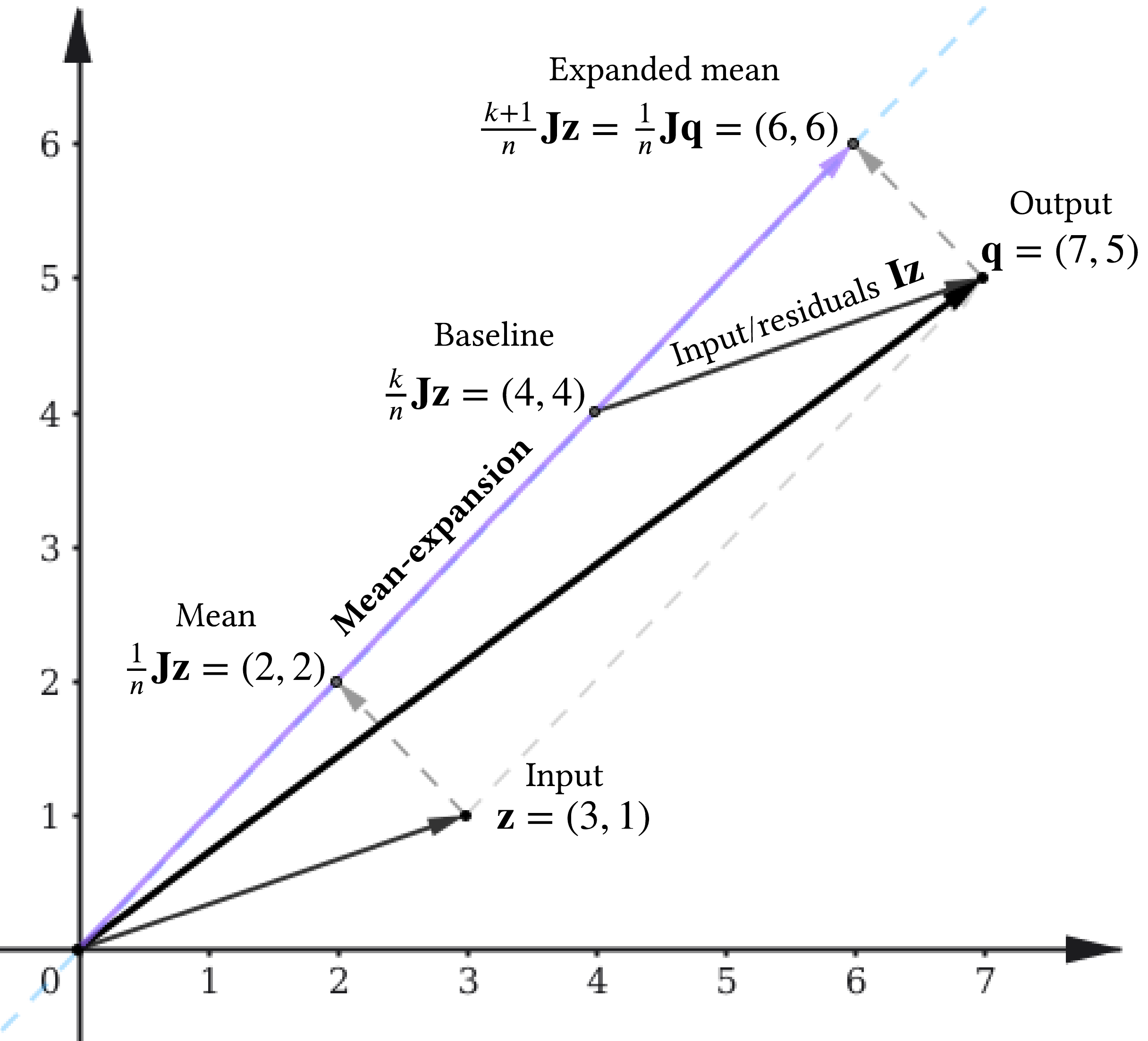}
    \caption{\textbf{The mean-expansion layer.} 
    The input vector $\vz = (3,1)$, is projected onto the all-ones vector to produce the mean component $(2,2)$.
    This mean component is scaled by $k$, where $k=2$, to produce the implicit baseline vector $\frac{k}{n}\mJ\vz = (4,4)$.
    The input vector, which also serves as the residual vector, $\vz$, is added to this implicit baseline to produce the output $\vq = (\mI + \frac{k}{n}\mJ)\vz = (7,5)$.
    The mean vector of $\vq$ is $(6,6)$, scaling the mean component of $\vz$ by $k+1$.
    The ME layer allows us to store low-norm vector $\vz$ instead of $\vq$ without explicitly storing a baseline.
    }
    \label{fig:geometric}
\end{figure}

As per its name, the mean-expansion layer scales the mean component of a vector.
To understand this, first note that the matrix $\frac{1}{n}\mJ$ is the projection matrix onto the all-ones direction $\vone$.
Applied to a vector, it produces a constant vector of means $\frac{1}{n}\mJ \vq = \mu_\vq \vone = [\mu_\vq,...,\mu_\vq]^{\top}$.
We call $\mu_\vq \vone$ the \textit{mean component} of $\vq$, i.e., the projection of $\vq$ onto $\vone$.
Geometrically, the mean-expansion layer scales the mean component by a factor of $k+1$.

Equation~\ref{me_transformation} can be rewritten as (see Appendix~\ref{appendix:theory:properties}):

\begin{equation} \label{equation:geometry}
    \vq = \underbrace{\left((k+1)\frac{1}{n}\mJ\vz\right)}_{\text{mean scaling}} + \underbrace{\left(\vz - \frac{1}{n}\mJ\vz\right)}_{\text{mean orthogonal}}.
\end{equation}
The mean component of $\vz$ is scaled by $k+1$ and added to the component of $\vz$ orthogonal to $\vone$ to produce $\vq$.
Equivalently, the residual vector $\vz$ is $\vq$ with its mean component scaled down.
Small changes to $\vz$ along the all-ones dimension result in large changes to $\vq$ along the all-ones dimension.
Figure~\ref{fig:geometric} provides a visual depiction of the transformation.

\subsection{Selecting $k$}

We have established how the mean-expansion layer avoids the explicit maintenance of a baseline parameter $b$.
All that remains, then, is to understand how to select $k$, based on the algorithm designer's desired implicit baseline $b$.
The relationship between $b$ and $k$ can be formalized as follows.

\begin{restatable}{prop}{BaselineRelationship}
\label{prop:baseline_relationship}
Let $\mu_\vz = \frac{1}{n}\sum_{i=1}^n z_i$.
Let $b = k\mu_{\vz}$, where $b\vone = \frac{k}{n}\mJ\vz$.
Then if $k > 0$,
\begin{align} \label{baseline_q_relationship}
    b = k \mu_{\vz} = \frac{\sum_{i=1}^n q_i}{n + \frac{n}{k}}.
\end{align}
\end{restatable}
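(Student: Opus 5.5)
The plan is to sum the entries of the system in Equation~\ref{me_transformation}, which links $\sum_i q_i$ to $\mu_\vz$, and then solve for $\mu_\vz$ so that $b = k\mu_\vz$ is written in terms of $\vq$ alone.

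First, the baseline identity. Since $\mJ\vz = \left(\sum_{i=1}^n z_i\right)\vone = n\mu_\vz \vone$, we have $\frac{k}{n}\mJ\vz = k\mu_\vz\vone$. This confirms that $b = k\mu_\vz$ is consistent with $b\vone = \frac{k}{n}\mJ\vz$, and the equation becomes $\vq = \vz + k\mu_\vz\vone$.

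Next, take the inner product of both sides with $\vone$, i.e., sum the entries. This gives
\begin{equation*}
\sum_{i=1}^n q_i = \sum_{i=1}^n z_i + nk\mu_\vz = n\mu_\vz + nk\mu_\vz = n(k+1)\mu_\vz .
\end{equation*}
Because $k>0$, the factor $n(k+1)$ is nonzero, so $\mu_\vz = \frac{\sum_i q_i}{n(k+1)}$. This is exactly the geometric statement of Equation~\ref{equation:geometry}: the mean of $\vq$ is $(k+1)$ times the mean of $\vz$.

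Finally, multiply by $k$ to get $b = k\mu_\vz = \frac{k\sum_i q_i}{n(k+1)}$. Dividing numerator and denominator by $k$, which is allowed precisely because $k>0$, yields $\frac{\sum_i q_i}{n + n/k}$, the claimed form.

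There is no real obstacle here. The only point needing care is the role of $k>0$: it is needed only for the final rewriting with $n/k$ in the denominator. For $k=0$ the baseline is simply $0$, consistent with the limit of the formula as $k \to 0^+$.
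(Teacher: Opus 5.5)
Your proof is correct and follows the same route as the paper's: sum the entries of $\vq = (\mI + \frac{k}{n}\mJ)\vz$ to get $\sum_i q_i = n(k+1)\mu_\vz$, solve for $k\mu_\vz$, and divide numerator and denominator by $k$. Your summation step is actually cleaner than the paper's, whose first displayed line reads $S_\vq = S_\vz + \frac{k}{n}S_\vz$ where it should read $S_\vq = (k+1)S_\vz$ (the subsequent lines there do use the correct relation).
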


\begin{proof}
    See Appendix~\ref{appendix:theory}.
\end{proof}

Applying Equation~\ref{baseline_q_relationship}, we see that as $k \rightarrow 0^{+}$, i.e., as $k$ approaches 0 from the right, the baseline $b \rightarrow 0$.
As $k \rightarrow \infty$, $b \rightarrow \mu_\vq$.
In this paper, we use $k=n$ unless stated otherwise, which corresponds to the norm-minimizing baseline $b^{*}$.

It is easy to see that for $k > 0$, when $\vq$ has non-zero mean, $\vz$ has a lower norm than both $\vq$ \textit{and} its corresponding baseline-residual vector.
That is, $\|\vz\|_{2}^{2} < \|\vu(\vq, k\mu_\vz)\|_{2}^2 < \|\vq\|_2^2$.
This follows from the fact that when $k > 0$, the implied baseline is in the range $(0, \mu_\vq)$, which are all strictly norm-reducing baselines.
By not explicitly storing the baseline, $\vz$ has even lower norm than its baseline-residual counterpart.

\subsection{Optimization and Numerical Stability}

When using the mean-expansion layer, $k$ cannot be too large.
While no mathematical issues arise with an arbitrarily large $k$, computational or numerical issues do.
Computing $\vz$ requires solving the system in Equation~\ref{me_transformation}.
For $k \ge 0$, the condition number of $\mM_k$ is $k + 1$, arising from scaling the all-ones direction by $k+1$. 
Hence, solving this system becomes numerically unstable for large values of $k$.
For extremely large $n$, $k=1$ is a safe choice, keeping the condition number at $2$.
The value $k = 1$ corresponds to the baseline $\nicefrac{\mu_\vq}{2}$, which is a strictly norm-reducing baseline when the mean is non-zero.

\section{The Mean-Expansion Layer for Q-networks} \label{sec:rl_applications}

Implementing the mean-expansion layer in a neural network is straightforward.
Just as a softmax layer transforms a vector of logits into a probability vector, the ME layer takes as input an $n$-dimensional vector of residuals and produces an output that adds those residuals to a baseline which is constructed by scaling the mean of those very residuals.
The layer increases the magnitude of the mean value of the input vector entries while preserving their relative differences.
PyTorch~\citep{pytorch} code for the ME layer is provided in Appendix~\ref{appendix:mel_pseudocode}.

The mean-expansion layer is added immediately before the vector-valued output of a neural network. 
In the previous section, we described how solving the system in Equation~\ref{me_transformation} allows us to produce the lower magnitude residual-only representation of some vector $\vq$.
When producing a vector-valued output $\vq$ for a state, as is done in Q-learning, we cannot solve Equation~\ref{me_transformation}  without prior knowledge of $\vq$.
Since $\vq$ is being learned, we instead implicitly change the problem to one of learning $\vz$.
The network's penultimate layer first outputs $\vz$ and then we apply the final layer, an ME layer, to construct $\vq = \mM_k\vz$.
Losses on $\vq$ are then backpropagated through $\mM_k$
to $\vz$.
We transition from gradient descent on $\vq$ in a standard Euclidean space to gradient descent on $\vq$ in a modified Euclidean space where distances along the all-ones direction are shortened.

Taking as input the vector $\vz$, the layer is implemented in a manner similar to Equation~\ref{equation:geometry} to produce output $\vq$:
\begin{align*}
    \mu_\vz &\gets \frac{1}{n} \sum_{i=1}^n z_i, \\
    \vq &\gets  \vz - \mu_\vz \vone + (k+1)\mu_\vz \vone.
\end{align*}

\subsection{Implicit-Baseline Deep Q-networks}

Concretely, in DQN, the standard output layer becomes the penultimate layer and we add an ME layer as an output layer. 
Given a state $s$, a standard Q-network outputs a vector $\vq(s;\bm{\theta})$ containing the $n$ values corresponding to each action-value $Q(s,a;\bm{\theta})$.
Adding a mean-expansion layer causes the network to first output a residual vector~$\vz$, whose entries are aggregated according to the mean-expansion layer to produce the action-value outputs.
We call this method Implicit-Baseline DQN, or $\text{IB-DQN}(k)$, where $k \ge 0$ is the hyperparameter to set the desired baseline.

IB-DQN has several benefits.
First, it generalizes DQN as special case when $k=0$.
Moreover, the invertibility of the transformation ensures that with a sufficiently deep network, the representation capacity does not change relative to a standard Q-network.
As a mere layer addition, IB-DQN does not modify the DQN algorithm itself and introduces no additional learnable parameters to the model.
A major benefit of IB-DQN is that $k$ is not an opaque hyperparameter.
We can select $k$ to produce some specific implicit baseline as a function of $\vq$, which may make the problem of hyperparameter selection easier.

\subsection{Mean-Expansion for Tabular Q-learning}
The mean-expansion layer can also be applied to tabular Q-learning.
In lieu of a lookup table of Q-values, we maintain a lookup table of residuals $Z(s,a)$ and construct the Q-values using the mean-expansion layer $M_k$: $Q(s, \cdot) = M_k Z(s,\cdot)$. This is equivalent to $Q(s,a) = Z(s,a) + \frac{k}{n}\sum_{a'}Z(s,a')$.

A gradient descent step with step size $\alpha_t$ produces the semi-gradient (i.e., the target is treated as a constant) update rule for all $a \in \mathcal{A}$ (see Appendix~\ref{appendix:tabular_update} for gradient computation):
\begin{align} \label{action_update}
    Z(s_t,a_t) &\gets Z(s_t,a_t) + \alpha_t \delta_t \left(1 + \frac{k}{n}\right).
\end{align}
\begin{align} \label{baseline_update}
     Z(s_t,a) &\gets Z(s_t,a) + \alpha_t \delta_t \left(\frac{k}{n}\right), \quad a \neq a_{t}.
\end{align}
We call this \textit{Implicit-Baseline Q-learning} (IBQ), as we combine the mean-expansion layer, which generates an implicit baseline, with Q-learning.
Each action's residual $Z(s_t, a)$ is incremented by $\frac{k}{n} \alpha_t \delta_t$.
The chosen action's residual is incremented by an additional $\alpha_t \delta_t$.
The action-value of the chosen action is incremented in greater proportion to the Q-learning error than the other actions, but the entire update emphasizes updating the baseline substantially more when $k$ is large.
Observe that we recover Q-learning~\citep{q_learning} as a special case when $k=0$.
More generally, this derivation applies to any TD-learning based action-value learning algorithm like Sarsa~\citep{sarsa} and Expected Sarsa~\citep{expected_sarsa}.

The update in Equations~\ref{action_update} and ~\ref{baseline_update} highlight how the ME layer induces the value-sharing in Q-learning.
The construction of $Q(s,a)$ shows that all the $Z(s,a)$ in a state each carry some component of the shared baseline used to construct \textit{any} action's value in that state.
Consequently, the credit for the TD error is distributed across all actions in a state, as indicated by the update equations.
This manner of credit distribution should accelerate learning of the shared baseline in a state relative to the residuals, potentially accelerating overall learning.

\section{Related Work} \label{sec:related}
The most related class of approaches are dueling network approaches~\citep{dueling,va_learning,rdq}, the closest being Dueling DQN.
Dueling DQN, like our method, is also an architectural modification to a Q-network that is otherwise trained through standard DQN, without modification to the loss function.
Dueling DQN, despite the use of the terms ``value'' and ``advantage'', in fact implements a baseline-residual decomposition of the action-value function.
Dueling DQN outputs a state-specific baseline $B(s; \vtheta)$, which serves as the mean action-value.
It also outputs action-specific residuals $Z(s,a; \vtheta)$ to construct action-values as $Q(s,a; \vtheta) = B(s; \vtheta) + Z(s,a; \vtheta)$.
To learn this baseline, however, Dueling DQN adds an additional two-layer output stream to a Q-network, introducing a substantial number of extra parameters.
In Appendix~\ref{appendix:dueling}, we show how Dueling DQN is a baseline-residual decomposition, specifically a mean-residual decomposition.

Another related method is called Regularized Dueling Q-learning (RDQ)~\citep{rdq}.
RDQ leverages the same architecture as Dueling DQN.
However, $B(s)$ does not serve as a predefined baseline in RDQ, unlike in Dueling DQN where its value is the mean action-value prediction.
Instead, RDQ augments the DQN loss with an additional L2-penalty,
$\beta (\frac{1}{2}B(s; \vtheta)^2 + \frac{1}{2}\sum_{a}Z(s,a; \vtheta)^2)$, where $\beta$ is a regularization coefficient.

Both RDQ and Dueling DQN learn an extra functional parameter to represent a baseline and both require an additional stream of learnable parameters.
Dueling DQN, like our method, is a mere architectural modification to DQN with a predefined baseline.
RDQ, by contrast, neither uses a predefined baseline, nor is it a mere architectural change, as it augments the DQN loss function.

Advantage updating~\citep{advantage_updating} serves as a conceptual predecessor for the methods mentioned here.
It is a classical algorithm that decomposes a value function into state-values and action-advantages, where the state-value is shared by actions in a state.
Its successor, Advantage Learning~\citep{advantage_learning} avoids learning an explicit state-value function.
These methods emphasize stable learning through gradient descent with function approximation rather than accelerating learning through value-sharing.

Baseline-residual decompositions are broadly related to centering and normalization, which have been explored in RL~\citep{reward_centering}.
Pop-Art~\citep{van2016learning} is a method which adaptively normalizes targets to be robust to different value scales.
\citet{reward_shifting} explore how shifting rewards can improve curiosity-driven exploration.
More generally, baselines are commonly used in RL for variance reduction and stability, particularly in policy gradient methods~\citep{reinforce,baseline_pg}.

\section{Experiments} \label{sec:experiments}

In this section, we evaluate our implicit-baseline Q-learning methods in a gridworld setting as well as in 57 Atari games.
In particular, we examine its sample efficiency, performance, sensitivity to $k$, overestimation, and action gaps.

\subsection{Gridworld Experiment}

\begin{figure}[tbp!]
    \centering
    \includegraphics[width=0.45 \textwidth]{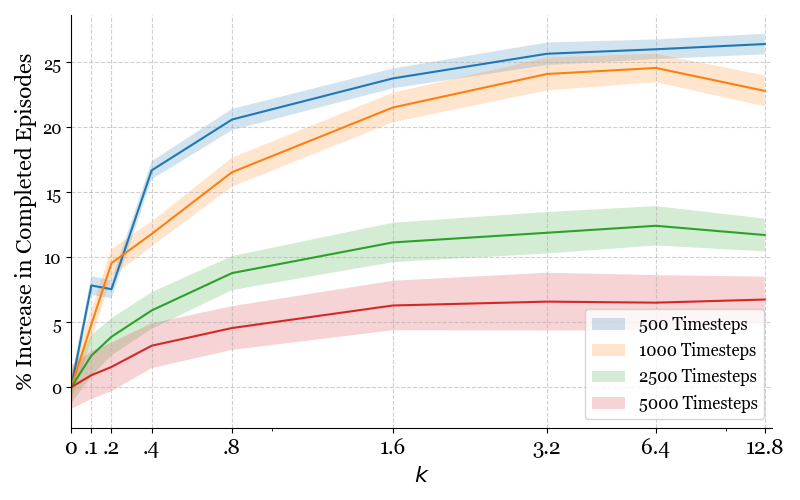}
    \caption{\textbf{Gridworld results.}    
    We compare IBQ with different values of $k$, including $k=0$, which is Q-learning.
    We report the percentage increase in episode completions across four sample complexity regimes.
    Shaded regions corresponds to a 95\% confidence interval.
    For most k, IBQ(k) can complete over 20\% more episodes than Q-learning within 1k timesteps.
    Both algorithms quickly master the task and their gap decreases with more timesteps.}
    \vspace{-12pt}
    \label{results:gridworld}
\end{figure}

\begin{figure*}[t]
    \centering
 \begin{subfigure}[b]{0.35\textwidth}
        \centering
        \includegraphics[width=\linewidth]
        {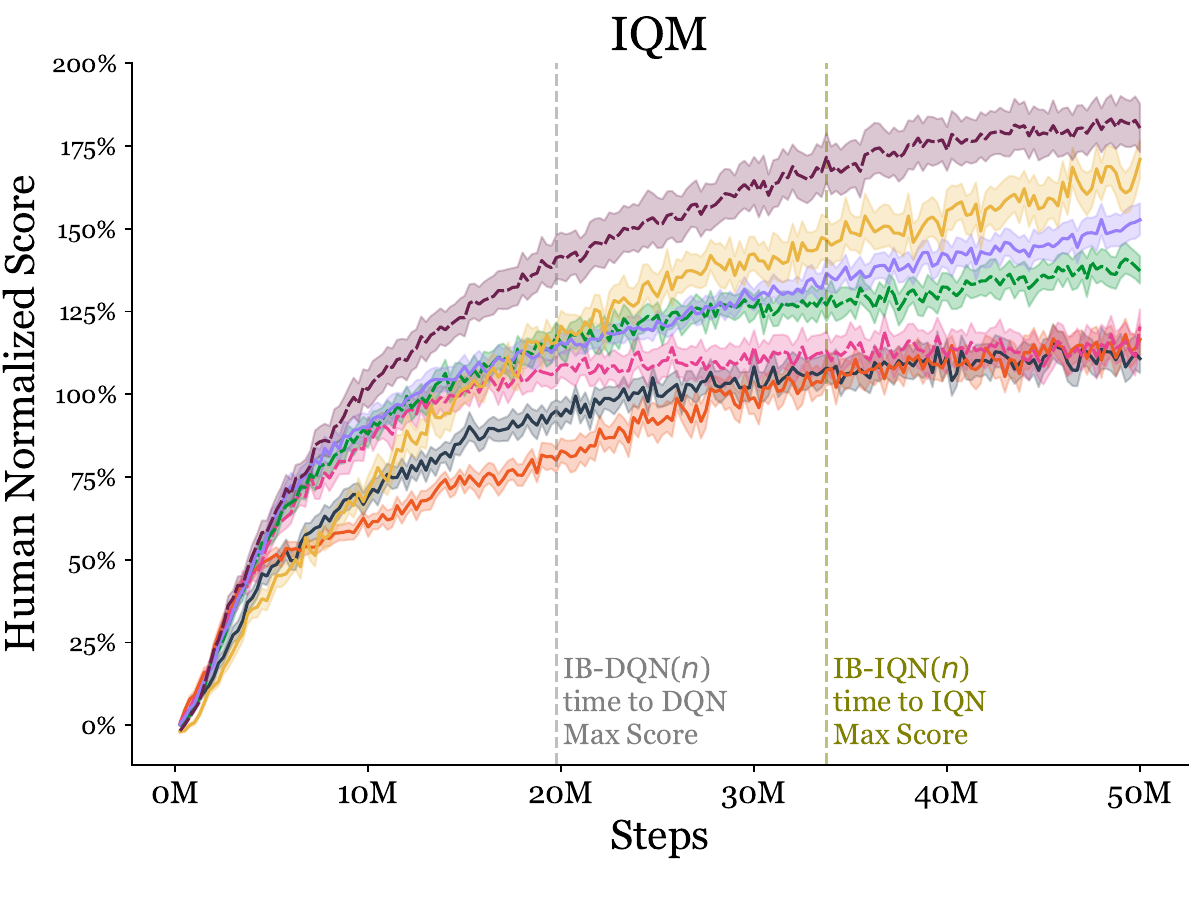}
        {\raggedleft \includegraphics[width=0.85\linewidth]{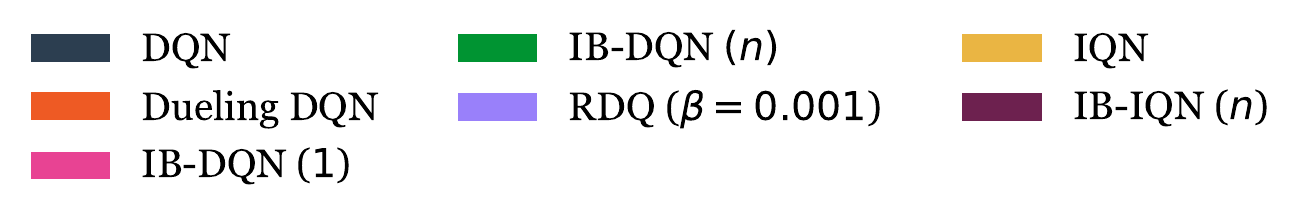} \par}
        \vspace{0.33cm} 
        \label{fig:iqm}
    \end{subfigure}
    \hfill
    \begin{subfigure}[b]{0.64\textwidth}
        \centering
        \includegraphics[width=\linewidth]{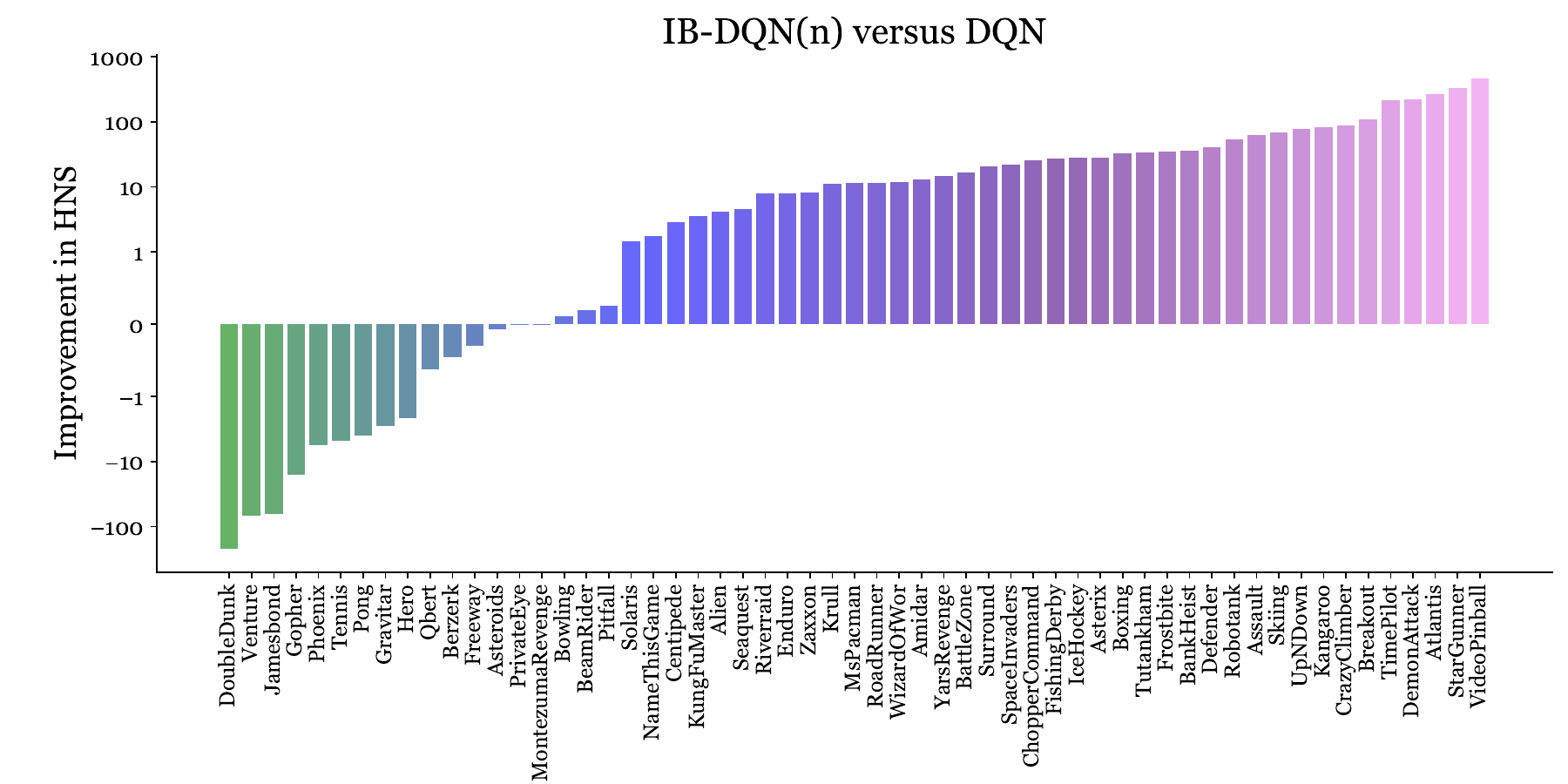}
        \label{fig:median}
    \end{subfigure}
  \caption{ (\textit{left}) Different algorithms and the interquartile mean of their human-normalized score across 57 games. All algorithms were run for five seeds per game.
  The shaded region depicts the 95\% stratified bootstrap confidence interval~\citep{statistical_precipice}. Dashed lines indicate the use of the mean-expansion layer. (\textit{right}) The increase in human-normalized score, measured as the average area-under-the-curve, when switching from DQN to IB-DQN($k=n$). Note that the y-axis is log-scale.}
  \label{fig:hns_comparison}    
\end{figure*}

We first examine IBQ, the tabular algorithm which combines the mean-expansion layer with tabular Q-learning (Equations~\ref{action_update} and ~\ref{baseline_update}).
To highlight its sample efficiency, we examine it in a pedagogical 5x5 stochastic gridworld task.
Episodes begin in the bottom left corner and terminate upon reaching the goal state in the top right corner.
The discount rate is 0.95.
The agent receives a reward of 5 for reaching the goal and 0 otherwise.
The agent's actions are the four cardinal directions.
The agent transitions according to its chosen action with probability $\nicefrac{3}{4}$ and according to a different randomly selected action with probability $\nicefrac{1}{4}$.

We evaluate IBQ at different $k$.
For each $k$, we report the best results across a sweep over 61 step sizes chosen from a logarithmic search over $(0,1]$, running each combination for 128 seeds.
All agents employ an $\epsilon$-greedy policy with $\epsilon=0.1$ for 5k timesteps.
Figure~\ref{results:gridworld} shows, for different values of $k$, the percentage increase in the number of completed episodes over Q-learning, across four different sample complexity regimes.
The shaded regions correspond to a 95\% confidence interval.

IBQ consistently learns faster than Q-learning.
By distributing credit to all four actions rather than one, value propagates more quickly.
The gap between Q-learning and IBQ decreases, however, with more experience, as both algorithms equally master the task.
We also observed that when $k$ is large, smaller step sizes are more suitable, given the larger scaling induced by $k$.
Conversely, when $k$ is small, larger step sizes are more suitable.

\subsection{Deep RL Empirical Methodology}

\paragraph{Setting.} We evaluate all of our agents in the Arcade Learning Environment (ALE)~\citep{ale} on the 57 standard Atari environments used in the literature~\citep{double_dqn}.
We follow the evaluation practices proposed by \citet{revisitingale}.
Specifically, we train and evaluate agents with sticky actions with probability 0.25, game-over termination signals, and the full action set.

\paragraph{Code and baselines.} Our algorithms and baselines\footnote{\fontsize{9pt}{11pt}\href{https://github.com/prabhatnagarajan/me_layer}{github.com/prabhatnagarajan/me\_layer}.} are implemented based on the PFRL
library~\citep{pfrl} and its reproduction of DQN.
Our DQN implementation follows modern best practices, including the use of the Adam optimizer~\citep{adam} and the squared-error loss, using the optimizer settings of ~\citet{rainbow}, as has become standard practice~\citep{revisiting_rainbow}.
IB-DQN($k$) refers to DQN which adds a final mean-expansion layer with mean-scaling coefficient $k$.
For IB-DQN, we do not modify the underlying DQN algorithm or its hyperparameters.
We compare against the most related value-sharing method, Dueling DQN.
We also scale up and evaluate RDQ, which was originally evaluated on MinAtar~\citep{minatar}, to the full ALE.
For RDQ, we set $\beta=0.001$, matching the original paper after testing several values on a subset of games.
For implicit quantile networks (IQN)~\citep{iqn}, we largely match the training settings of the original paper.
IB-IQN refers to IQN where an ME layer is added at the end of the network.
Appendix~\ref{appendix:experiments} contains more training and evaluation details for reproducibility.

\subsection{Atari-57 Performance}
Figure~\ref{fig:hns_comparison} (\textit{left}) depicts the performance in terms of the interquartile mean (IQM)~\citep{statistical_precipice} of the human-normalized score~\citep{dqn} of DQN, Dueling DQN, IB-DQN, RDQ, IQN, and IB-IQN.
All algorithms were run for five seeds per game.
Figure~\ref{fig:hns_comparison} (\textit{right}) shows the change in human-normalized score when switching from DQN to IB-DQN.
Figure~\ref{Atari57:score:page_2} in Appendix~\ref{appendix:results} shows the per-game learning curves.
Additionally, Table~\ref{table_results} in Appendix~\ref{appendix:results} reports the mean score over the last three evaluations for each agent in each environment, across all seeds.

IB-DQN($n$) clearly performs better than both DQN and Dueling DQN, with much better sample efficiency.
The first vertical dotted line shows the first timestep at which the IQM of IB-DQN($n$) exceeds DQN's highest score across its curve.
IB-DQN$(n)$ is able to achieve DQN's best performance at just under 20M timesteps, i.e., within 40\% of the total training time, exhibiting improved sample efficiency.
IB-IQN($n$), i.e., IQN with the ME layer, also exhibits  faster learning and performance over IQN.
It achieves IQN's highest score across its curve at 33.75M timesteps, at just over $\nicefrac{2}{3}$ of its total training time.
To test the impact of the ME layer under a different optimizer,  we evaluate the ME layer on DQN with RMSprop~\citep{rmsprop} and the Huber loss in Figure~\ref{fig:hns_rmsprop} of Appendix~\ref{appendix:rmsprop}.
The ME layer accelerates learning and improves performance in this setting too, reinforcing our core findings.

When $k=1$, which represents the conservative ME layer with a low condition number, we see faster initial learning and a modest improvement through training.
By scaling the all-ones component of a vector, the gradients along the all-ones component are also scaled.
This gradient scaling accelerates learning of a common baseline value for all entries of the vector.
When $k=1$, the mean action-value in a state is updated less aggressively than larger values of $k$, yet this appears sufficient to accelerate initial learning.
Overall, these results demonstrate that the ME layer can be an effective addition to a standard deep Q-network.

Scaling RDQ demonstrates it to be an effective algorithm.
Unlike Dueling DQN, RDQ is able to effectively leverage its additional parameters through explicit regularization.
RDQ outperforms IB-DQN($n$), though the latter is competitive.
This is to be expected; RDQ has an extra stream of parameters and an augmented loss.
The primary appeal of IB-DQN($n$) is its simplicity as a simple parameter-free layer that can be added to Q-networks that boosts performance through an implicit baseline.

Dueling DQN does not outperform DQN, contrary to the original results of ~\citet{dueling}.
It has been shown that modern implementations of DQN that use Adam and the squared-error loss perform on par with distributional variants~\cite{optimistic}, which have been known to outperform Dueling Double DQN with prioritized replay~\citep{c51}.
Though it is difficult to draw definitive conclusions without extensive tuning, it may be the case the Dueling DQN's benefits are better realized in older implementations of DQN with did not use Adam and the squared-error loss.

\begin{figure*}[t]
    \centering
    \begin{subfigure}[c]{0.495\textwidth}
        \centering
        \includegraphics[width=\linewidth]{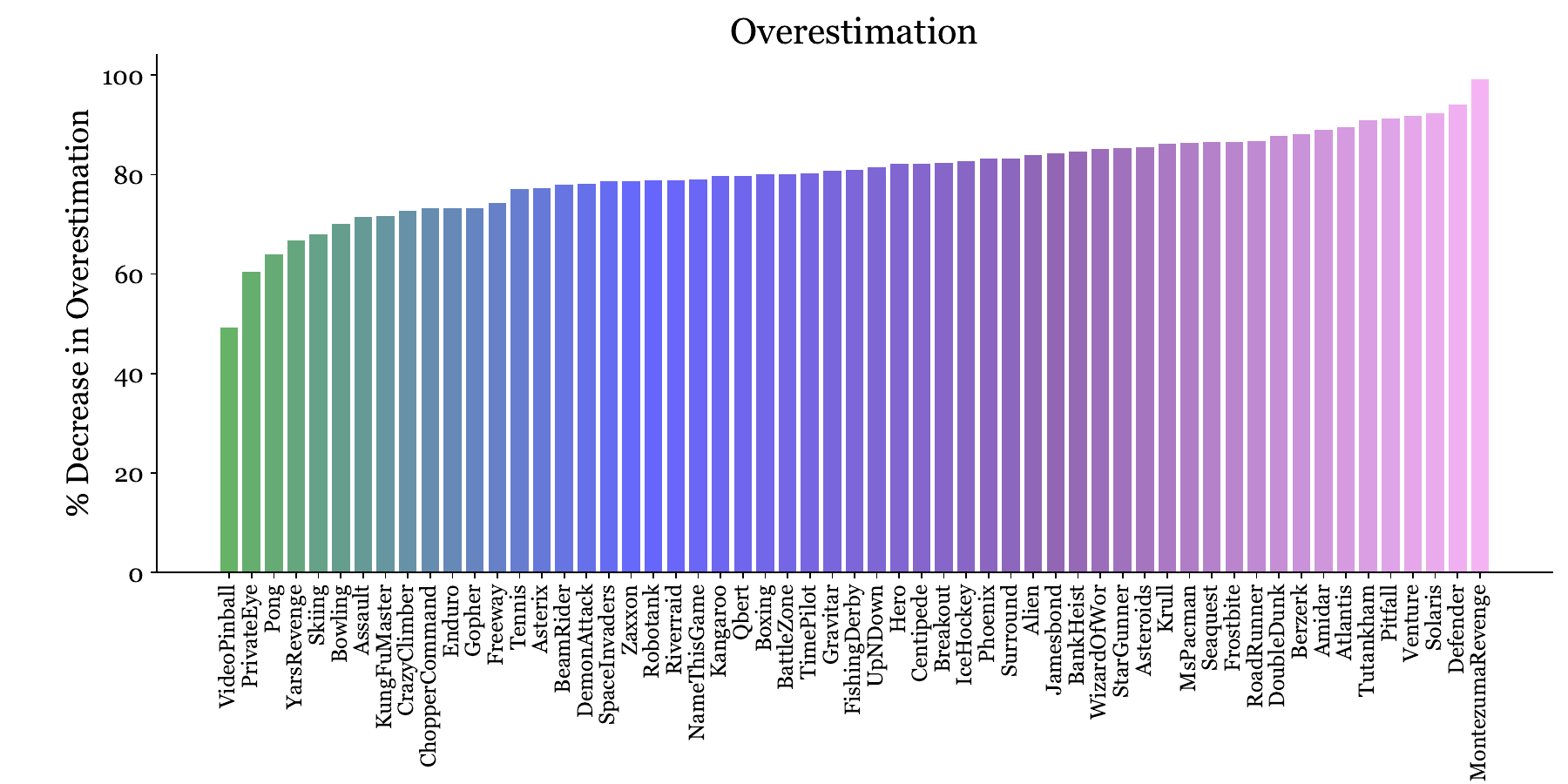}
    \end{subfigure}\hfill 
    \begin{subfigure}[c]{0.495\textwidth}
        \centering
        \includegraphics[width=\linewidth]{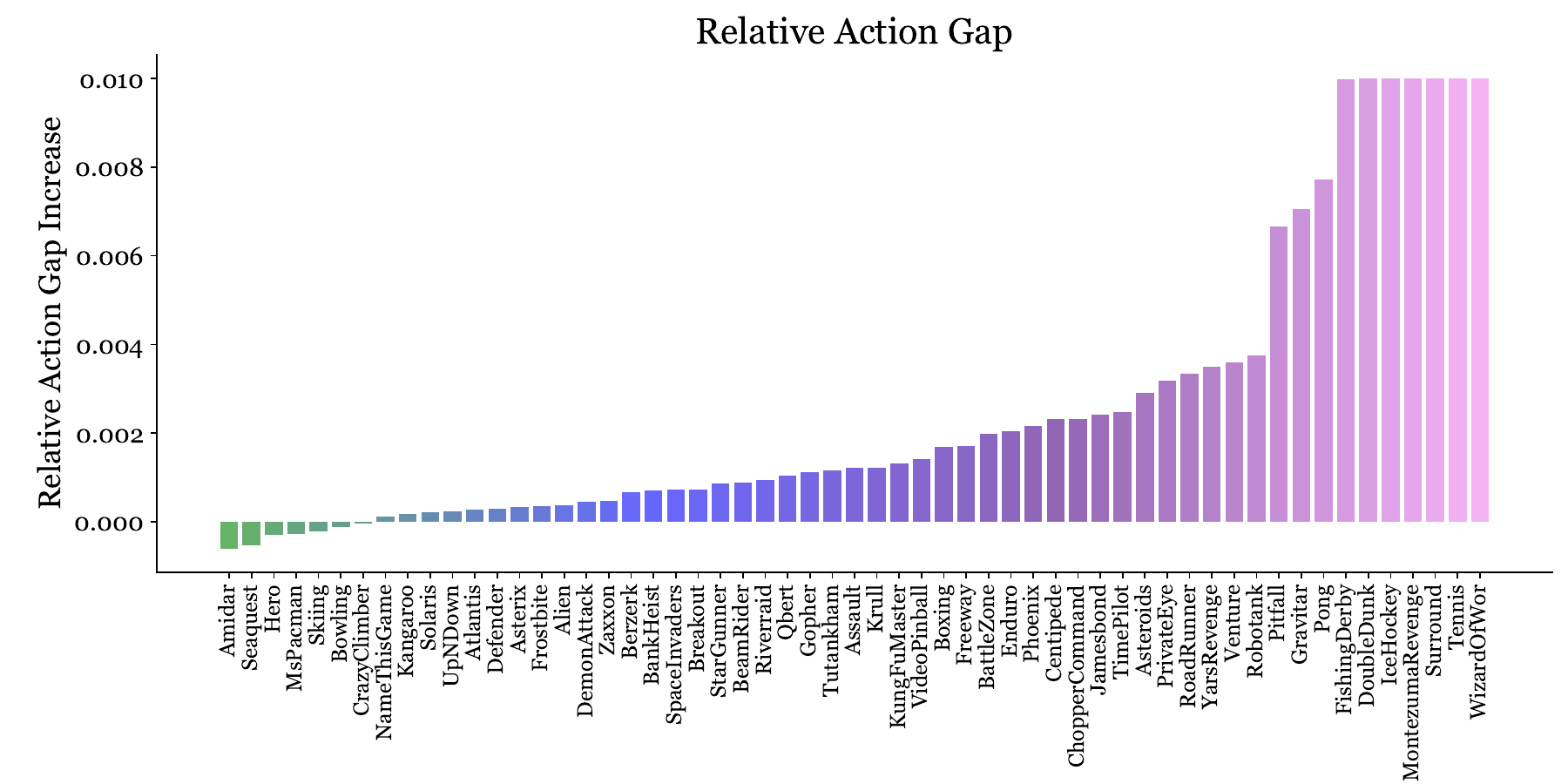}
    \end{subfigure}
    \caption{(\textit{left}) The percentage \textit{reduction} in overestimation when using IB-DQN over DQN, as a percentage of DQN’s average overestimation area under the curve.
    In all games, IB-DQN reduces overestimation over DQN. (\textit{right}) The \textit{increase} in relative action gap from using IB-DQN instead of DQN.
    In the vast majority of games, IB-DQN increases the relative action gap compared to DQN. Values are clipped at 0.01 for visibility.}
    \label{fig:aux_metrics}
\end{figure*}

\subsection{Value Function Stability: Overestimation and Action Gaps}
Aside from performance and sample efficiency, there are other metrics related to value-function dynamics that are known to be correlated with stability and performance in deep RL.
In particular, we examined two such metrics, overestimation and action gaps.
Overestimation refers to the action-value prediction exceeding the true expected return~\citep{ddql}.
Reducing overestimation has historically been correlated with improved stability and performance in deep RL~\citep{double_dqn,td3}.

Figure~\ref{fig:aux_metrics} (\textit{left}) shows the percentage reduction in value overestimation when using IB-DQN($n$) over DQN, measured as percentage of DQN's overestimation.
Overestimation is measured by comparing predicted returns to achieved returns during evaluation phases (see Appendix~\ref{appendix:experiments}).
As DQN overestimates in all games (see Figure~\ref{Atari57:Overestimation:page_2} in Appendix~\ref{appendix:results}), a reduction exceeding 100\%
indicates underestimation, while a reduction between 0\% and 100\% represents a reduction in, but not elimination of, overestimation.
IB-DQN exhibits reduced overestimation in all games.

In standard DQN, a positive TD error increases the action-value of one action and indirectly modifies the other action-values via changes to the penultimate layer's outputted features.
In IB-DQN, the updates are explicit such that positive TD error increases the values of all actions and a negative TD error decreases the value of all actions.
A TD error on one action is distributed across actions, perhaps tempering the outlier increases in the maximum action-value used for bootstrapping.
This sensitivity to global changes in value and the implicit penalty on increasing the mean of the residuals (Equation~\ref{vst_optimization}) may be why we observe reduced overestimation.
The full curves depicting overestimation throughout training are provided in Figure~\ref{Atari57:Overestimation:page_2} in Appendix~\ref{appendix:results}.
These curves also depict the overestimation of IB-DQN($1$), where we find that it exhibits less overestimation than DQN, but more than IB-DQN($n$).

Note that IB-DQN should be complementary to other methods for overestimation reduction, like Double DQN~\citep{double_dqn}.
Double DQN uses both the Q-network and target network to produce bootstrap targets.
The use of separate networks in the bootstrap target mitigates the maximization bias.
The use of two networks in the bootstrap target is still present when the ME layer is added to Double DQN.
Thus, the two methods should be complementary.

The second metric we measured was the action gap.
The \textit{action gap}~\citep{action_gap} refers to the difference between the maximum action-value and the second-highest action-value.
In deep RL, gaps between action-values in a state are often quite small.
For example, \citet{dueling} report that the average action-value across states of a Double DQN agent trained on the game Seaquest was 15.
However, the average action gap was a mere 0.04.
A consequence of small action gaps is that minor changes to action-values can rapidly change the greedy actions.
For these reasons, increasing the action gap is known to be generally beneficial in deep RL~\citep{increasing_action_gap}.

Figure~\ref{fig:aux_metrics} (\textit{right}) shows the increase in the relative action gap when transitioning from DQN to IB-DQN($n$).
The relative action gap refers to the action gap divided by the absolute value of the average Q-value.
We measure the relative action gap because IB-DQN produces much smaller action-values than DQN, as suggested by our overestimation results.
In a large majority of environments, IB-DQN exhibits an increased relative action gap.
This increase in relative action gap may be because our learned representations are themselves residuals.
Consequently, the majority of the model's capacity need not be devoted to fitting a large baseline value common to all actions, and can instead be used to model the relative differences between action-values.

\subsection{Sensitivity Analysis of Mean-Scaling Coefficient}
\begin{figure*}[t]
    \centering
    \begin{subfigure}[b]{0.28\textwidth}
        \centering
        \includegraphics[width=\linewidth]
        {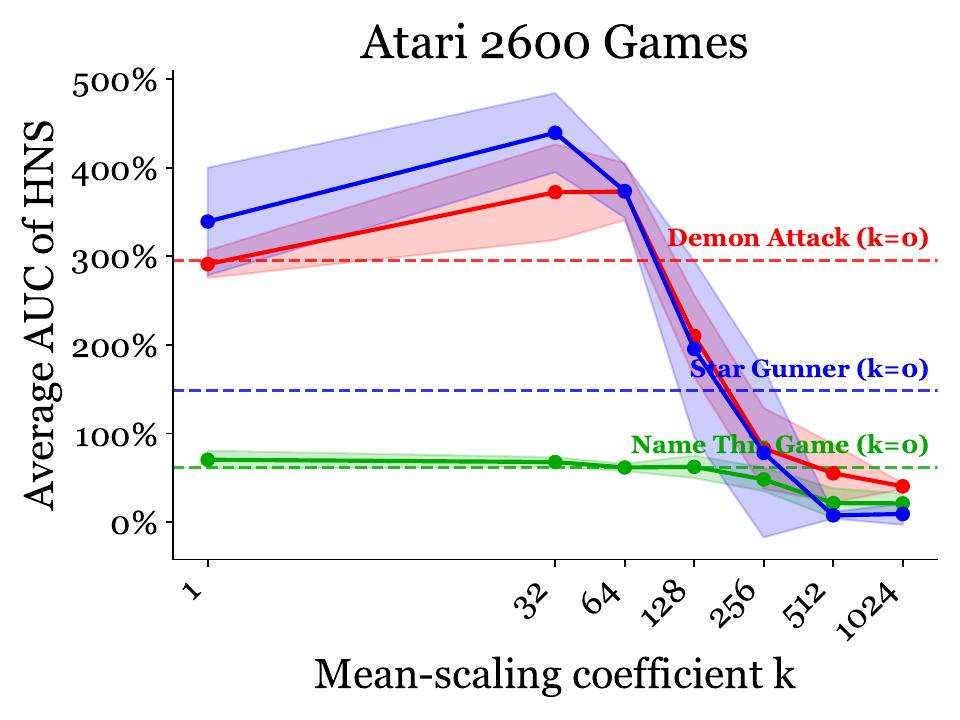}
    \end{subfigure}
 \begin{subfigure}[b]{0.28\textwidth}
        \centering
        \includegraphics[width=\linewidth]
        {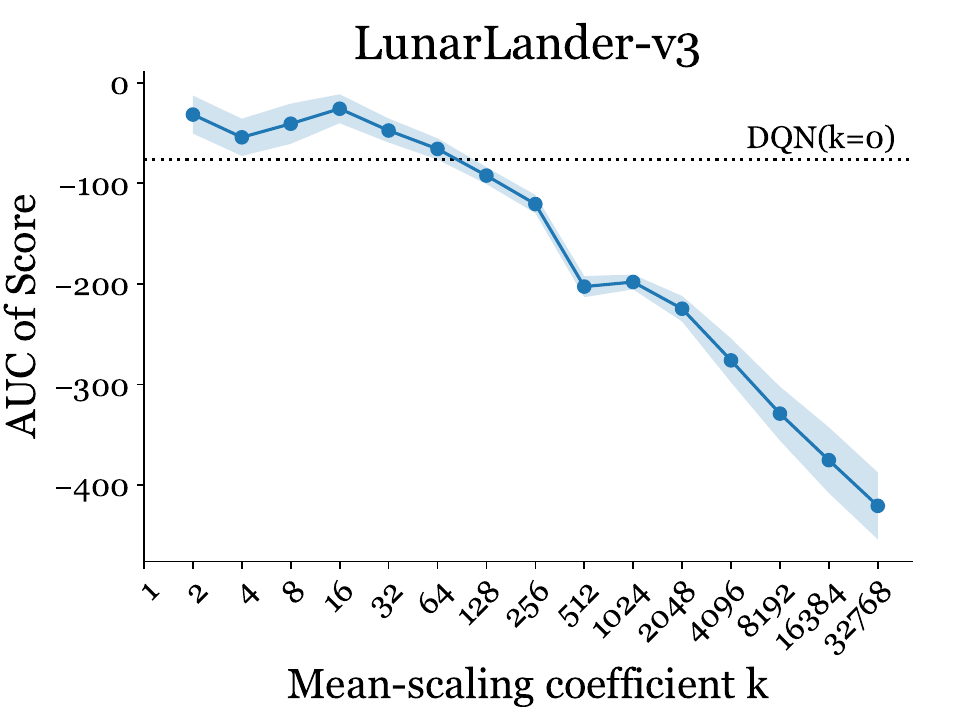}
    \end{subfigure}
    \begin{subfigure}[b]{0.28\textwidth}
        \centering
        \includegraphics[width=\linewidth]{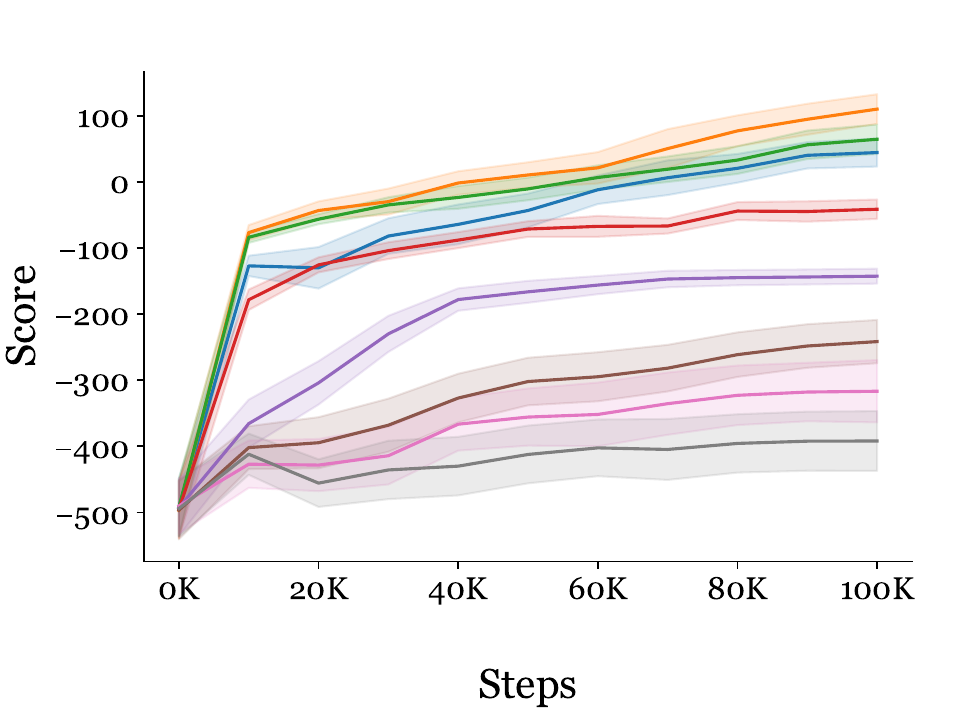}
    \end{subfigure}
    \begin{subfigure}[b]{0.1\textwidth}
        \centering
        \raisebox{4ex}{\includegraphics[width=\linewidth]{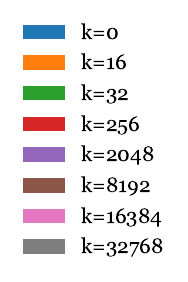}}
    \end{subfigure}
  \caption{\textbf{Sensitivity Analysis of $k$.} (\textit{left}) The average area under the curve (AUC) of the human-normalized score for several values of $k$ (log scale) on Atari 2600 games, where the shaded region depicts the standard deviation across five seeds.
  (\textit{center}) Similar to the \textit{left} figure, but for LunarLander-v3, where the shaded region depicts a  95\% confidence interval for the mean across 120 seeds.
  (\textit{right}) The corresponding LunarLander learning curves for a subset of values of $k$, again with the shaded region depicting a 95\% confidence interval.}
  \label{fig:lunar_sensitivity}    
\end{figure*}

To examine the sensitivity of IB-DQN to its mean-scaling coefficient $k$, we ran it for several values of $k$ on three Atari 2600 games (Figure~\ref{fig:lunar_sensitivity} (\textit{left})) for five seeds each and on Gymnasium's~\citep{gymnasium} LunarLander-v3 environment (Figure~\ref{fig:lunar_sensitivity} (\textit{center})) for 120 seeds each.
All environments share similar qualitative results.
For some values of $k$, performance exceeds that of DQN, which is equivalent to $k=0$.
Then, as $k$ grows too large, performance drops dramatically, likely due to gradients being scaled too much along the all-ones direction.
Figure~\ref{fig:lunar_sensitivity} (right) depicts the corresponding LunarLander-v3 learning curves for several values of $k$, where we can visibly see the degradation as $k$ grows large.

In this sensitivity analysis, we use the same step-size for all $k$, and our results should be interpreted accordingly.
In principle, we should tune the step size as we modify $k$, as we did in the tabular experiments.
Smaller step sizes should be more suitable for larger values of $k$.
Nonetheless, the general result is that for any fixed step-size, there is some large value of $k$ beyond which performance degrades.

\section{Conclusion and Discussion} \label{sec:discussion}

In this paper, we presented the \textit{\textbf{mean-expansion layer}} (ME layer), a simple way to represent and share values across actions.
The ME layer allows us to avoid learning an explicit baseline parameter and merely learn a vector of residuals.
This layer introduces no extra learnable parameters and does not change the underlying algorithm.
Applying this layer at the end of a Q-network produces reliable improvements to DQN and IQN in terms of sample efficiency and performance in aggregate across 57 Atari games.
It also provides large reductions in overestimation and increases the action gap, which are generally desirable in deep RL.

The ME layer applied to RL has its limitations.
The first is that performance degrades as $k$ grows large.
Therefore, selecting $k=n$ in large action spaces may lead to instabilities.
This limitation is not specific to the ME layer; large action spaces generally create more difficult learning problems, including for Q-learning methods.
Fortunately, selecting $k=1$ is well-conditioned and implies a baseline of $\nicefrac{\mu_\vq}{2}$, regardless of the number of actions.
As such, setting $k=1$ can be better than the implicit baseline of $0$ in standard Q-learning, as supported by our empirical results.

A second limitation is that there is a discrepancy between the conceptual framing of the ME layer and its practical usage.
While the ME layer is used to produce vector-valued outputs of action-values, feedback is in the form of sample-based updates to individual state-action pairs.
This discrepancy is also present in Dueling DQN, where updates to individual state-action pairs update the baseline $B(s)$, which represents the mean action-value in the state.
The updates are according to the behavior policy, which typically sample some actions in different proportions to others.
Nonetheless the mean is updated without accounting for variation in action-selection.
Examining the impact of this discrepancy is left for future work.

Our work casts a novel perspective on accelerating learning across actions in deep RL, especially as compared to Dueling DQN.
The original dueling network architecture introduces extra parameters, a modified architecture, and a learned state-specific baseline.
Any of one these could play an important role in improving performance.
Our work suggests that the key ingredient may be the state-specific baseline, as we improve performance without extra parameters or large architectural modifications.
We show that value-sharing does not necessarily require explicit separation of shared and independent components.
Instead, we can have each action's output carry both individual and shared information effectively.

Despite understanding fundamental aspects of the ME layer, there are several open questions.
We lack a comprehensive understanding of how the ME layer impacts the value function's learning path, how and when it improves performance, and its impact on convergence guarantees.
Moreover, the generalization of the ME layer to continuous actions remains an open question, and we still lack a mechanistic explanation for the ME layer's impact on overestimation and action gaps.

\section*{Acknowledgements}
We thank the anonymous reviewers, as well as Alex Lewandowski, Diego Gomez Noriega, Esraa Elelimy, and Roshan Shariff for their helpful feedback which improved the paper and its presentation. 

This research was supported in part by the Natural Sciences and Engineering Research
Council of Canada (NSERC), the Canada CIFAR AI Chair Program, and the Alberta Machine Intelligence Institute (Amii).
Prabhat Nagarajan is supported by the Alberta Innovates Graduate Student Scholarship.
Computational
resources were provided in part by the Digital Research Alliance of Canada.

\section*{Impact Statement}
This paper presents work whose goal is to advance the field of Machine Learning. There are many potential societal consequences of our work, none
which we feel must be specifically highlighted here.

\bibliography{citations}
\bibliographystyle{icml2026}


\clearpage
\newpage
\appendix

\section{Theoretical Results} \label{appendix:theory}

\subsection{Baseline Analysis}
\OptimalBaseline*

\begin{proof}
    Let $f(b)$ be the squared Euclidean norm of $\vu(\vq, b)$.
    \begin{align}
        f(b) = \|\vu(\vq, b)\|_2^2 = \sum_{i=1}^{n} (q_i - b)^2 + b^2.
    \end{align}
    This function is convex, so we can apply the first derivative test to find its minimum.
    \begin{align}
        \frac{df}{db} = 2b - 2\sum_{i=1}^n (q_i - b).
    \end{align}
    Setting the derivative to zero, we obtain
    \begin{align}
        0 &= 2b - 2\sum_{i=1}^n (q_i - b) \\
        0 &= b - \sum_{i=1}^n q_i + nb \\
        \sum_{i=1}^n q_i &= (n+1)b \\
        b &= \frac{\sum_{i=1}^n q_i}{n+1}.
    \end{align}
\end{proof}

\begin{restatable}{prop}{GoodBaselines}{\text{(Norm-reducing baselines)}.}
\label{prop:good_baselines}
Let $\mu_\vq = \nicefrac{\sum_{i=1}^{n} q_i}{n}$.
If $\mu_\vq < 0$, then the strictly norm-reducing baselines $b$ are the ones that satisfy $2b^{*} < b < 0$.
If $\mu_\vq > 0$, the strictly norm-reducing baselines satisfy $0 < b < 2b^{*}$.
\end{restatable}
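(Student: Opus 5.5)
We will compare $\|\vu(\vq,b)\|_2^2$ directly with $\|\vq\|_2^2$, reusing the function $f(b)=\sum_{i=1}^n (q_i-b)^2+b^2$ from the proof of Proposition~\ref{prop:optimal_baseline}. Since $\vu(\vq,0)=[q_1,\dots,q_n,0]^\top$, we have $f(0)=\|\vq\|_2^2$. Hence $b$ is strictly norm-reducing exactly when $f(b)<f(0)$, and the task reduces to solving a quadratic inequality in $b$.

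The first step is to expand $f(b)-f(0)$:
\begin{align*}
f(b)-f(0) = -2b\sum_{i=1}^n q_i + nb^2 + b^2 = b\left((n+1)b - 2\sum_{i=1}^n q_i\right).
\end{align*}
Using $b^{*}=\sum_i q_i/(n+1)$ from Proposition~\ref{prop:optimal_baseline}, this becomes
\begin{align*}
f(b)-f(0)=(n+1)\,b\,(b-2b^{*}).
\end{align*}
This is an upward-opening parabola in $b$ with roots $0$ and $2b^{*}$. It is therefore strictly negative exactly on the open interval between those roots.

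The second step is to identify the ordering of the roots from the sign of $\mu_\vq$. Since $b^{*}=n\mu_\vq/(n+1)$, the quantity $b^{*}$ has the same sign as $\mu_\vq$. If $\mu_\vq>0$, then $0<2b^{*}$ and the negativity region is $0<b<2b^{*}$. If $\mu_\vq<0$, then $2b^{*}<0$ and the region is $2b^{*}<b<0$. In both cases the inequality $f(b)<f(0)$ holds exactly on the stated interval, which gives the characterization claimed in Proposition~\ref{prop:good_baselines}.

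There is no substantial obstacle; the argument is a direct calculation. The only care needed is the bookkeeping that identifies $f(0)$ with $\|\vq\|_2^2$, through the zero appended entry of $\vu(\vq,0)$, and the factorization in the first step.
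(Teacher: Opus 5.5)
Your proof is correct and takes essentially the same approach as the paper. Like the paper, you reduce $\|\vu(\vq,b)\|_2^2<\|\vq\|_2^2$ to $b(b-2b^{*})<0$ (you keep the factor $n+1$, the paper divides it out) and then read off the open interval between $0$ and $2b^{*}$ from the sign of $\mu_\vq$, with your explicit $f(0)=\|\vq\|_2^2$ step making clear what the paper leaves implicit.
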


\begin{proof}

    Let $f(b)$ be the squared Euclidean norm of $\vu(\vq, b)$.
    \begin{align}
        f(b) = \|\vu(\vq, b)\|_2^2 = \sum_{i=1}^{n} (q_i - b)^2 + b^2.
    \end{align}

Expanding,
\[
\begin{aligned}
f(b)
&= \sum_{i=1}^n \left( q_i^2 + b^2 - 2 q_i b \right) + b^2 \\
&= \sum_{i=1}^n q_i^2 - 2 b \sum_{k=1}^n q_k + (n+1) b^2 .
\end{aligned}
\]

Using $\sum_{i=1}^n q_i = n \mu_\vq$, we obtain
\[
f(b) = \|\vq\|_2^2 - 2 b n \mu_\vq + (n+1) b^2 .
\]

Thus the norm of $\vu(\vq, b)$ is less than the norm of $\vq$ when
\begin{align*}
    -2 b n \mu_q + (n+1) b^2 &< 0 \\
    (n+1) b^2 &< 2 b n \mu_q \\
    b^2 &< 2 b b^{*} \\
    b(b - 2 b^{*}) &< 0 .
\end{align*}

This inequality holds if either $b < 0$ or $(b - 2b^{*}) < 0$, but not both.

If $\mu_\vq < 0$, then $b^{*} < 0$.
Conversely, if $\mu_\vq > 0$, then $b^{*} > 0$.
Thus, if $b^\star < 0$, then the norm is reduced for $2 b^\star < b < 0$.
If $b^\star > 0$, then the norm is reduced for $0 < b < 2 b^\star$.
This implies that if $\mu_\vq \neq 0$, then $\|\vu(\vq,b)\|_2^2$ is less than $\|\vq\|_2^2$.
\end{proof}

\begin{restatable}{prop}{ResidualMean}
\label{thm:residual_mean}
Let $\vq \in \mathbb{R}^n$ and $b \in \mathbb{R}$.
Assume $\mu_{\vq} \neq 0$.
The absolute value of the mean of the residual vector, $g_{\vq}(b) = |\frac{1}{n}\sum_{i=1}^{n} (q_i - b)| = |\mu_{\vq} - b| = \mu_{\vz}$, decreases from $b=0$ to $b=\mu_{\vq}$, regardless of the sign of $\mu_{\vq}$.
\end{restatable}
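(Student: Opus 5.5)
First I would identify the residual mean as a function of $b$. With $\vz = \vq - b\vone$ we have $\mu_{\vz} = \frac{1}{n}\sum_{i=1}^n (q_i - b) = \mu_{\vq} - b$. Hence $g_{\vq}(b) = |\mu_{\vq} - b|$, so the claim is about a single real variable. (The statement writes $\mu_{\vz}$ where $|\mu_{\vz}|$ is meant; I will prove the claim for $|\mu_{\vz}|$.) I interpret ``from $b=0$ to $b=\mu_{\vq}$'' as the closed interval $I$ with endpoints $0$ and $\mu_{\vq}$. This interval is traversed in the direction from $0$ toward $\mu_{\vq}$, so it is traversed leftward when $\mu_{\vq}<0$.

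Next I would split on the sign of $\mu_{\vq}$, which is nonzero by assumption.

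\textbf{Case $\mu_{\vq}>0$.} For $b\in[0,\mu_{\vq}]$ we have $\mu_{\vq}-b\ge 0$, so $g_{\vq}(b)=\mu_{\vq}-b$. This is affine with slope $-1$, so it is strictly decreasing as $b$ increases from $0$ to $\mu_{\vq}$. It goes from $g_{\vq}(0)=|\mu_{\vq}|$ to $g_{\vq}(\mu_{\vq})=0$.

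\textbf{Case $\mu_{\vq}<0$.} For $b\in[\mu_{\vq},0]$ we have $\mu_{\vq}-b\le 0$, so $g_{\vq}(b)=b-\mu_{\vq}$. This is strictly increasing in $b$. Moving from $b=0$ toward $b=\mu_{\vq}$ decreases $b$, so $g_{\vq}$ strictly decreases, again from $|\mu_{\vq}|$ to $0$.

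A sign-free way to phrase both cases is to parametrize $b=t\mu_{\vq}$ with $t\in[0,1]$. Then $g_{\vq}(t\mu_{\vq})=|\mu_{\vq}|(1-t)$, which is strictly decreasing in $t$ because $|\mu_{\vq}|>0$. I would present this parametrization as the main argument and the two cases as the intuition behind it.

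There is no real technical obstacle here. The only point requiring care is the orientation of ``decreases from $0$ to $\mu_{\vq}$'' when $\mu_{\vq}<0$. The $t$-parametrization settles this, and I would state it explicitly.
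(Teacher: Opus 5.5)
Your proposal is correct and follows the paper's argument: the paper also writes $g_{\vq}(b)=|\mu_{\vq}-b|$ piecewise, with slope $-1$ for $b<\mu_{\vq}$ and $+1$ for $b>\mu_{\vq}$, and then splits on the sign of $\mu_{\vq}$ to read off the decrease from $b=0$ toward $b=\mu_{\vq}$. Your parametrization $b=t\mu_{\vq}$, giving $g_{\vq}(t\mu_{\vq})=|\mu_{\vq}|(1-t)$, repackages that same case analysis without reference to sign and makes the orientation for $\mu_{\vq}<0$ explicit.
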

\begin{proof}
    \[
    g_{\vq}(b) =
    \begin{cases}
    \mu_{\vq} - b & b \le \mu_{\vq} , \\
    b - \mu_{\vq}  & b > \mu_{\vq}.
    \end{cases}
    \]
So, if $b < \mu_{\vq}$, $\frac{\mathrm{d}g_{\vq}}{\mathrm{d}b} = -1$, and if $b > \mu_{\vq}$, $\frac{\mathrm{d}g_{\vq}}{\mathrm{d}b} = 1$.
Thus, if $\mu_{\vq} > 0$, then on the interval $b \in (0, \mu_{\vq})$ $g_{\vq}(b)$ is decreasing.
If $\mu_{\vq} < 0$ then on the interval $b \in (\mu_{\vq}, 0)$ $g_{\vq}(b)$ is increasing.
Thus, $g_{\vq}(b)$ is decreasing from $b=0$ to $b = \mu_{\vq}$.
\end{proof}

\begin{restatable}{prop}{ResidualNorm}
\label{thm:residual_norm}
Let $\vq \in \mathbb{R}^n$ and $b \in \mathbb{R}$.
Assume $\mu_{\vq} \neq 0$.
The squared Euclidean norm of the residual vector $\|\vq - b\vone\|_2^2$ has decreasing magnitude from $b=0$ to $b=\mu_{\vq}$, regardless of the sign of $\mu_{\vq}$.
\end{restatable}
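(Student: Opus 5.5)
We will argue exactly as in the proof of Proposition~\ref{thm:residual_mean}, by writing the squared norm as an explicit quadratic in $b$ and checking the sign of its derivative on the relevant interval. Define
\[
h(b) = \|\vq - b\vone\|_2^2 = \sum_{i=1}^n (q_i - b)^2 .
\]
Expanding as in the proof of Proposition~\ref{prop:good_baselines} and using $\sum_i q_i = n\mu_\vq$, this becomes
\[
h(b) = \|\vq\|_2^2 - 2bn\mu_\vq + nb^2 .
\]
This is a strictly convex parabola. Its derivative is $h'(b) = 2n(b - \mu_\vq)$, so it has its unique minimizer at $b = \mu_\vq$.

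We then split on the sign of $\mu_\vq$, which is nonzero by assumption.

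\emph{Case $\mu_\vq > 0$.} For $b \in (0, \mu_\vq)$ we have $h'(b) < 0$. Hence $h$ is strictly decreasing as $b$ moves from $0$ up to $\mu_\vq$.

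\emph{Case $\mu_\vq < 0$.} The interval is $(\mu_\vq, 0)$, and there $h'(b) > 0$. So $h$ increases in $b$, which means it strictly decreases as $b$ travels from $0$ down toward $\mu_\vq$.

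In both cases, moving $b$ from $0$ toward $\mu_\vq$ monotonically decreases $h$. Its value falls from $h(0) = \|\vq\|_2^2$ to $h(\mu_\vq) = \|\vq\|_2^2 - n\mu_\vq^2$, and this drop is strict because $\mu_\vq \neq 0$.

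As a sanity check, we could also verify the result without calculus. For $b = t\mu_\vq$ with $t \in [0,1]$,
\[
h(t\mu_\vq) = \|\vq\|_2^2 - n\mu_\vq^2\, t(2 - t).
\]
The map $t \mapsto t(2-t)$ is increasing on $[0,1]$. This gives the same monotonicity statement in a sign-free way, and it sidesteps the direction issue entirely.

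The only real subtlety, and the main obstacle, is interpreting ``decreasing from $b=0$ to $b=\mu_\vq$'' correctly when $\mu_\vq < 0$. The paper's own phrasing in Proposition~\ref{thm:residual_mean} requires care here, since the claim is about traversal toward $\mu_\vq$ rather than increasing $b$. I would state the result explicitly in terms of the parametrization $t \in [0,1]$ above to avoid this ambiguity.
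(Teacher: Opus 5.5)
Your proof is correct and matches the paper's argument: the paper also differentiates $\|\vq - b\vone\|_2^2$ to get $2n(b-\mu_\vq)$, splits on the sign of $\mu_\vq$, and reads the $\mu_\vq<0$ case as the norm increasing on $(\mu_\vq,0)$, hence decreasing as $b$ moves from $0$ to $\mu_\vq$. Your additional parametrization $\|\vq\|_2^2 - n\mu_\vq^2\, t(2-t)$ with $b = t\mu_\vq$, $t \in [0,1]$, is also correct; it states the monotonicity without splitting on sign, which the paper does not do.
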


\begin{proof}
Define $f_\vq(b) = \lVert \vq - b\mathbf{1} \rVert_2^2 = \sum_{i=1}^n (q_i - b)^2  $.

To understand whether $f_\vq(b)$ is increasing or decreasing as a function of $b$, we examine its derivative:
\[
\frac{d f_\vq}{db}
= -2\sum_{i=1}^n (q_i - b)
= 2\sum_{i=1}^n (b - q_i)
= 2n(b - \mu_{\bm{q}}).
\]

Thus, $f_\vq(b)$ is decreasing when
\[
2n(b - \mu_{\bm{q}}) < 0 \quad \Longleftrightarrow \quad b < \mu_{\bm{q}},
\]
and increasing when $b > \mu_{\bm{q}}$.
The norm of the residual vector is minimized at $b = \mu_\vq$, a known fact. 
We can examine the two cases.
In the first case, if $\mu_{\bm{q}} < 0$, then for $b \in (\mu_{\bm{q}}, 0)$ we have $b > \mu_{\bm{q}}$, so $f_\vq(b)$ is increasing on this interval and hence decreases from $b=0$ to $\mu_{\bm{q}}$.
In the second case, if $\mu_{\bm{q}} > 0$, then for $b \in [0, \mu_{\bm{q}})$ we have $b < \mu_{\bm{q}}$, so $f_\vq(b)$ is decreasing on this interval.

Thus, from $b=0$ to $b=\mu_{\bm{q}}$, the norm of the residual vector decreases regardless of the sign of $\mu_{\bm{q}}$.
\end{proof}

\subsection{Analysis of the Mean-Expansion Layer}

\BaselineRelationship*

\begin{proof}   
    Let $S_\vq = \sum_{i=1}^n q_i$ and let $S_\vz = \sum_{i=1}^{n} z_i$.
    As Q-values are constructed with $Q(s,a) = Z(s,a) + \frac{k}{n}\sum_{a'} Z(s,a')$, then for $k \ge 0$, we can sum all Q-values:
    \begin{align*}
        S_\vq &= S_\vz  + \frac{k}{n}S_z \\
        \frac{k}{n}S_\vq &= \frac{k}{n}S_\vz  + \frac{k^2}{n}S_z \\
        \frac{k}{n}S_\vq &= \frac{k}{n}S_\vz (k+1) \\
        \frac{\frac{k}{n}S_\vq}{k+1} &= \frac{k}{n}S_\vz \\
        \frac{S_\vq}{\frac{n}{k}(k+1)} &= \frac{k}{n}S_\vz \\
        k \mu_\vz &= \frac{S_\vq}{n + \frac{n}{k}}.
    \end{align*}
\end{proof}

\subsection{Properties of the Mean-Expansion Layer} \label{appendix:theory:properties}

In Equation~\ref{equation:geometry}, we described the geometric properties of the mean-expansion layer as stretching the mean component of the vector.
This framing can be seen below.
\begin{align*}
    \vq &= \left(\mI+\frac{k}{n}\mJ\right)\vx \\
               &= \left(\mI + \frac{k}{n}\mJ + \frac{1}{n}\mJ - \frac{1}{n}\mJ\right)\vx \\
               &= \left(\frac{k}{n}\mJ\vx\right) + \left(\frac{1}{n}\mJ\vx\right) + \left(\mI - \frac{1}{n}\mJ\right)\mathbf{x} \\
               &= (k+1)\left(\frac{1}{n}\mJ\vx\right) + \left(\mI - \frac{1}{n}\mJ\right)\vx \\
               &= (k+1)\left(\frac{1}{n}\mJ\vx\right) + \left(\vx - \frac{1}{n}\mJ\vx\right).
\end{align*}
The first term stretches the mean and the second term subtracts the mean component from the original vector.

\RegularizationInterpretation*

\begin{proof}
    Let
\begin{align*}
    f(\vz) &= \frac{1}{n}\normtwo{\vz - \vq}^2 + k \left(\frac{1}{n}\sum_{i=1}^n z_i\right)^2 \\
     &= \frac{1}{n}(\vz - \vq)^\top(\vz - \vq) + k \left(\frac{1}{n}\sum_{i=1}^n z_i\right)^2 \\
     &= \frac{1}{n}(\vz - \vq)^\top(\vz - \vq) + k \left(\frac{1}{n}\mathbf{1}^\top\vz\right)^2 \\
     &= \frac{1}{n}(\vz - \vq)^\top(\vz - \vq) + \frac{k}{n^2} \vz^{\top}\mathbf{1}\mathbf{1}^\top\vz \\
     &= \frac{1}{n}(\vz - \vq)^\top(\vz - \vq) + \frac{k}{n^2} \vz^\top\mJ\vz
\end{align*}

We can then take the gradient with respect to $\vz$,
\begin{align*}
    \nabla_{\vz} f(\vz) &= \frac{2}{n}(\vz - \vq) + \frac{2k}{n^2}\mJ\vz.
\end{align*}
Setting the gradient to 0 and dividing by 2, we get:
\begin{align*}
    0 &= \frac{1}{n}(\vz - \vq) + \frac{k}{n^2}\mJ\vz \\
    0 &= (\vz - \vq) + \frac{k}{n}\mJ\vz \\
    \vq &= \vz + \frac{k}{n}\mJ\vz \\
    \vq &= \left(\mI+\frac{k}{n}\mJ\right)\vz.
\end{align*}
\end{proof}

\section{Tabular Update Rules} \label{appendix:tabular_update}
To derive the update rules for our residuals, we do Q-learning on the implied Q-values.
Let the Q-learning error at time $t$ be $\delta_t = r_{t+1} + \gamma \max_{a'}Q(s_{t+1},a') - Q(s_t,a_t)$.
We then apply gradient descent to the Q-learning error and minimize $\mathcal{L} = \frac{1}{2}\delta_t^2$.
For transition $(s_t,a_t,r_{t+1},s_{t+1})$, the derivative with respect to $Z(s_t,a)$, for action $a$ in state $s_t$ is 
\begin{align}
    \frac{\partial \mathcal{L}}{\partial Z(s_t,a)} &= \frac{\partial \mathcal{L}}{\partial Q(s_t,a)} \frac{\partial Q(s_t,a)}{\partial Z(s_t,a)} \\
    & = (- \delta_t) \frac{\partial Q}{\partial Z(s_t,a)} \\
     & = (- \delta_t) \left(\mathds{1}_{\{a_t = a\}} + \frac{k}{n}\right).
\end{align}

The interpretation, as shown in Equations~\ref{action_update} and~\ref{baseline_update}, is that the chosen action is updated in greater proportion to the TD error than the other actions.
That is, the update adds $\alpha_t(1 + \nicefrac{k}{n})\delta_t$ to the chosen action's residual, whereas the update adds $\alpha_t(\nicefrac{k}{n})\delta_t$ to the residuals of the chosen action.

\section{Semantics of the Dueling Network Architecture} \label{appendix:dueling}

In this Appendix, we discuss dueling methods more closely.

\subsection{Dueling Deep Q-networks}
Dueling DQN~\citep{dueling} in fact learns a mean-residual decomposition of action-values, under the terms ``value'' and ``advantage'' for the mean and residuals, respectively.
Its modification to the DQN architecture is to learn $B(s; \bm{\theta})$ in a baseline stream and $Z(s,\cdot; \vtheta)$  in a residual stream.
The baseline and residual streams are aggregated to construct the Q-values
\begin{align} \label{equation:dueling}
    Q(s,a; \bm{\theta}) &= B(s;\bm{\theta}) + Z(s,a; \bm{\theta}),
\end{align}
or $Q(s,\cdot; \bm{\theta}) = B(s;\bm{\theta})\mathbf{1} + Z(s,\cdot; \bm{\theta})$.
Prior to outputting the residual vector $Z(s, \cdot; \bm{\theta})$, the residual stream first produces raw outputs $X(s,a;\bm{\theta})$, which are then zero-centered to produce the action residuals $Z(s, \cdot;\bm{\theta})$:
\begin{align} \label{equation:residual_centering}
Z(s,a; \bm{\theta}) = X(s,a; \bm{\theta}) - \frac{1}{n}\sum_{a' \in \mathcal{A}} X(s,a'; \bm{\theta}).
\end{align}
\citet{dueling} find this architecture to be effective, providing a substantial improvement over DQN.

We can analyze the semantics of the dueling network architecture through its construction of action-values.
First, note that due to the zero-centering in Equation~\ref{equation:residual_centering}, the sum of residuals is zero:
\begin{align*}
\sum_{a} Z(s,a; \bm{\theta}) &= \sum_a \Big(X(s,a; \bm{\theta}) - \frac{1}{n} \sum_{a'} X(s,a'; \bm{\theta})\Big) & \\
&= \sum_a \Big(X(s,a; \bm{\theta})\Big) - \frac{n}{n} \sum_{a'} X(s,a'; \bm{\theta}) \\
& =0.
\end{align*}

Thus, summing the Q-values, we get
\begin{align*}
\sum_a Q(s,a; \vtheta) &= \sum_a \Big( B(s; \vtheta) + Z(s,a; \vtheta)\Big) \\
\sum_a Q(s,a; \bm{\theta}) &= n B(s; \bm{\theta}) + \sum_{a} Z(s,a;\vtheta) \\
\sum_a Q(s,a; \bm{\theta}) &= n B(s; \bm{\theta}).
\end{align*}

Dividing by $n$:
\begin{equation*}
\frac{1}{n} \sum_a Q(s,a; \bm{\theta}) = B(s; \bm{\theta}).
\end{equation*}

Thus, Dueling DQN enforces
\begin{equation*}
\boxed{B(s; \bm{\theta}) = \frac{1}{n} \sum_a Q(s,a; \bm{\theta})},
\end{equation*}
i.e., the output of the baseline stream $B(s;\bm{\theta})$ is the mean action-value for the state. As~$Q(s,a; \bm{\theta}) = B(s;\vtheta) + Z(s,a; \bm{\theta})$, each $Z(s,a);\vtheta$ is a residual with respect to the mean action-value.
Thus, Dueling DQN implements a \textit{mean-residual decomposition}.

\subsection{Regularized Dueling Q-learning}

The recently proposed Regularized Dueling Q-learning (RDQ)~\citep{rdq} uses the same Dueling DQN architecture and aggregates values according to Equation~\ref{equation:dueling}, but without the centering of the raw network outputs in Equation~\ref{equation:residual_centering}.
In Dueling DQN, Equation~\ref{equation:residual_centering} plays an important role in providing identifiability to the system in Equation~\ref{equation:dueling} to ensure that the $n+1$ variables, i.e., a single baseline and $n$ residuals, are not unconstrained.

RDQ adopts a different approach to constrain values through explicit regularization of the network outputs.
In particular, if the network outputs $B(s;\bm{\theta})$ and $Z(s,a;\bm{\theta})$ for all $a \in \mathcal{A}$, RDQ adds the following penalty term to the DQN loss:
\begin{align*}
    \beta \left(B(s; \bm{\theta})^2 + \sum_{a}Z(s,a; \bm{\theta})^2\right),
\end{align*}
where $\beta$ is a regularization coefficient.
In effect, this method encourages the baseline-residual vector of the optimal baseline in Equation~\ref{equation:optimal_baseline}.
Unlike IB-DQN or Dueling DQN, RDQ does not use the residuals and baseline to represent any specific predefined baseline.
Dueling DQN and IB-DQN both structurally enforce their baseline terms to be specific scalar multiples of the average action-value.
RDQ simply uses $B(s;\bm{\theta})$ generally as a baseline, and penalizes it to ensure a low-norm representation.

\onecolumn
\section{Mean-expansion Layer PyTorch Code} 
\label{appendix:mel_pseudocode}
The PyTorch~\citep{pytorch} code for the ME layer can be implemented in a dozen lines.

\begin{lstlisting}[language=Python]
import torch

class MeanExpansionLayer(torch.nn.Module):
    def __init__(self, mean_scaling_coefficient):
        super().__init__()
        self.register_buffer('scale', torch.tensor(1 + mean_scaling_coefficient))

    def forward(self, vec):
        mean = vec.mean(dim=-1, keepdim=True)
        residual = vec - mean
        output = self.scale * mean + residual
        return output
\end{lstlisting}

\section{Experimental Setup} \label{appendix:experiments}

In this Appendix, we discuss the training and evaluation details for our agents.

\subsection{Environment and training settings}
We train agents in the Arcade Learning Environment~\citep{ale}, using the environment protocol proposed by~\citet{revisitingale}.
We use sticky actions, where the agent's most recently executed simulator action is repeated in the subsequent frame with probability 0.25.
We expose the agents to the full action set of 18 actions in all games.
The only termination signal the agent receives is when the game is over, corresponding to the end of an episode.

The agents are trained for 50M timesteps.
These 50M timesteps are divided into 200 training phases, each lasting 250k timesteps.
After each training phase is an evaluation phase lasting 125k timesteps.
During evaluation, the agents deploy an $\epsilon$-greedy policy with $\epsilon=0.001$.
During both training and evaluation, episodes that reach the 30-minute time limit, which corresponds to 27,000 timesteps, are truncated.

\subsection{Algorithm training details and hyperparameters}

The training details of DQN largely follow~\citet{dqn}.
We use their architecture, pre-processing schema, and other details unless otherwise mentioned.
One distinction worth noting is that we have our agents deploy an $\epsilon$-greedy policy during training that is annealed from $\epsilon=1$ to $\epsilon=0.01$ over the first 1M timesteps.
Another distinction is that we use the Adam optimizer and the squared error loss as opposed to RMSprop with Huber loss.
Table~\ref{hypers} contains more implementation details and hyperparameters.
For IQN, our training details largely follow~\citet{iqn}. We use their same architecture. Our exploration $\epsilon$-decay schedule matches DQN. We use Adam with step size $5 \times 10^{-5}$ and $\epsilon=3.125 \times 10^{-4}$, and other hyperparameters follow the original paper.

\begin{table*}[]
\centering
\caption{DQN training hyperparameters. We use * to indicate hyperparameters that differ from the original DQN paper.}
\begin{small}
\resizebox{\textwidth}{!}{%
\begin{tabular}{@{}p{0.3\linewidth}|p{0.1\linewidth}|p{0.5\linewidth}@{}}\toprule
  \textbf{Hyperparameter} & \textbf{Value} & \textbf{Description} \\ \midrule
minibatch size & 32 & Sample batch size of gradient updates. \vspace{1mm} \\
replay memory capacity & 1,000,000 & Number of recent transitions stored in the replay buffer. \vspace{0.5mm} \\
agent history length & 4 & Number of previous frames stacked in state representation. \vspace{1mm} \\
target network~{update frequency} & 2,500 & Frequency (in terms of gradient updates) of target network updates. \vspace{1mm} \\
discount rate & 0.99 &  Value of $\gamma$. \vspace{1mm} \\
action repeat & 4 &  In one timestep, actions are repeated for multiple simulator frames. \vspace{1mm} \\
update frequency & 4 &  Frequency (in timesteps) of parameter updates. \vspace{1mm}\\
replay start size & 50K &  Minimum number of transitions in the replay buffer required before parameter updates begin. \vspace{1mm} \\
initial exploration & 1.0 &  Initial value of $\epsilon$ used for $\epsilon$-greedy exploration. \vspace{1mm} \\
*final exploration & 0.01 &  Final value of $\epsilon$ used for $\epsilon$-greedy exploration. \vspace{1mm} \\
final exploration timestep & 1,000,000 &  The timesteps over which $\epsilon$ is linearly annealed to its final $\epsilon$. \vspace{1mm} \\
maximum episode length & 27,000 &  Timesteps after which an episode is truncated and the environment is reset. \vspace{1mm} \\
step size & $6.25 \times 10^{-5}$ &  The step size used by Adam. \vspace{1mm} \\
*Adam $\epsilon$ & $1.5 \times 10^{-4}$ &  The $\epsilon$ used by Adam. \vspace{1mm} \\
*Adam $\beta_1$ & 0.9 &  $\beta_1$ hyperparameter value in Adam. \vspace{1mm} \\
*Adam $\beta_2$ & 0.999 &  $\beta_2$ hyperparameter value in Adam. \\
\bottomrule
\end{tabular}
}
\end{small}
\label{hypers}
\end{table*}

\subsection{Metrics}
\textit{Human-normalized scores} (HNS) provide a mechanism for evaluating an agent's score in a manner that is comparable across games.
The human-normalized score~\citep{dqn} of an agent can be computed as
\begin{equation} \label{hns_eq}
    \text{score}_{\text{hns}} = \frac{\text{score}_{\text{agent}} - \text{score}_{\text{random}}}{\text{score}_{\text{human}} - \text{score}_{\text{random}}}.
\end{equation}
The human and random scores are taken from~\citet{dqnzoo}.

\paragraph{Measuring Overestimation} \label{appendix:overestimation}

In this paper, we follow \citet{ddql}'s protocol for measuring overestimation.
During evaluations, we deploy a near-greedy evaluation policy.
We only consider completed evaluation episodes, which are episodes that terminate or truncate due to reaching the maximum allowed episode length of 27,000 timesteps.
Any episodes that are truncated prematurely due to the evaluation phase ending are discarded.
Each state-action pair in a completed episode has a received empirical return.
On truncations due to time limit, we bootstrap the value of the final state to compute the empirical return.
Overestimation for a state-action pair is computed as the difference between the action-value prediction and the received empirical return.
We average these over all state-action pairs in all evaluation episodes across all evaluation phases in a training run across all seeds to produce a single scalar measurement of overestimation.
We then report the difference in overestimation between the two algorithms.

DQN agents (on Atari) employ reward clipping, where rewards are clipped to the range $[-1,1]$.
Since agents are trained to predict return estimates under this reward, we ensure that this clipping is also applied to compute returns.
Combined with discounting, clipping often causes returns to be much smaller than the raw game scores, which are undiscounted and unclipped.

\paragraph{Measuring the Action Gap}

To measure the action gap, we sample a minibatch from the buffer after each target network update and measure the action gap, i.e., the difference between the highest and second-highest action-value averaged across all (pre-transition) states in the minibatch.
These quantities are averaged across all measurements in a training phase.
We also maintain a running average of the last 1000 average minibatch action-values for each training phase.
The relative action gap for a training phase is computed as the average action gap divided by the absolute value of the average action-value with a stability term of 1e-8 added to the denominator.
These relative action gaps are averaged across all training phases and seeds.
Figure~\ref{fig:aux_metrics} (\textit{right}) reports the difference in the relative action gap between the two algorithms, clipped at 0.01 for presentation.

\subsection{Baselines}
For our Dueling DQN implementation, we used the exact same settings as DQN, but replace the network with the dueling network architecture.
The final convolution layer outputs $64$ filters of size $7 \times 7$ feature maps which are flattened into 3,136 outputs. 
Following these flattened outputs are two independent streams of two-layer fully connected networks.
Each stream has a hidden of layer of 512 units and ReLU activations~\citep{relu_activation}.
The first stream outputs a single scalar $B(s; \bm{\theta})$, and the second stream outputs an $n$-dimensional vector $\mathbf{x}(s;\bm{\theta})$, which is centered to output the residuals $\mathbf{z}(s; \bm{\theta}) = \mathbf{x}(s;\bm{\theta}) - \frac{1}{n}J\mathbf{x}(s;\bm{\theta})$.

RDQ shares the same dueling network architecture, but outputs the residuals $\mathbf{z}(s;\bm{\theta})$ directly, as opposed to first centering the raw outputs.
It also includes a regularization coefficient $\beta$ on the penalty $B(s;\bm{\theta})^2 + \sum_{a} Z(s,a;\bm{\theta})^2$.
We tested 5 different values of $\beta$ on Atari-5-Val~\citep{atari_5}, a subset of environments used for validation.
As long as $\beta$ was not too large, we found RDQ to be relatively robust.
We did not observe an improvement for changing $\beta$ from $0.001$, as was set by \citet{rdq}, and thus kept this value.

\section{Mean-Expansion Layer with RMSprop} 
\label{appendix:rmsprop}

\begin{wrapfigure}{r}{0.33\linewidth}
    \centering
    \includegraphics[width=\linewidth]{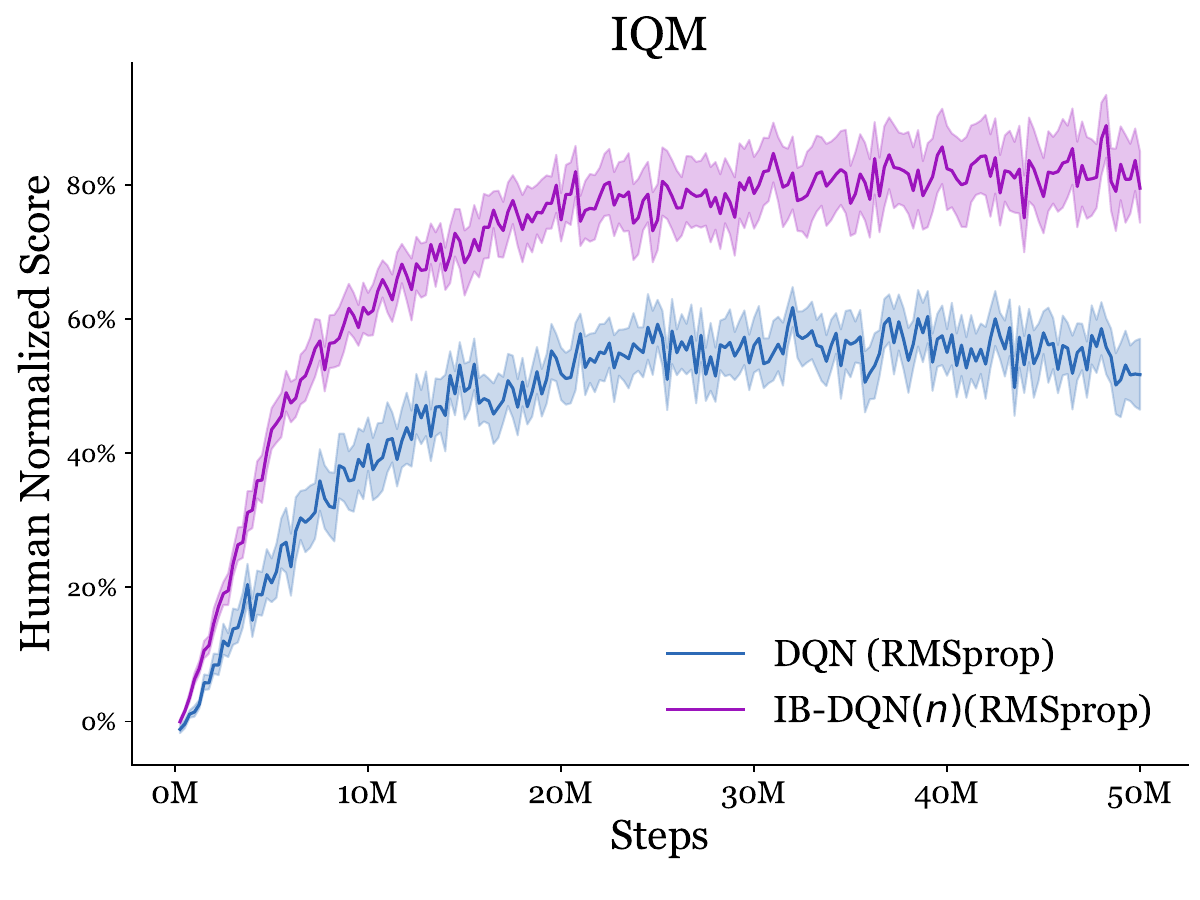}
    \caption{\textbf{The ME layer with RMSprop and the Huber loss.} The plot shows the interquartile mean of DQN with and without the ME layer across 55 games. All algorithms were run for three seeds per game.
    The shaded region depicts the 95\% stratified bootstrap confidence interval~\citep{statistical_precipice}.}
    \label{fig:hns_rmsprop}
\end{wrapfigure}
To examine how the ME layer impacts performance in deep RL outside of the Adam optimizer, we also tested with the RMSprop~\citep{rmsprop} optimizer and the Huber loss, with the optimizer hyperparameters matched that of the original DQN~\citep{dqn}.
In particular, we compare, across 55 environments and three seeds per environment, the interquartile mean in performance between DQN and IB-DQN($n$).
Consistent with our prior results, we find that IB-DQN($n$) substantially outperforms DQN in this setting, providing further evidence for the general efficacy of the ME layer.

\section{Per-Game Results} \label{appendix:results}

In this Appendix, we include the per-game results corresponding to the Atari 2600 experiments in Section~\ref{sec:experiments}.
We include the full learning curves for each game in Figure~\ref{Atari57:score:page_2}.
We also include curves that show the overestimation of DQN and IB-DQN throughout training in Figure~\ref{Atari57:Overestimation:page_2}.
Lastly, Table~\ref{table_results} reports the mean score across the final three evaluation phases for each algorithm and game.

\begin{table*}[!t] \centering
    \resizebox{0.85\columnwidth}{!}{
    \begin{tabular}{@{}l|c|c|c|c|c||c|c@{}}\toprule \textbf{Environment} & DQN & Dueling DQN & IB-DQN($1$) & IB-DQN($n$) & RDQ($\beta=0.001$) & IQN & IB-IQN($n$) \\ 
 \midrule 
\textsc{Alien} & 3446.7& 3686.3& 3602.8& 3751.2& \textbf{5075.3} & 3886.3& \textbf{4298.1} \\ 
\textsc{Amidar} & 704.1& 721.5& 1076.5& 1104.7& \textbf{1262.4} & 952.9& \textbf{980.4} \\ 
\textsc{Assault} & 1864.5& 1481.4& 1815.6& \textbf{2039.6} & 1698.4& 3479.8& \textbf{5159.2} \\ 
\textsc{Asterix} & 11725.6& 13975.8& 13689.3& 17452.8& \textbf{24383.4} & 30104.1& \textbf{39998.3} \\ 
\textsc{Asteroids} & 1071.2& 1167.6& 1012.9& 1073.4& \textbf{2856.2} & 1144.7& \textbf{1178.5} \\ 
\textsc{Atlantis} & 821470.0& 841251.7& 886831.7& 856981.7& \textbf{922795.0} & \textbf{941850.0} & 865951.7\\ 
\textsc{BankHeist} & 966.4& 1168.1& 1028.1& \textbf{1270.8} & 1245.3& 1203.8& \textbf{1463.0} \\ 
\textsc{BattleZone} & 28035.5& 28205.0& 27649.0& \textbf{33482.2} & 32937.7& 27673.7& \textbf{37347.3} \\ 
\textsc{BeamRider} & 5028.0& 3838.2& 4315.6& 4982.4& \textbf{5669.1} & 7803.1& \textbf{10219.5} \\ 
\textsc{Berzerk} & 628.5& 493.9& \textbf{691.2} & 576.8& 597.2& \textbf{718.2} & 593.9\\ 
\textsc{Bowling} & 28.6& 25.4& 30.5& \textbf{33.5} & 30.4& \textbf{39.4} & 35.8\\ 
\textsc{Boxing} & 94.5& \textbf{96.2} & 95.1& 95.2& 94.5& 98.0& \textbf{98.6} \\ 
\textsc{Breakout} & 91.3& \textbf{172.7} & 61.0& 133.8& 133.3& \textbf{309.3} & 288.0\\ 
\textsc{Centipede} & 4714.9& \textbf{5696.2} & 3308.1& 5013.4& 4606.5& 5862.1& \textbf{6395.2} \\ 
\textsc{ChopperCommand} & 5987.9& 5173.5& 6508.0& \textbf{8174.8} & 6142.7& 7857.3& \textbf{10551.8} \\ 
\textsc{CrazyClimber} & 106037.2& 121540.2& 110589.1& \textbf{124345.3} & 121425.0& 119368.5& \textbf{120757.1} \\ 
\textsc{Defender} & 9260.5& 9195.2& 7623.7& \textbf{22271.0} & 13766.2& 19813.2& \textbf{26517.9} \\ 
\textsc{DemonAttack} & 7450.4& 6052.2& 6607.9& \textbf{16491.9} & 6205.8& 47948.2& \textbf{55029.9} \\ 
\textsc{DoubleDunk} & -2.3& -5.3& \textbf{9.6} & -7.0& 3.1& -1.8& \textbf{2.0} \\ 
\textsc{Enduro} & 1773.8& 1695.6& 1839.1& 1748.2& \textbf{1897.2} & \textbf{2204.7} & 2127.2\\ 
\textsc{FishingDerby} & 14.6& 30.8& 15.5& 39.0& \textbf{44.0} & 42.5& \textbf{44.9} \\ 
\textsc{Freeway} & 33.8& 33.9& 33.8& \textbf{33.9} & 33.8& 33.9& \textbf{33.9} \\ 
\textsc{Frostbite} & 6151.2& 5714.5& 6710.5& 6697.5& \textbf{6775.6} & 6913.1& \textbf{8556.4} \\ 
\textsc{Gopher} & 15184.2& 13862.0& 13380.0& 13337.7& \textbf{17110.1} & \textbf{18495.5} & 14897.3\\ 
\textsc{Gravitar} & 1058.7& \textbf{1334.3} & 1261.3& 846.3& 1196.5& 876.2& \textbf{1125.9} \\ 
\textsc{Hero} & 26142.7& \textbf{27788.5} & 24745.0& 20885.0& 19005.1& 22822.0& \textbf{26173.6} \\ 
\textsc{IceHockey} & -7.1& -5.1& -4.5& -2.5& \textbf{4.9} & \textbf{9.8} & 7.9\\ 
\textsc{Jamesbond} & \textbf{848.4} & 770.2& 844.1& 648.9& 624.2& \textbf{679.4} & 539.6\\ 
\textsc{Kangaroo} & 7690.1& 1930.7& 8704.5& 11540.8& \textbf{12261.9} & \textbf{12918.2} & 11563.1\\ 
\textsc{Krull} & 8112.6& 8666.3& 8560.9& 8283.4& \textbf{8758.2} & 8887.2& \textbf{9273.0} \\ 
\textsc{KungFuMaster} & 23924.3& \textbf{26705.8} & 24500.2& 24845.0& 25119.4& 25362.0& \textbf{28962.8} \\ 
\textsc{MontezumaRevenge} & \textbf{0.0} & 0.0& 0.0& 0.0& 0.0& \textbf{0.0} & 0.0\\ 
\textsc{MsPacman} & 3236.6& 3882.0& 3301.2& 4222.4& \textbf{4318.7} & 4261.9& \textbf{4857.0} \\ 
\textsc{NameThisGame} & 6810.0& 5134.5& 6421.4& 6203.6& \textbf{8021.4} & 14222.0& \textbf{18136.8} \\ 
\textsc{Phoenix} & 5128.0& 4656.4& 7005.4& 4930.3& \textbf{8814.6} & 6782.4& \textbf{8448.9} \\ 
\textsc{Pitfall} & -62.2& -59.7& -157.4& \textbf{-8.9} & -16.1& \textbf{-12.3} & -23.5\\ 
\textsc{Pong} & 19.8& 19.8& \textbf{19.8} & 18.9& 19.6& 18.7& \textbf{19.6} \\ 
\textsc{PrivateEye} & \textbf{68.5} & -105.4& -241.1& -47.9& -103.0& \textbf{125.2} & 100.0\\ 
\textsc{Qbert} & 14445.4& 13913.9& 14310.4& \textbf{14499.8} & 13709.0& 15153.6& \textbf{16185.5} \\ 
\textsc{Riverraid} & 11169.2& 13658.7& 12110.3& 14068.4& \textbf{14854.6} & 13305.8& \textbf{13487.1} \\ 
\textsc{RoadRunner} & 53174.3& \textbf{59137.8} & 53886.1& 54619.0& 54881.5& 58900.4& \textbf{60329.1} \\ 
\textsc{Robotank} & 62.0& 62.7& 60.8& \textbf{67.4} & 67.0& \textbf{70.5} & 68.4\\ 
\textsc{Seaquest} & 4620.3& 9953.0& 7760.2& 7155.8& \textbf{18852.1} & 8418.8& \textbf{15313.7} \\ 
\textsc{Skiing} & -23539.1& \textbf{-10948.9} & -24026.5& -11007.6& -30000.0& -22549.1& \textbf{-10119.8} \\ 
\textsc{Solaris} & 770.5& 4.6& 913.6& \textbf{1189.4} & 1159.3& \textbf{1490.9} & 1273.4\\ 
\textsc{SpaceInvaders} & 2922.8& 2843.5& 3091.7& 3906.0& \textbf{7166.5} & \textbf{11839.3} & 10930.3\\ 
\textsc{StarGunner} & 28529.9& 46965.0& 56808.9& \textbf{62851.0} & 51615.8& 82905.3& \textbf{84907.1} \\ 
\textsc{Surround} & -0.5& -2.6& -10.0& 4.2& \textbf{5.3} & 6.7& \textbf{7.8} \\ 
\textsc{Tennis} & 22.3& 22.1& 6.7& 22.9& \textbf{23.3} & 17.1& \textbf{22.5} \\ 
\textsc{TimePilot} & 8569.5& 10599.5& 8405.2& 12774.8& \textbf{13706.2} & 9316.3& \textbf{16053.7} \\ 
\textsc{Tutankham} & 102.1& 171.5& 36.6& 195.5& \textbf{214.1} & \textbf{225.4} & 221.7\\ 
\textsc{UpNDown} & 10570.1& \textbf{46005.7} & 11025.4& 20121.6& 29311.0& \textbf{27168.5} & 25773.2\\ 
\textsc{Venture} & 1208.6& 0.0& \textbf{1295.9} & 238.7& 208.4& 476.6& \textbf{581.6} \\ 
\textsc{VideoPinball} & 239194.8& 219113.3& 283539.1& \textbf{315799.9} & 270117.5& 315433.3& \textbf{498255.0} \\ 
\textsc{WizardOfWor} & 4031.7& 4185.5& 6285.3& 7660.8& \textbf{7826.4} & \textbf{11630.1} & 8146.0\\ 
\textsc{YarsRevenge} & 53891.5& 61301.1& 58932.9& 56972.5& \textbf{66570.2} & 51722.7& \textbf{83357.6} \\ 
\textsc{Zaxxon} & 10034.8& 6837.9& 10370.4& 10653.4& \textbf{12278.4} & 12468.9& \textbf{14341.8} \\ 

    \bottomrule
    \end{tabular}
    }
    \captionsetup{width=0.85\linewidth}
    \caption{The mean evaluation score across the last 3 evaluations during training for each algorithm across five seeds. The highest scores for an environment are bolded, with separate comparisons for the IQN variants.}
    \label{table_results}
    \end{table*}
\label{appendix:results:learning_curves}
\begin{figure}[p]
        \centering
    	\includegraphics[width=0.21\linewidth]{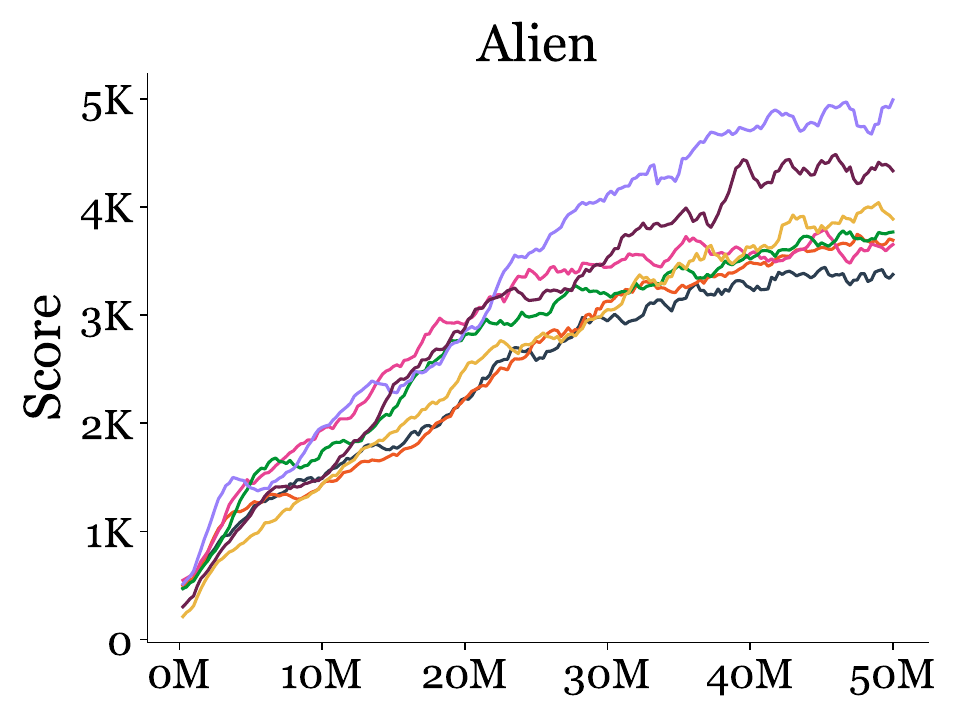} 
	\includegraphics[width=0.21\linewidth]{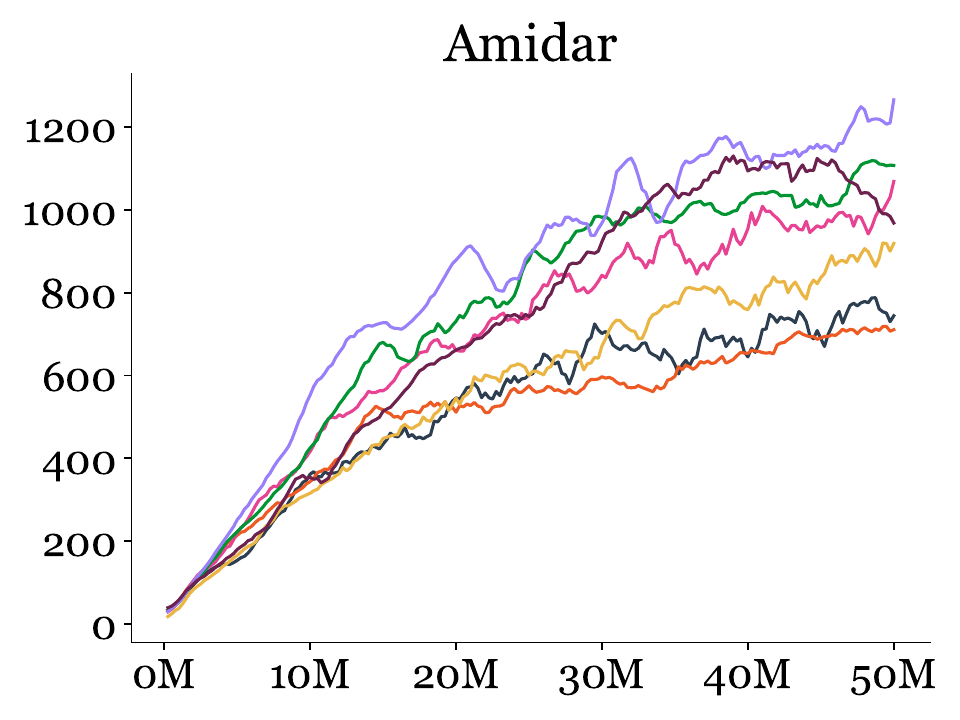} 
	\includegraphics[width=0.21\linewidth]{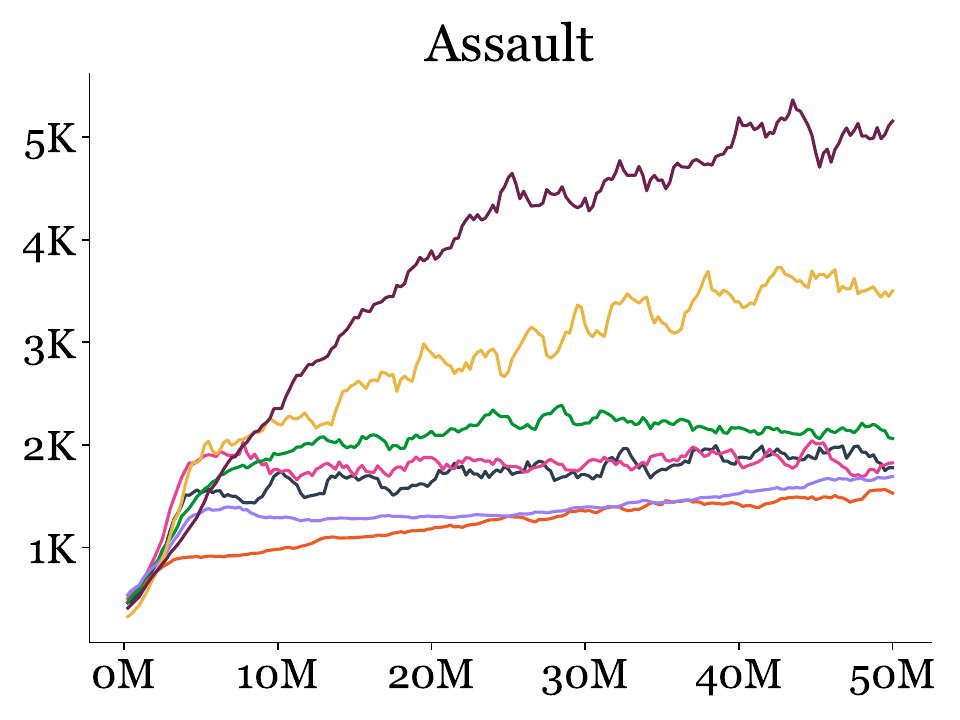} 
	\includegraphics[width=0.21\linewidth]{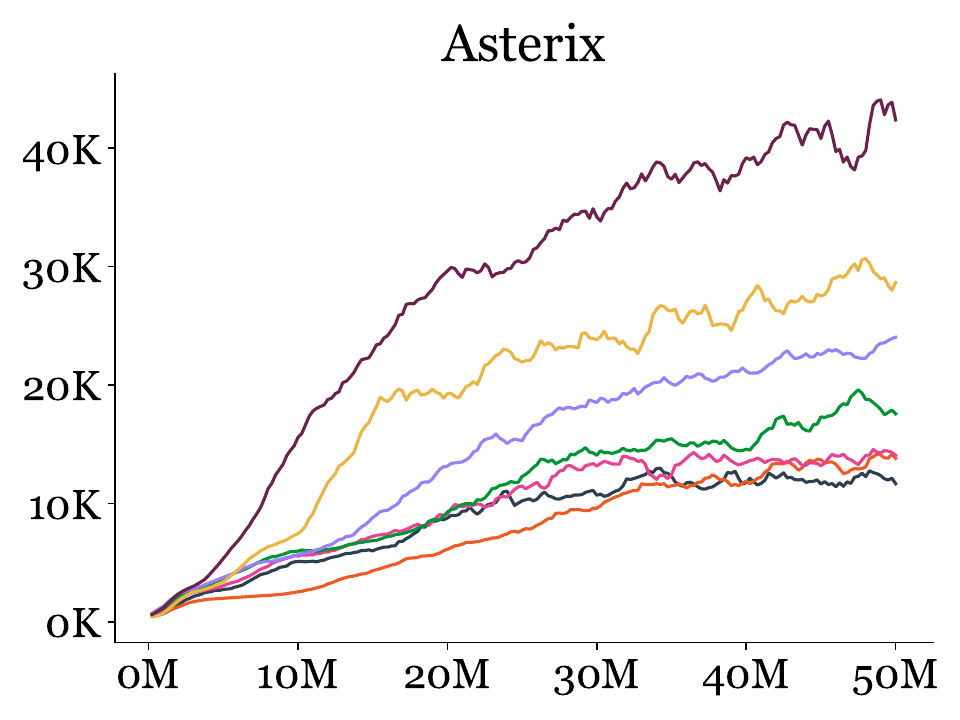} 
	\includegraphics[width=0.21\linewidth]{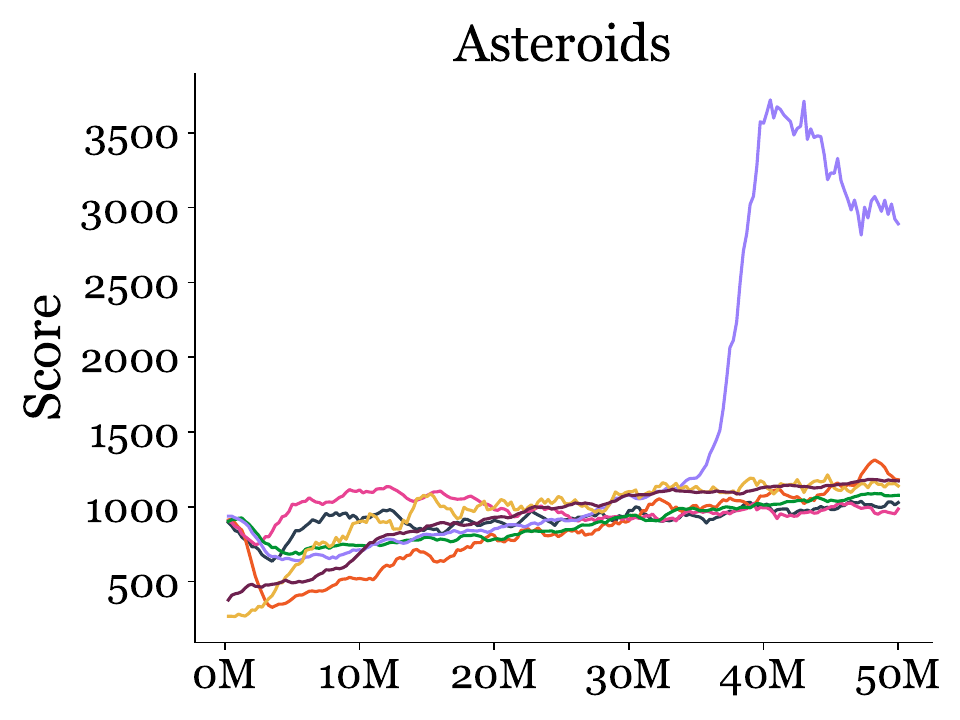} 
	\includegraphics[width=0.21\linewidth]{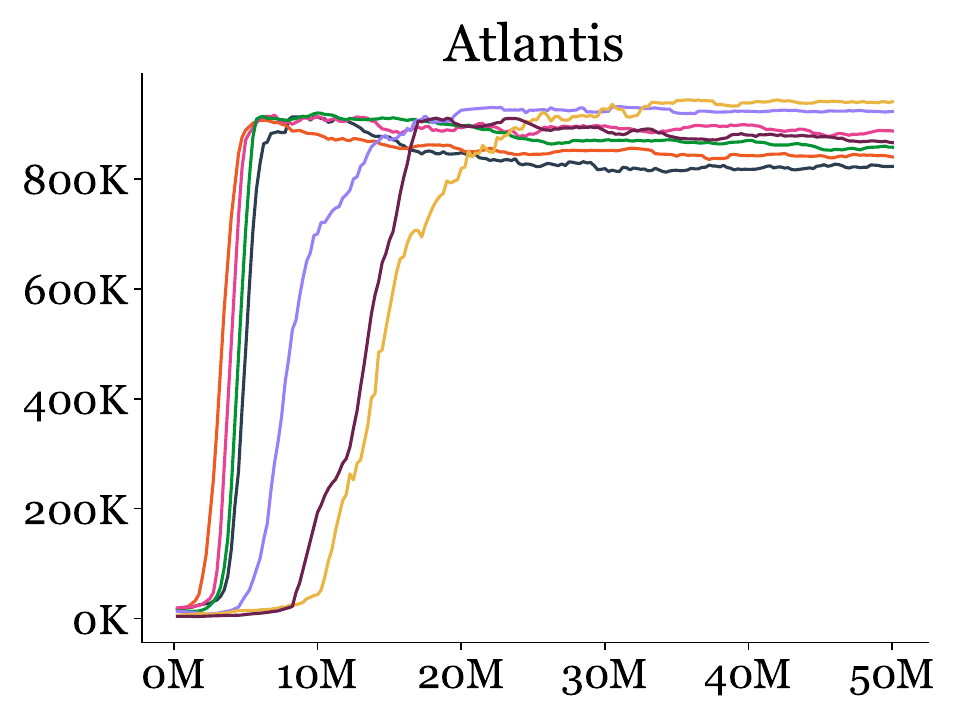} 
	\includegraphics[width=0.21\linewidth]{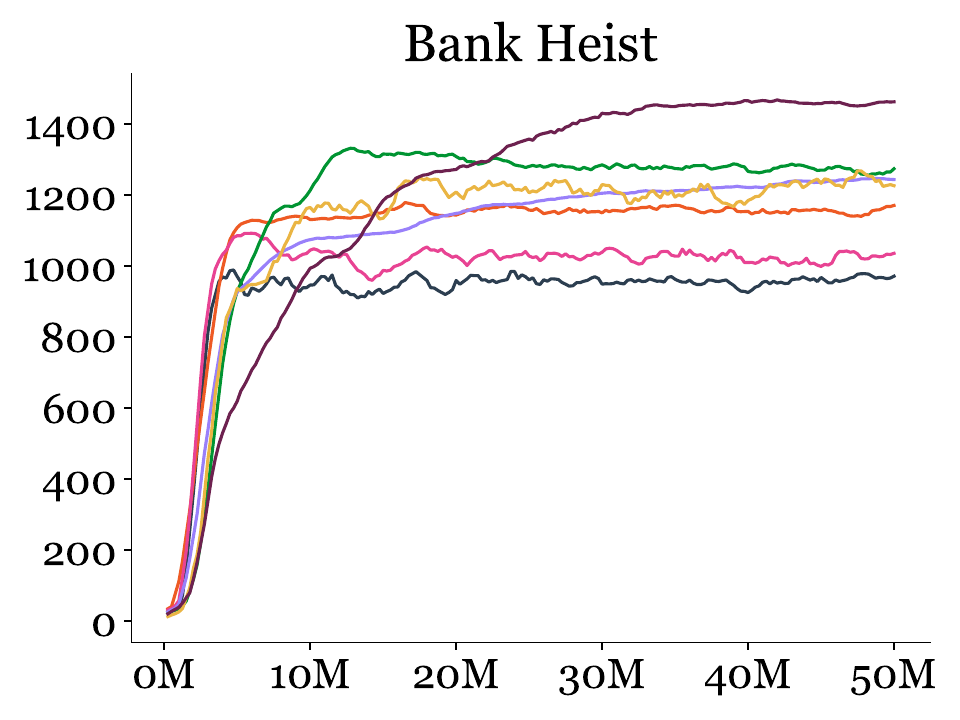} 
	\includegraphics[width=0.21\linewidth]{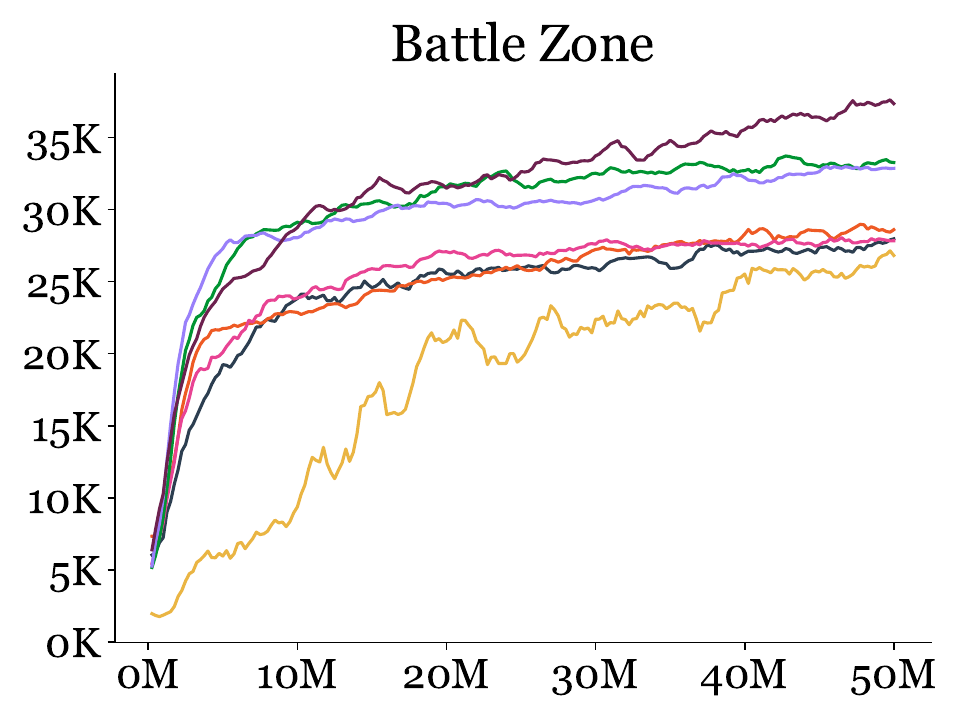} 
	\includegraphics[width=0.21\linewidth]{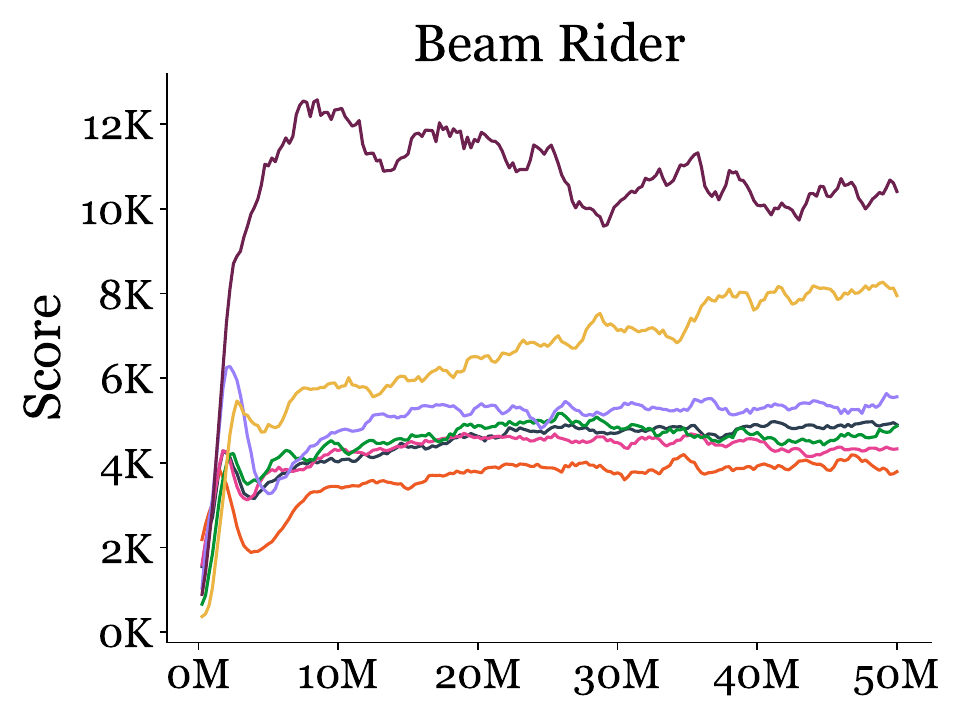} 
	\includegraphics[width=0.21\linewidth]{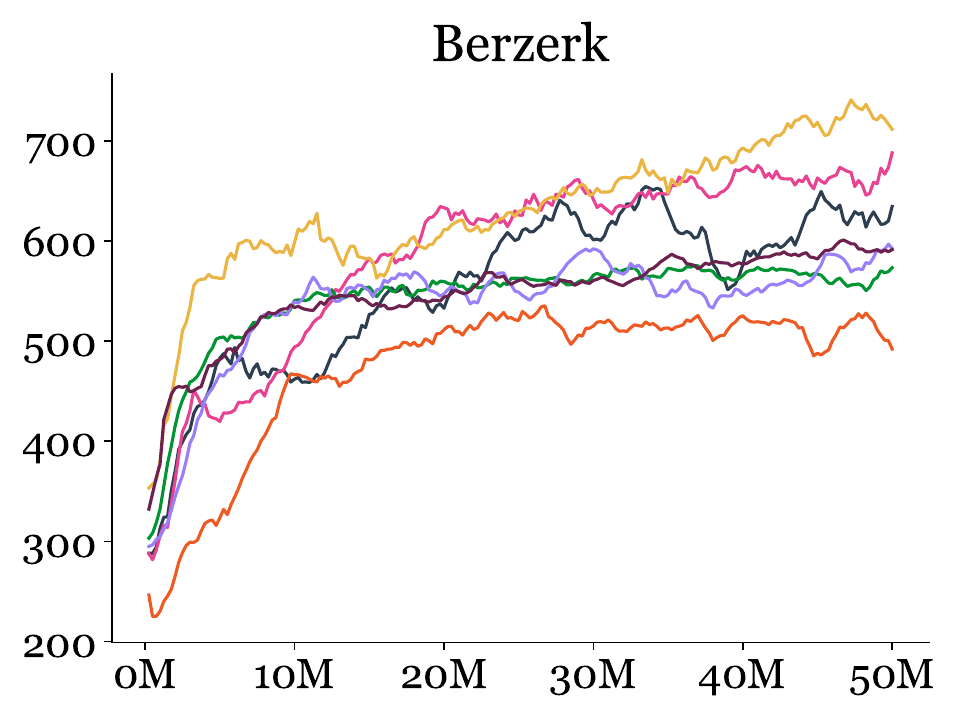} 
	\includegraphics[width=0.21\linewidth]{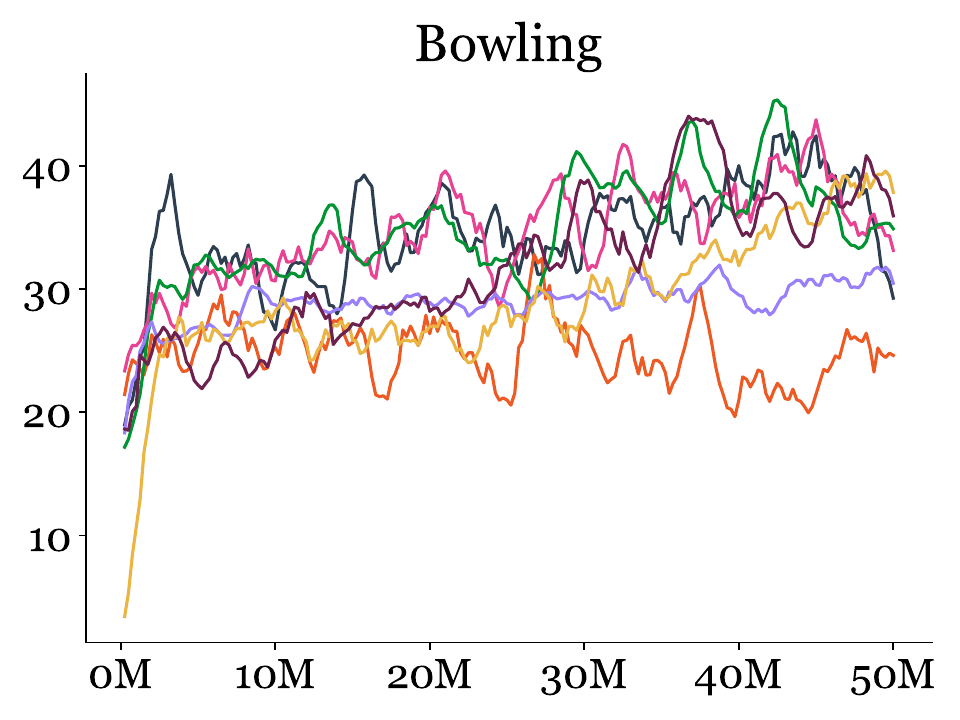} 
	\includegraphics[width=0.21\linewidth]{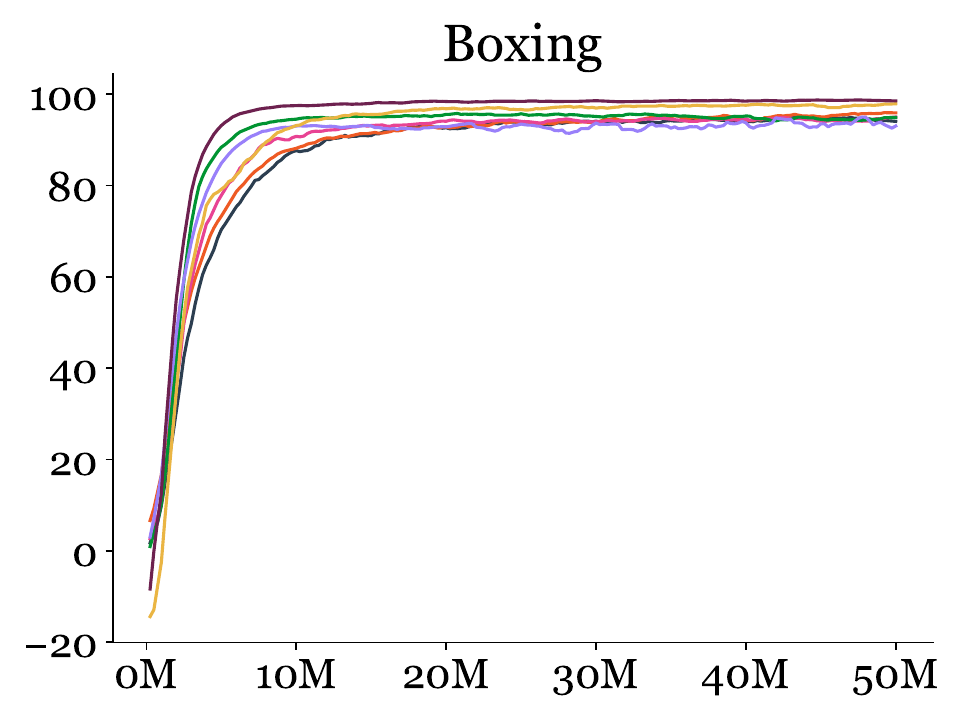} 
	\includegraphics[width=0.21\linewidth]{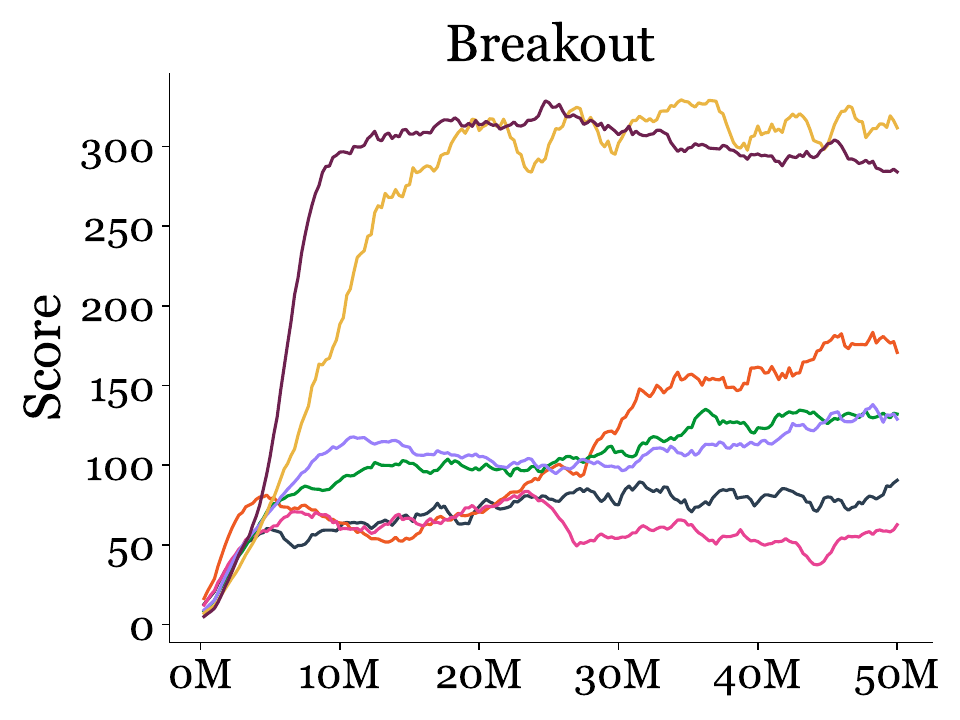} 
	\includegraphics[width=0.21\linewidth]{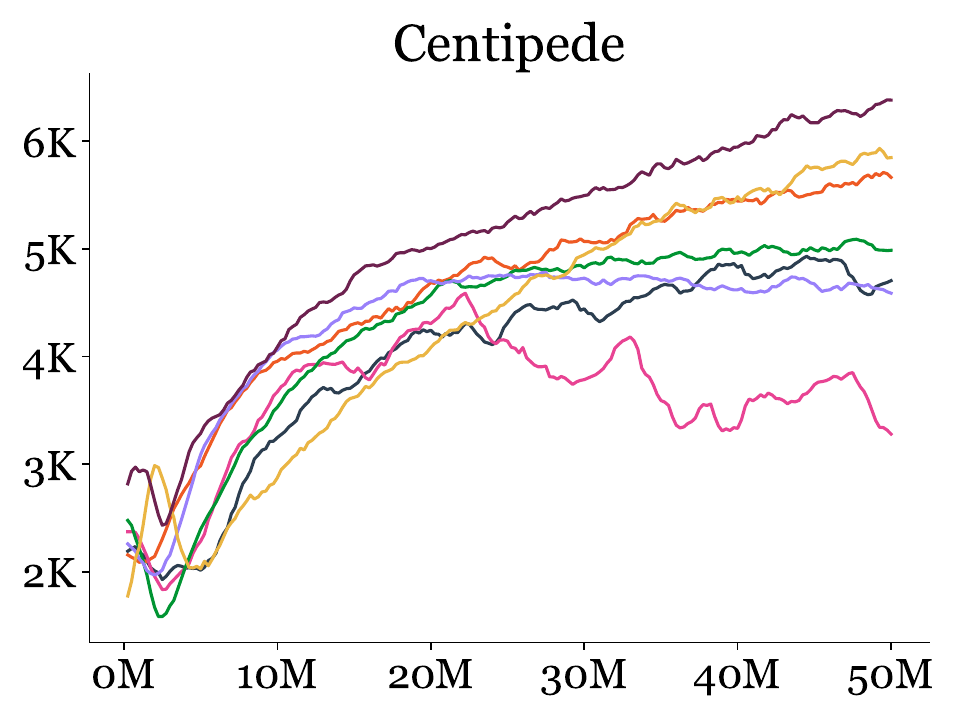} 
	\includegraphics[width=0.21\linewidth]{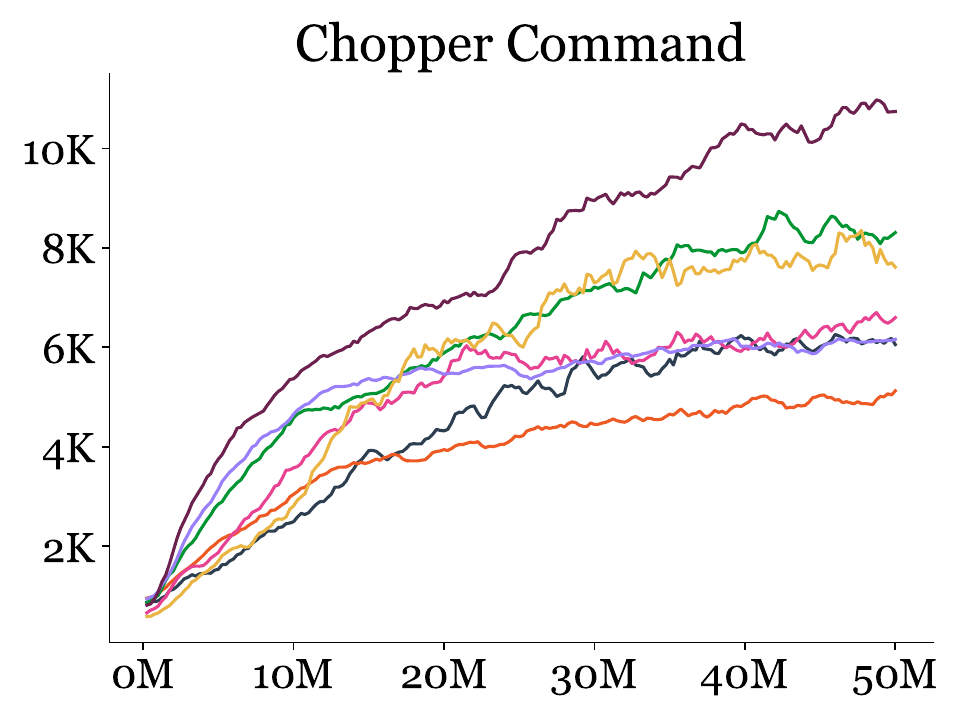} 
	\includegraphics[width=0.21\linewidth]{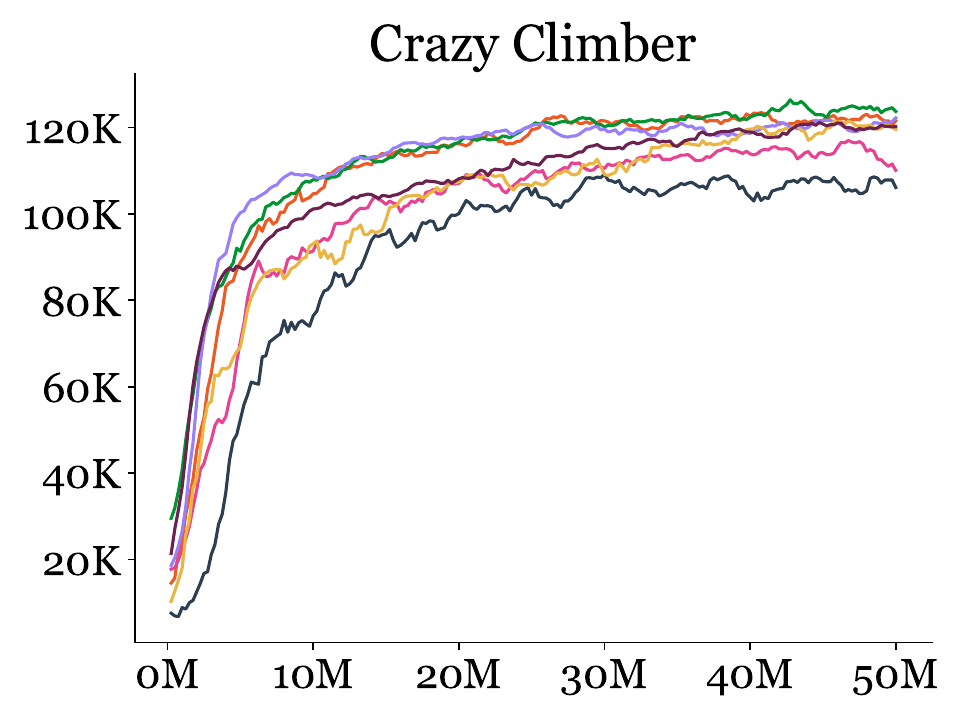} 
	\includegraphics[width=0.21\linewidth]{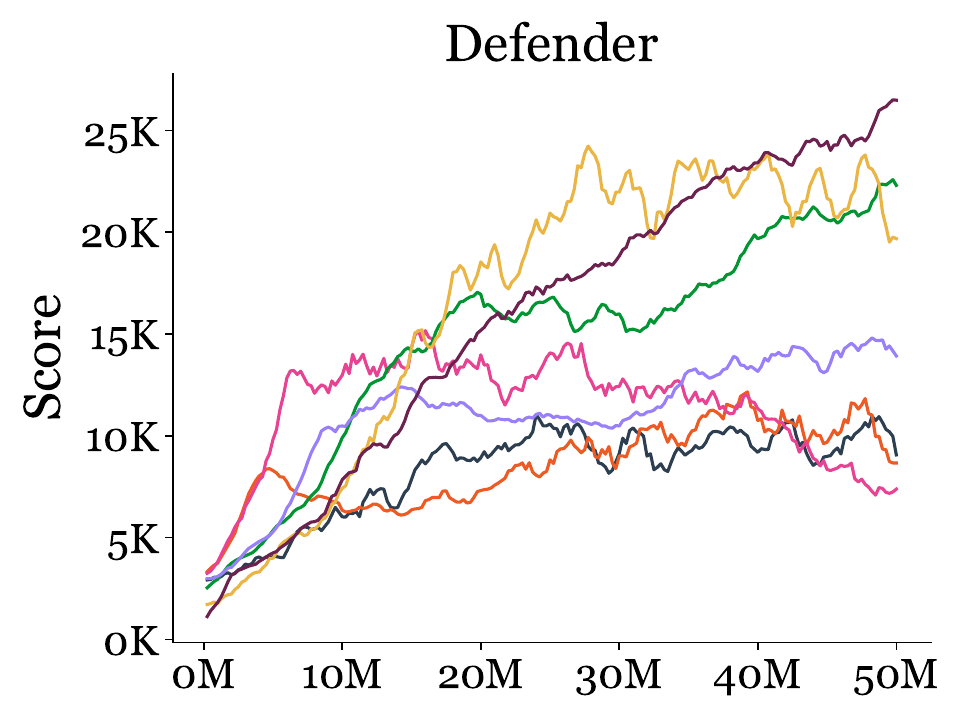} 
	\includegraphics[width=0.21\linewidth]{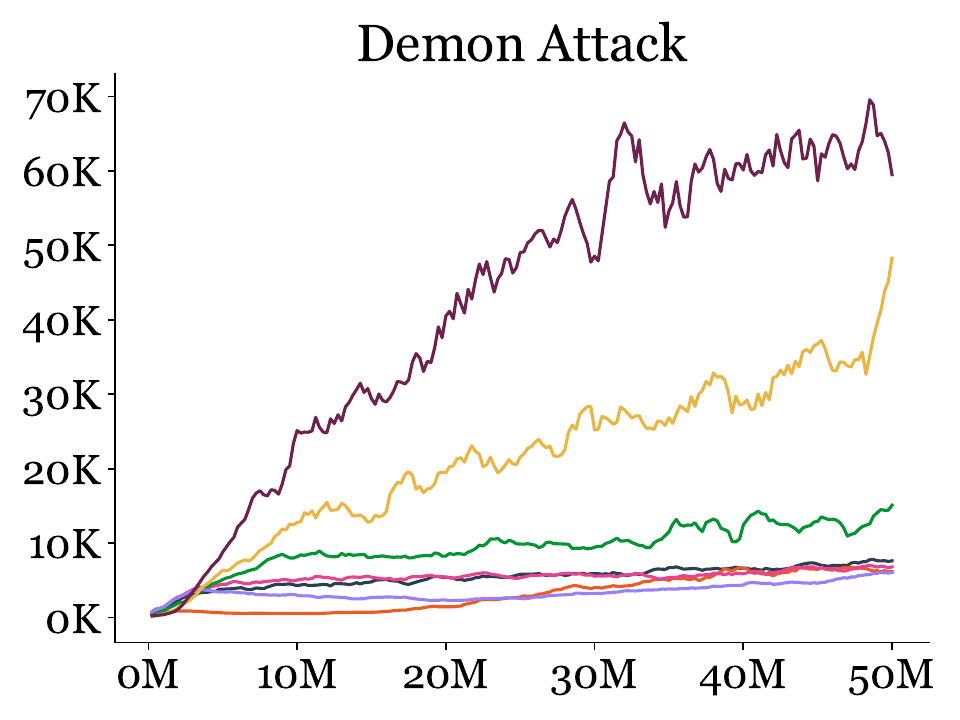} 
	\includegraphics[width=0.21\linewidth]{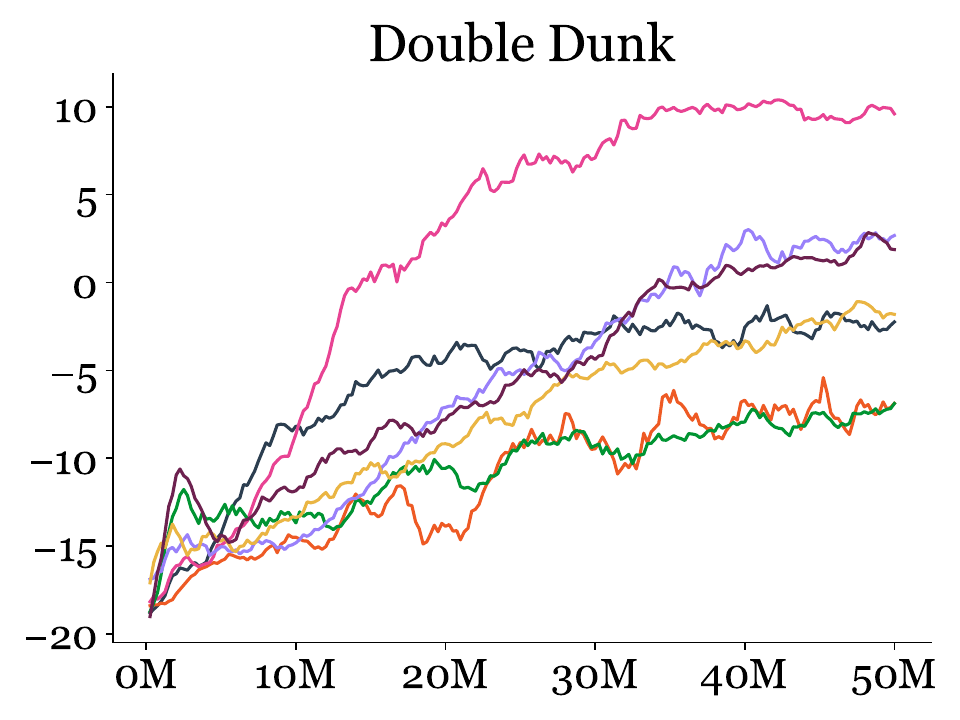} 
	\includegraphics[width=0.21\linewidth]{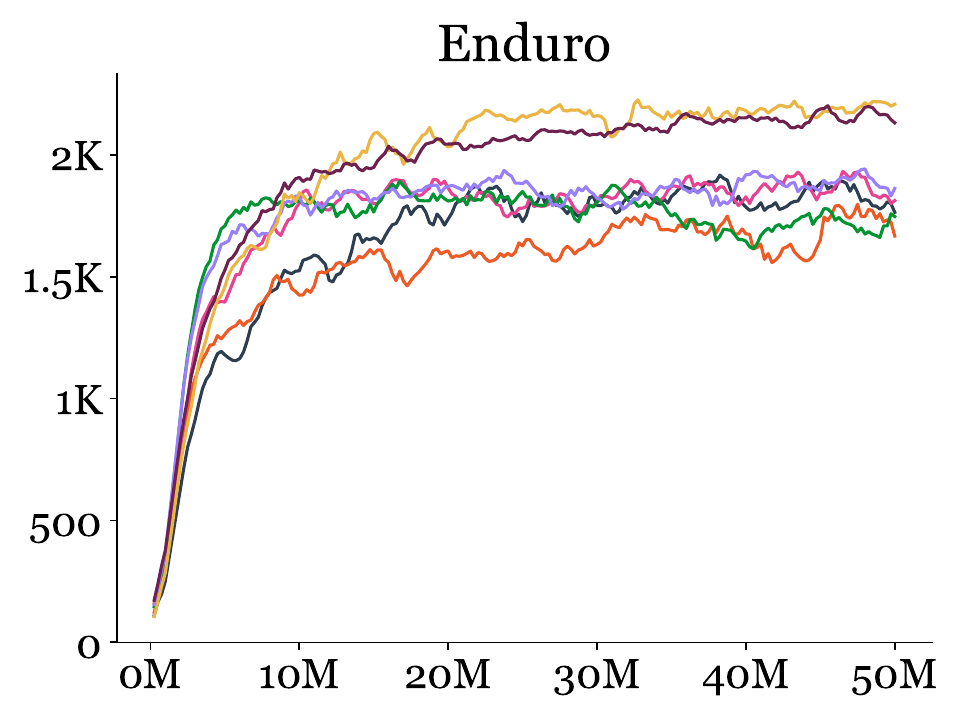} 
	\includegraphics[width=0.21\linewidth]{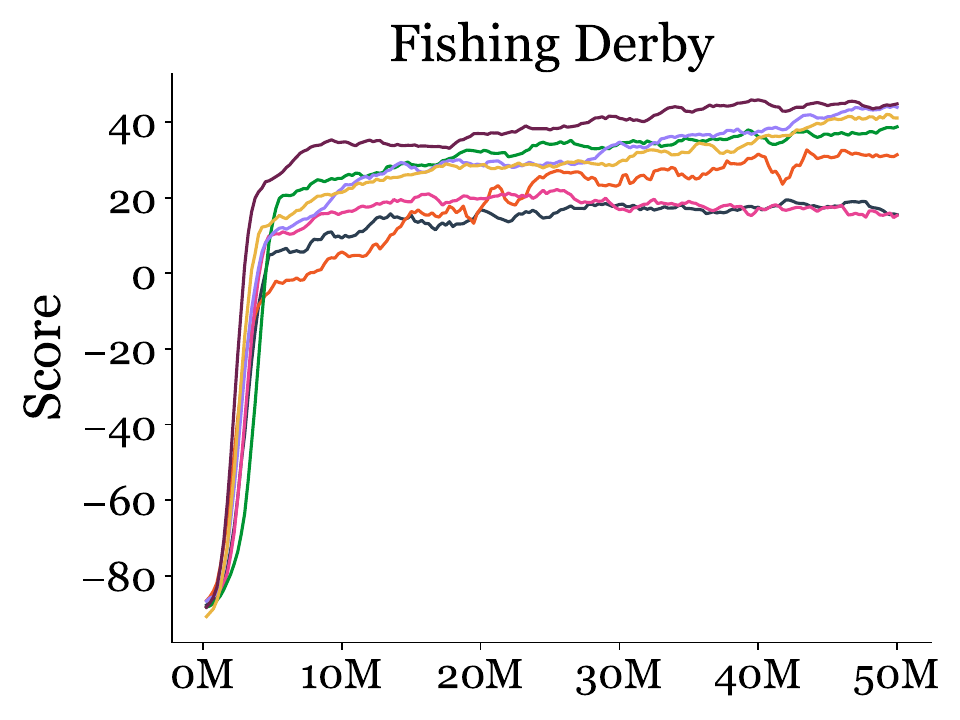} 
	\includegraphics[width=0.21\linewidth]{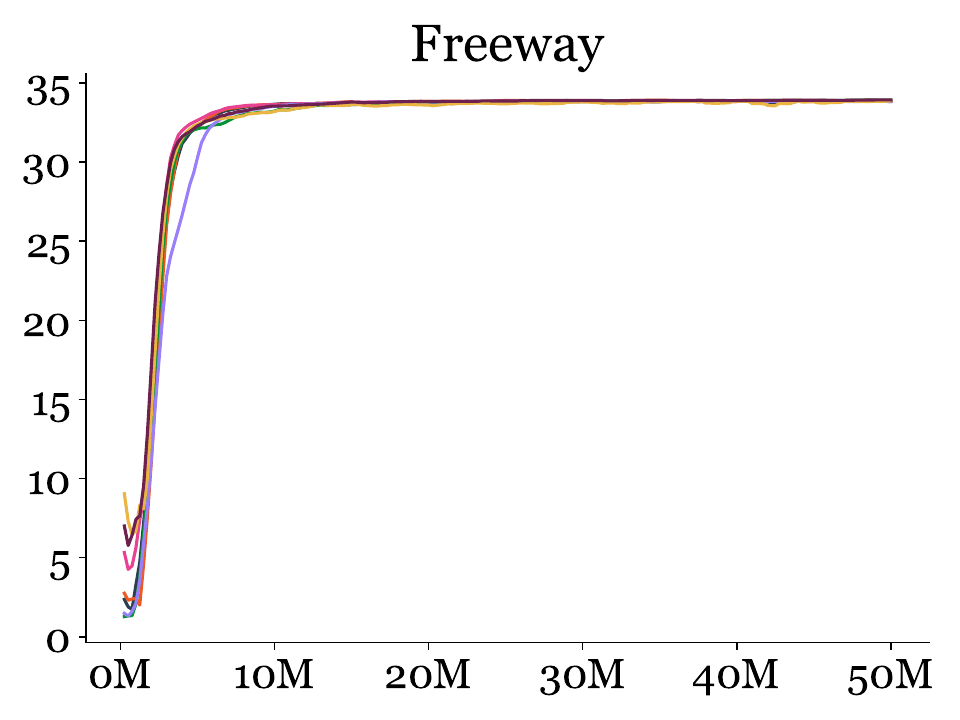} 
	\includegraphics[width=0.21\linewidth]{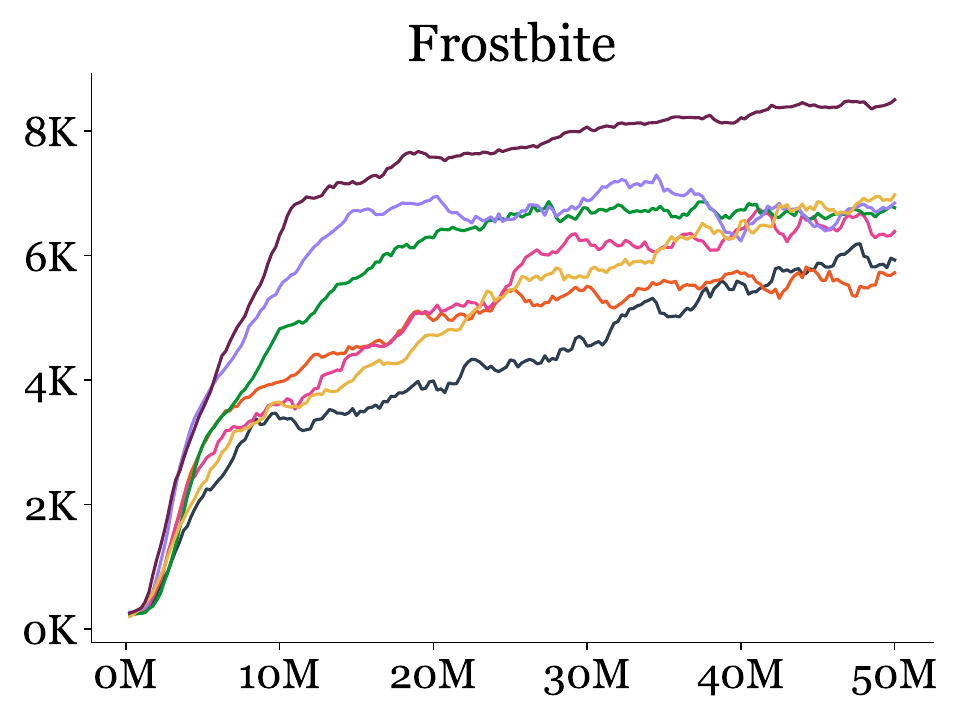} 
	\includegraphics[width=0.21\linewidth]{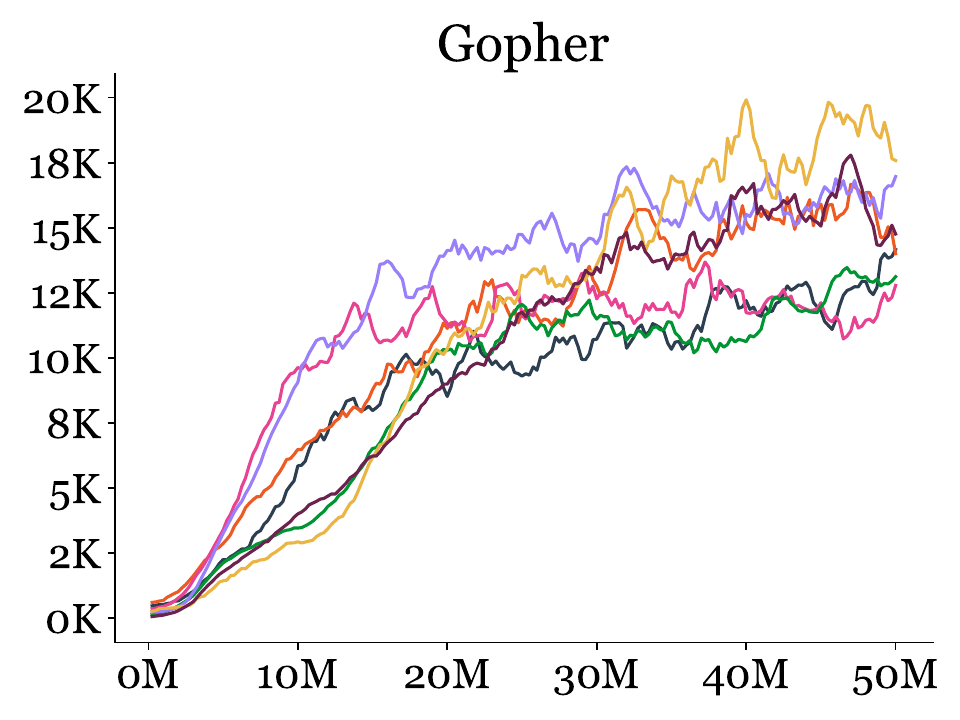} 
	\includegraphics[width=0.21\linewidth]{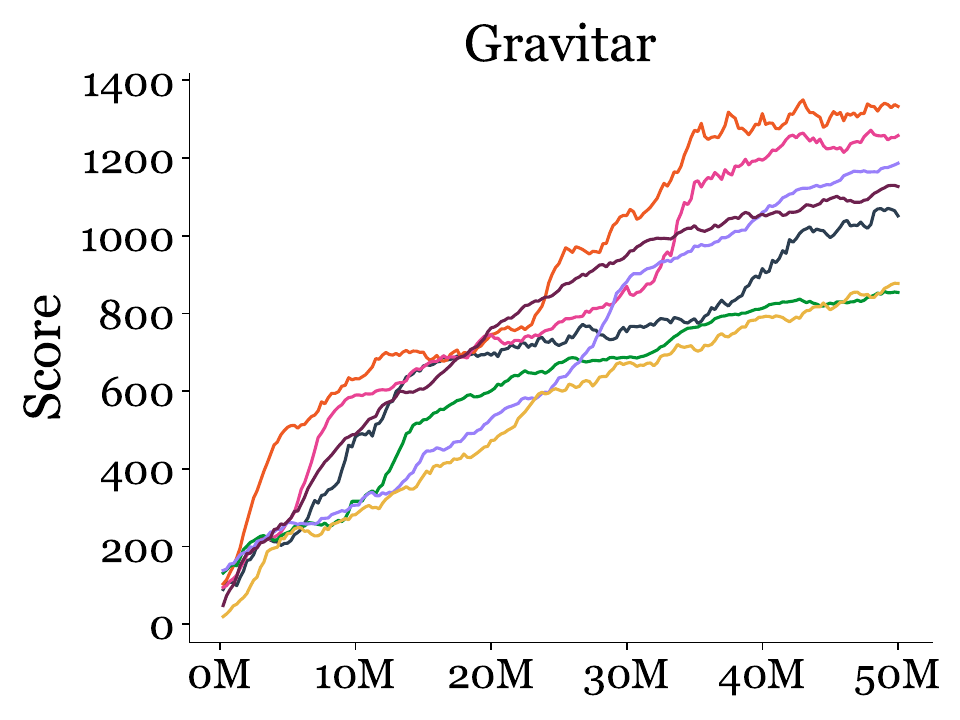} 
	\includegraphics[width=0.21\linewidth]{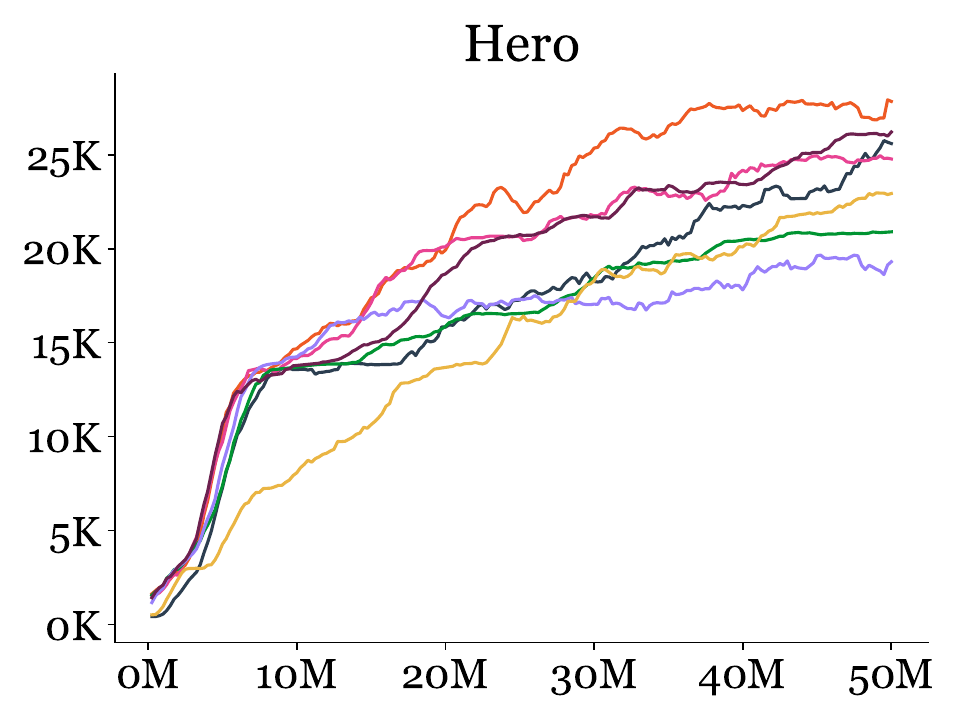} 
	\includegraphics[width=0.21\linewidth]{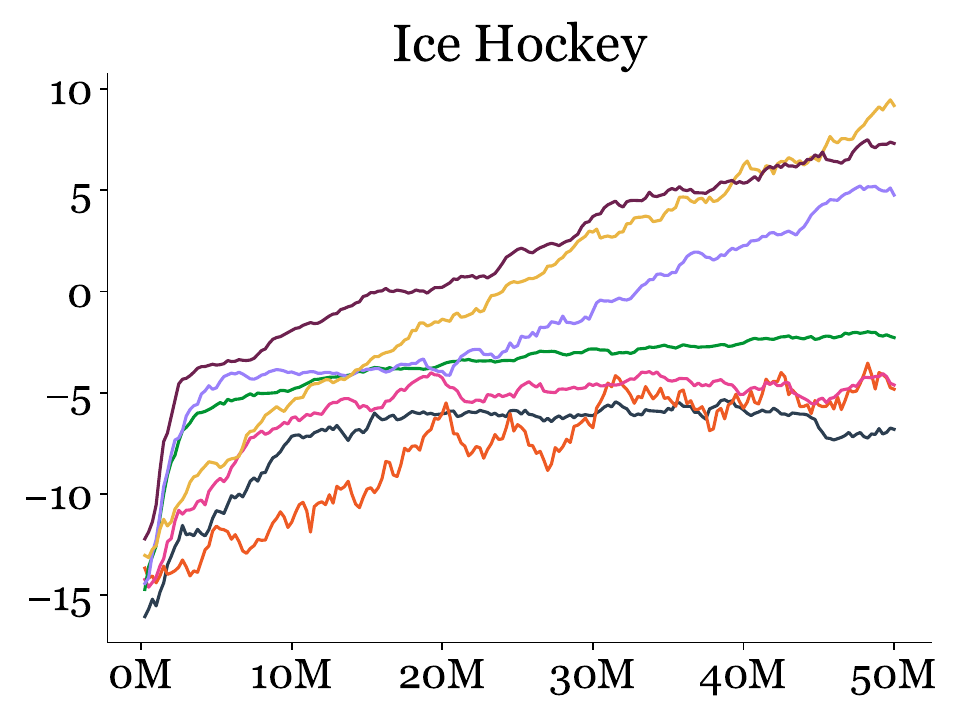} 
	\includegraphics[width=0.21\linewidth]{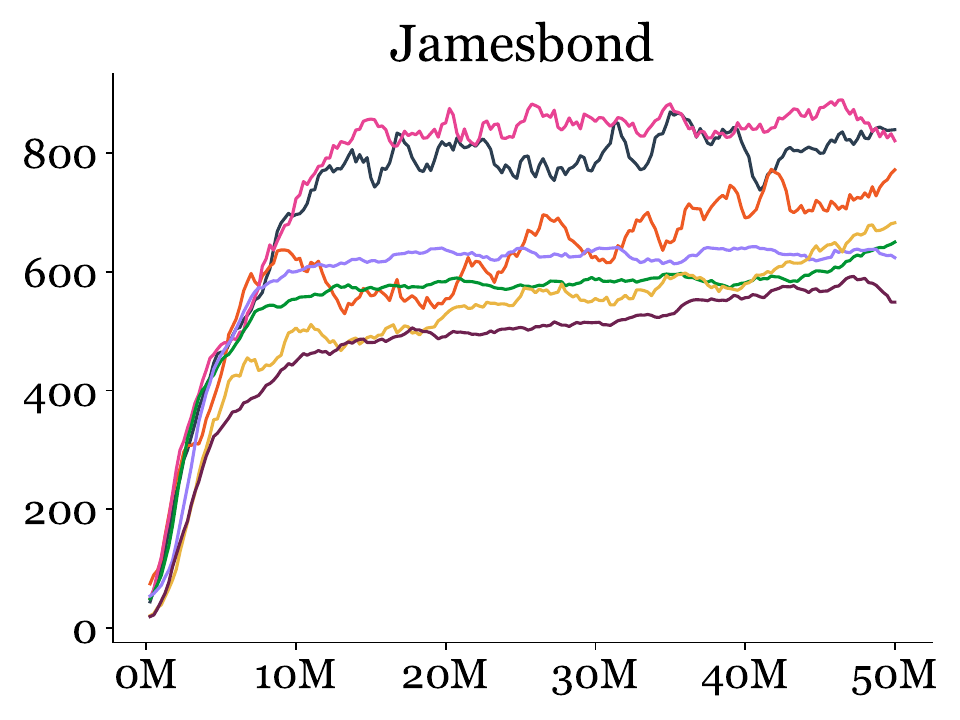} 
	\label{Atari57:score:page_1}
\end{figure}\begin{figure}[p]
        \centering
    	\includegraphics[width=0.21\linewidth]{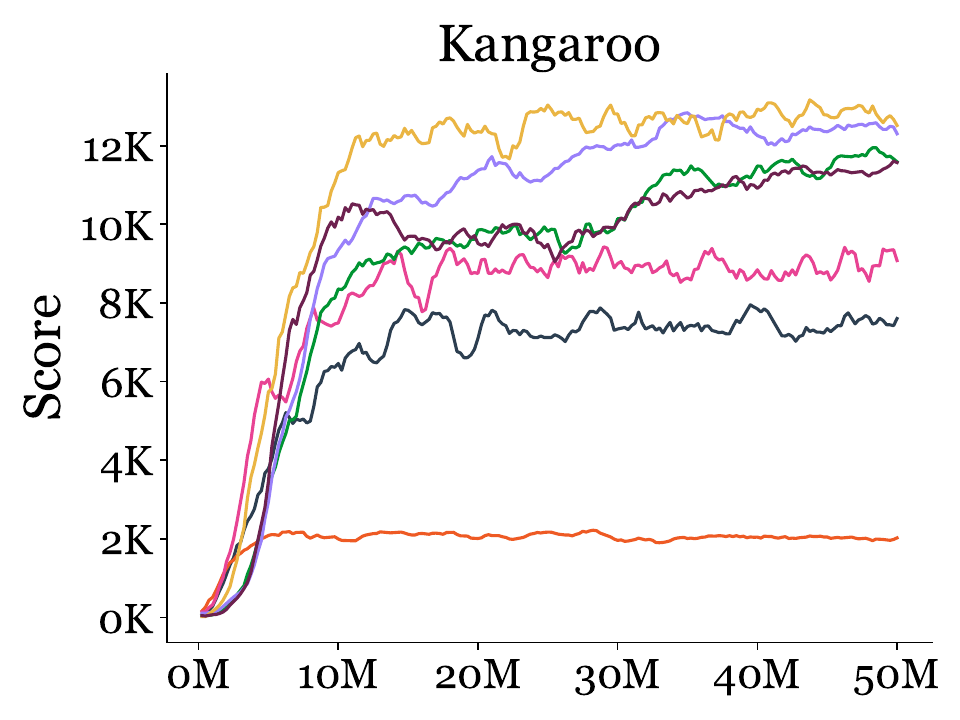} 
	\includegraphics[width=0.21\linewidth]{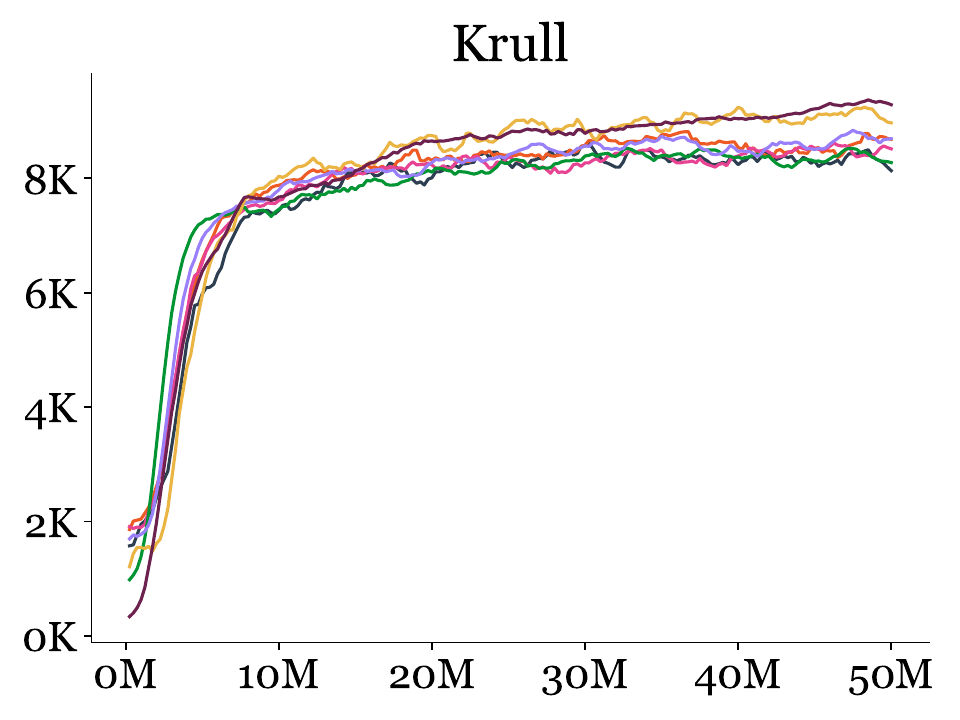} 
	\includegraphics[width=0.21\linewidth]{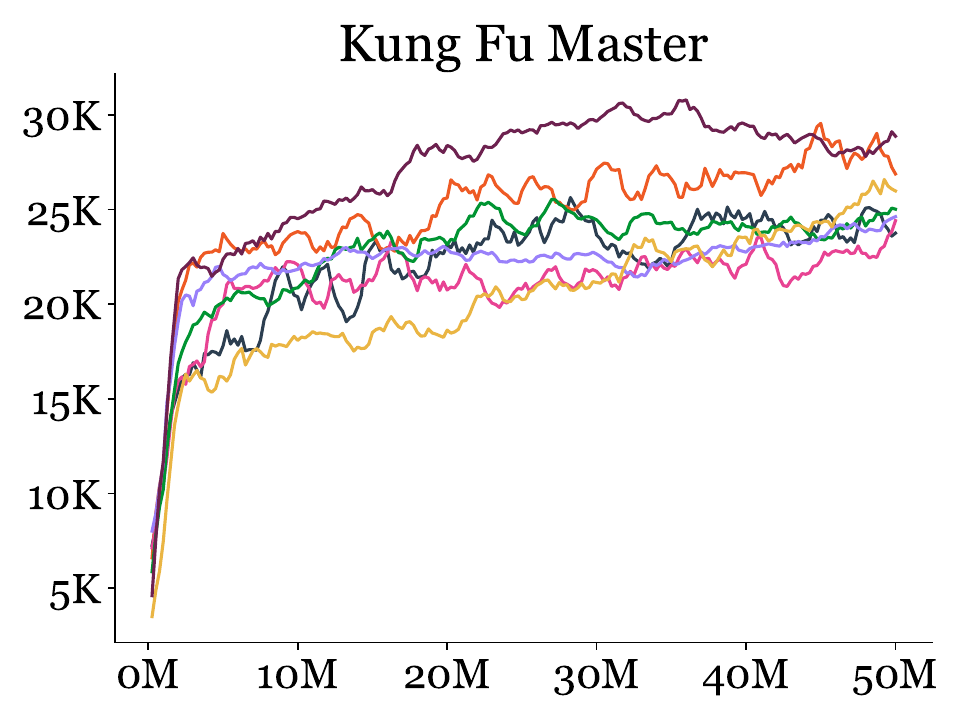} 
	\includegraphics[width=0.21\linewidth]{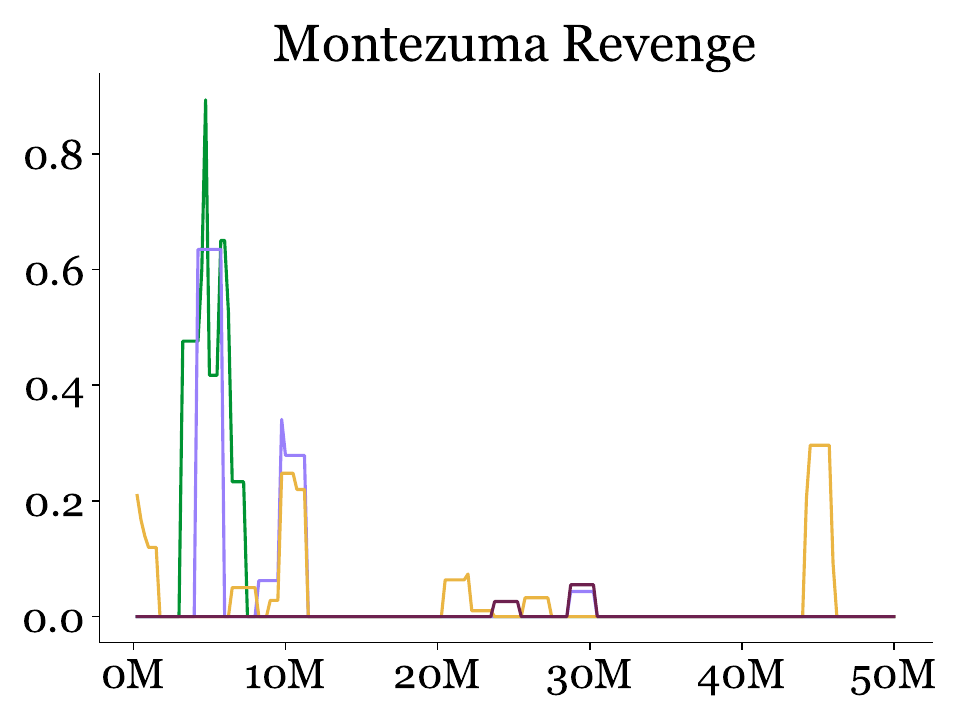} 
	\includegraphics[width=0.21\linewidth]{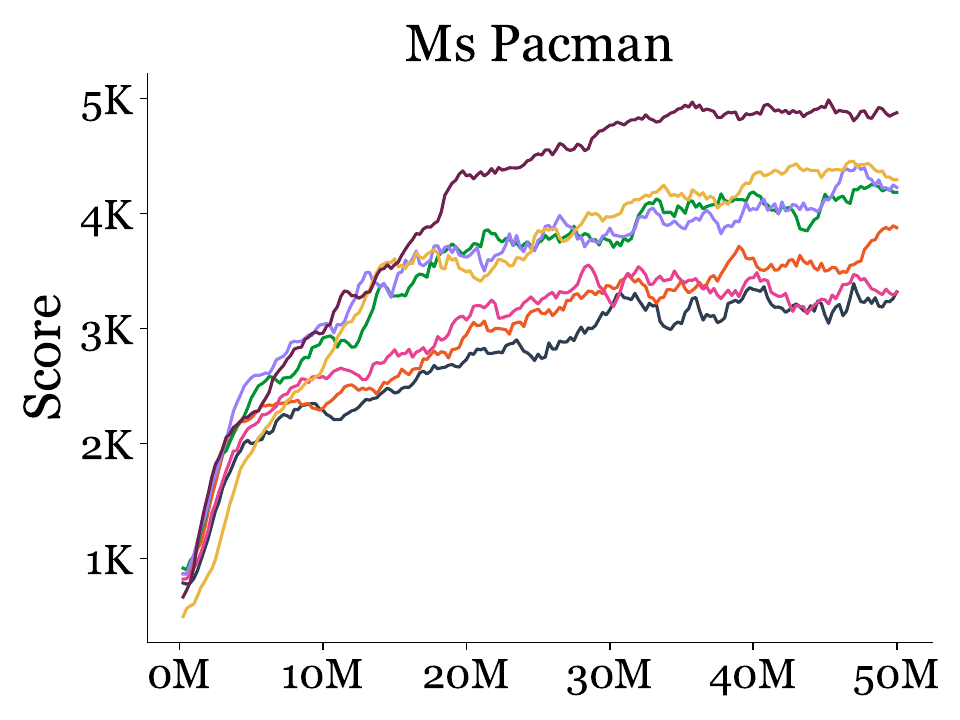} 
	\includegraphics[width=0.21\linewidth]{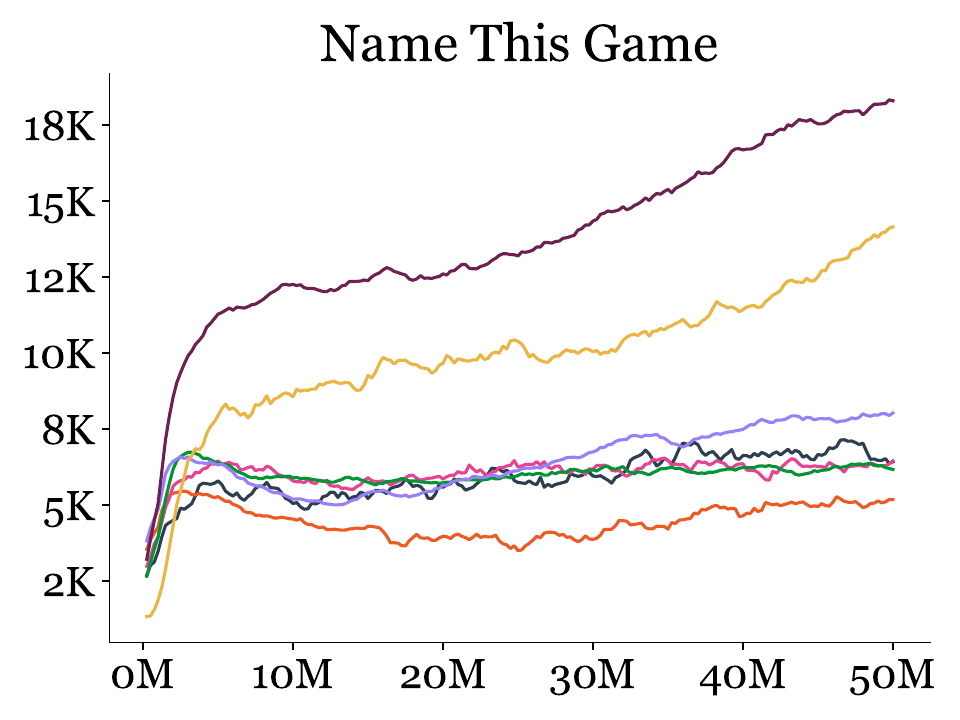} 
	\includegraphics[width=0.21\linewidth]{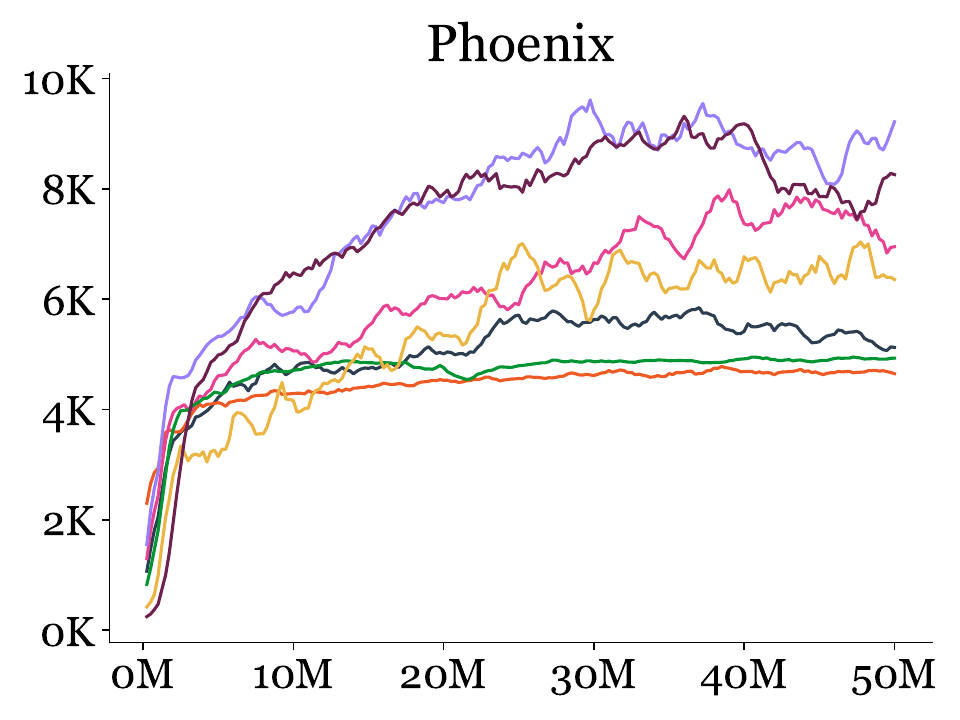} 
	\includegraphics[width=0.21\linewidth]{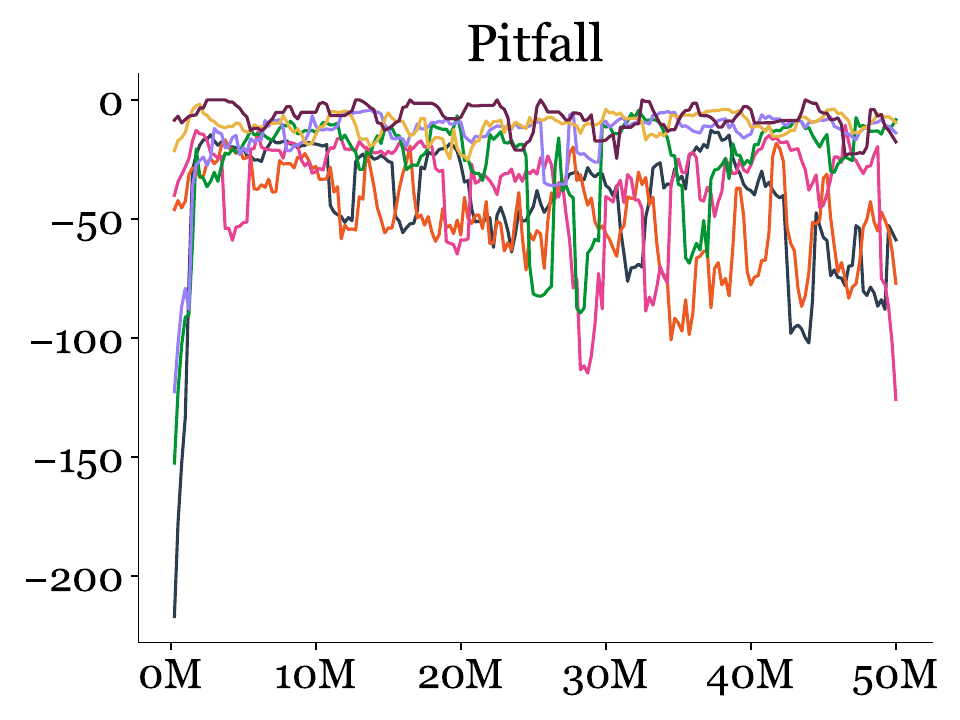} 
	\includegraphics[width=0.21\linewidth]{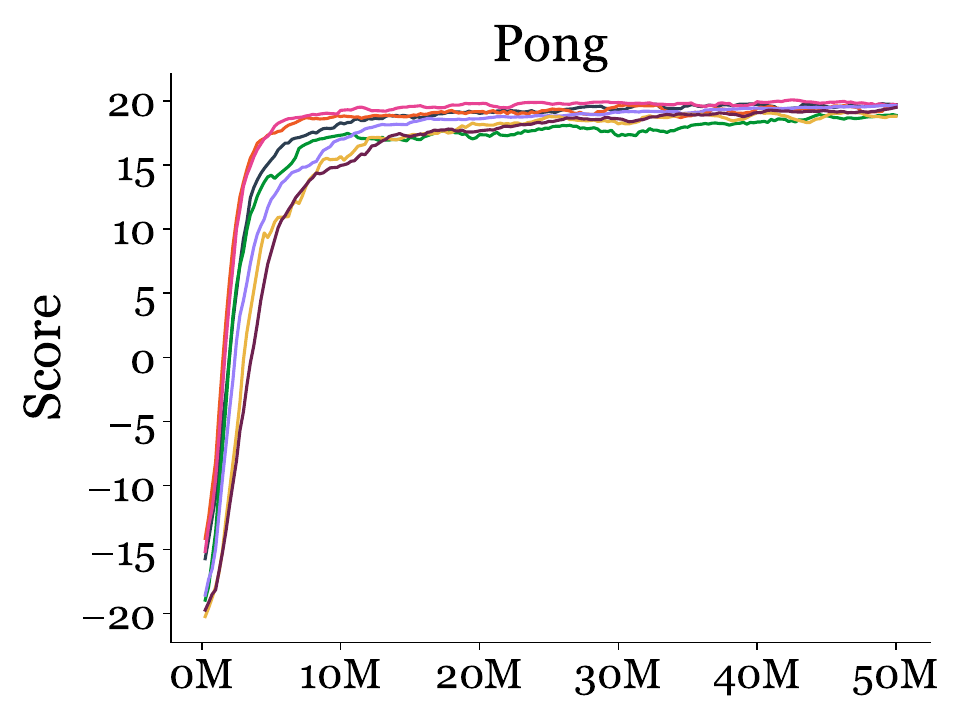} 
	\includegraphics[width=0.21\linewidth]{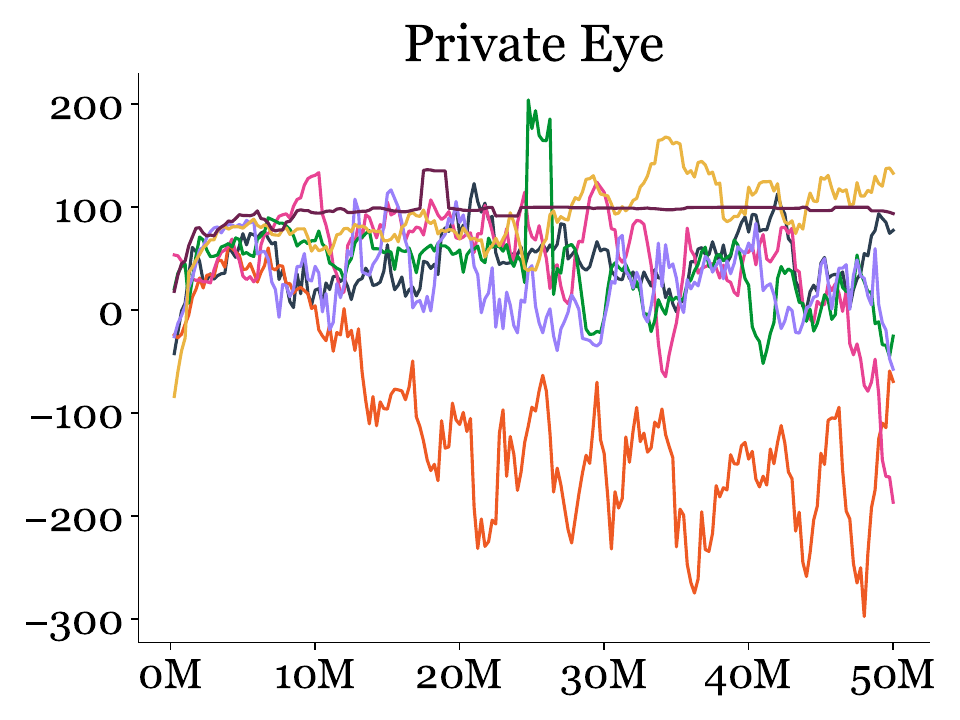} 
	\includegraphics[width=0.21\linewidth]{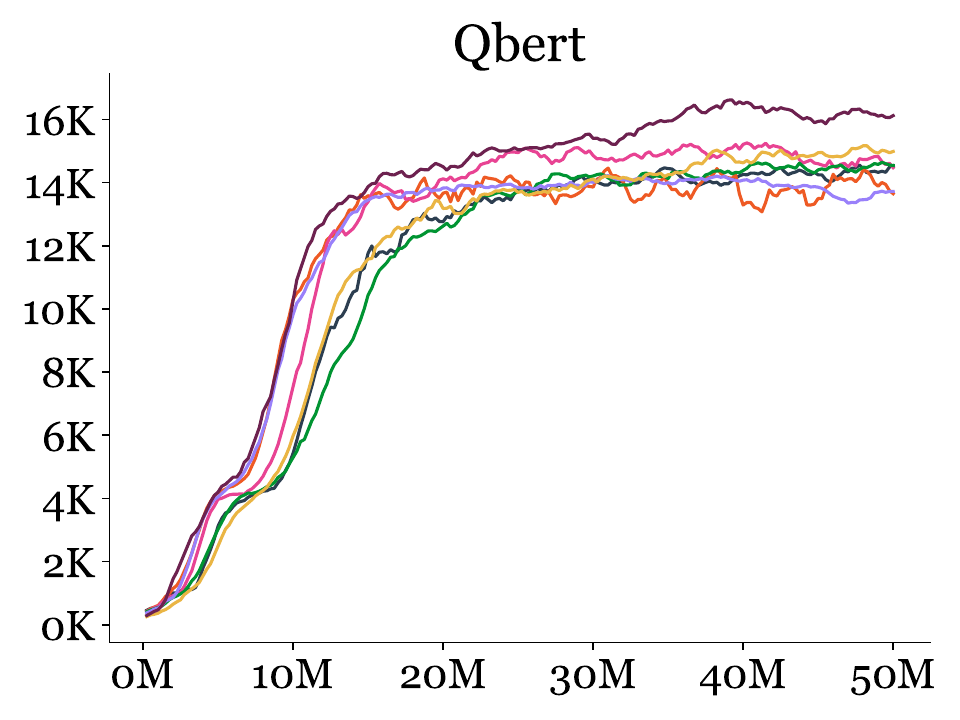} 
	\includegraphics[width=0.21\linewidth]{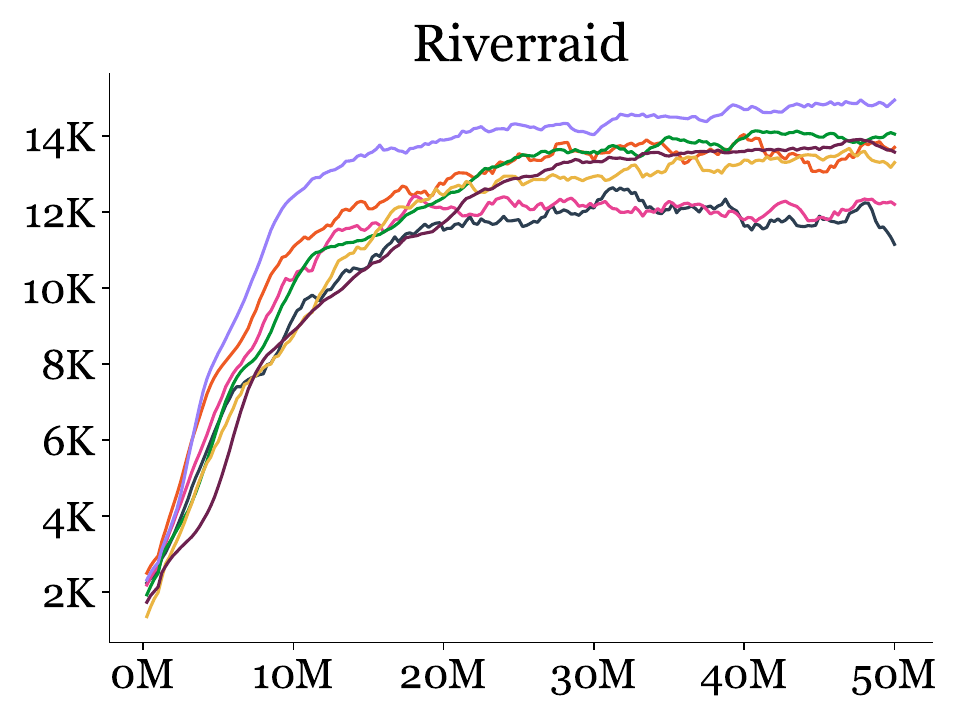} 
	\includegraphics[width=0.21\linewidth]{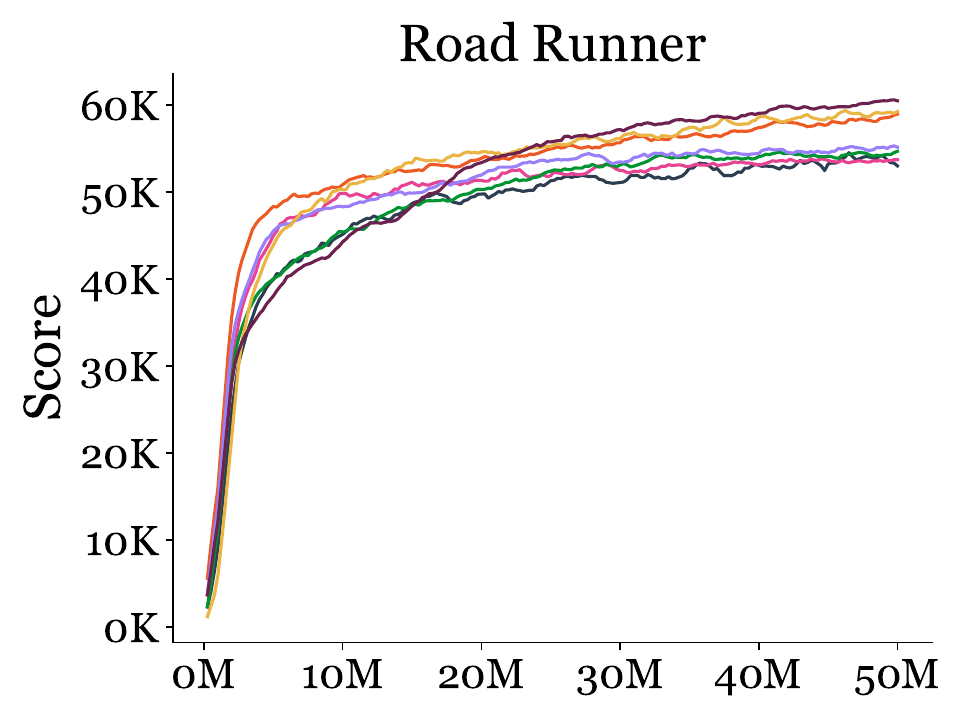} 
	\includegraphics[width=0.21\linewidth]{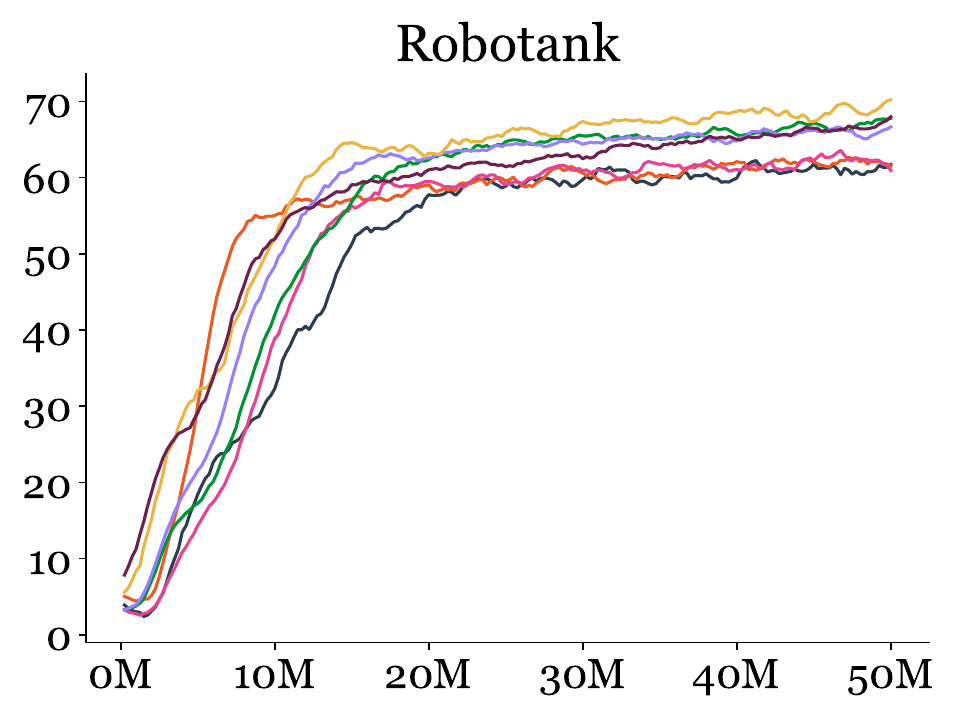} 
	\includegraphics[width=0.21\linewidth]{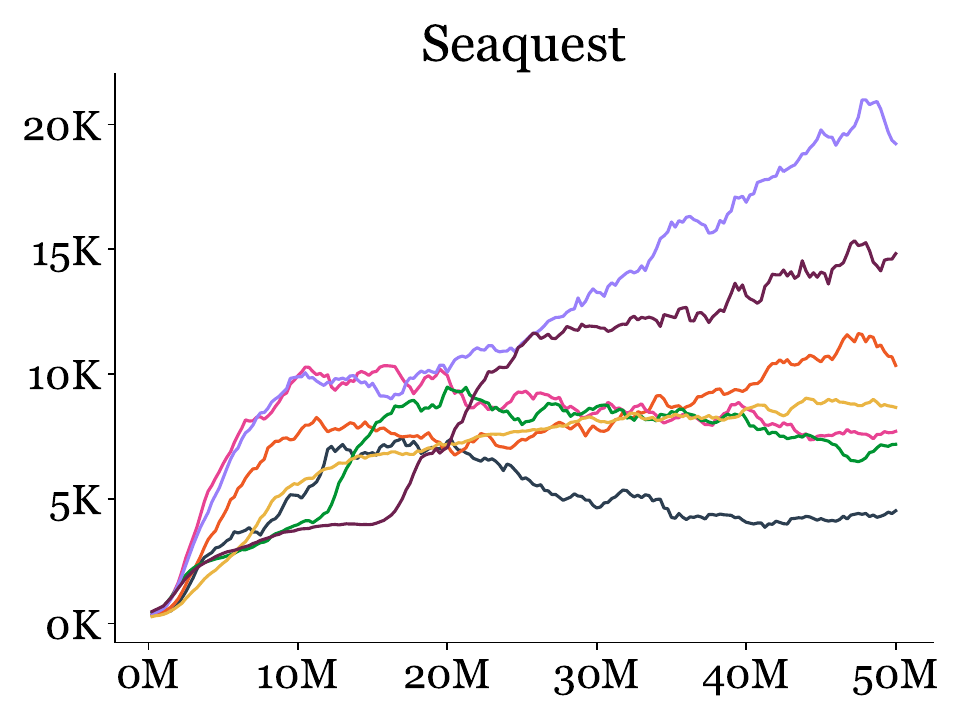} 
	\includegraphics[width=0.21\linewidth]{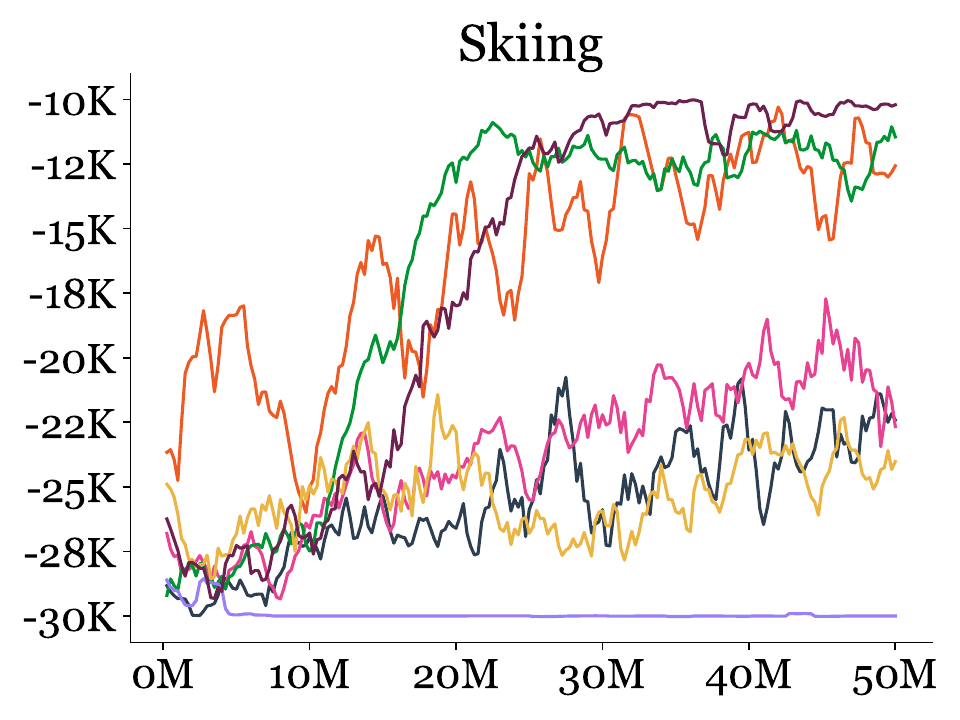} 
	\includegraphics[width=0.21\linewidth]{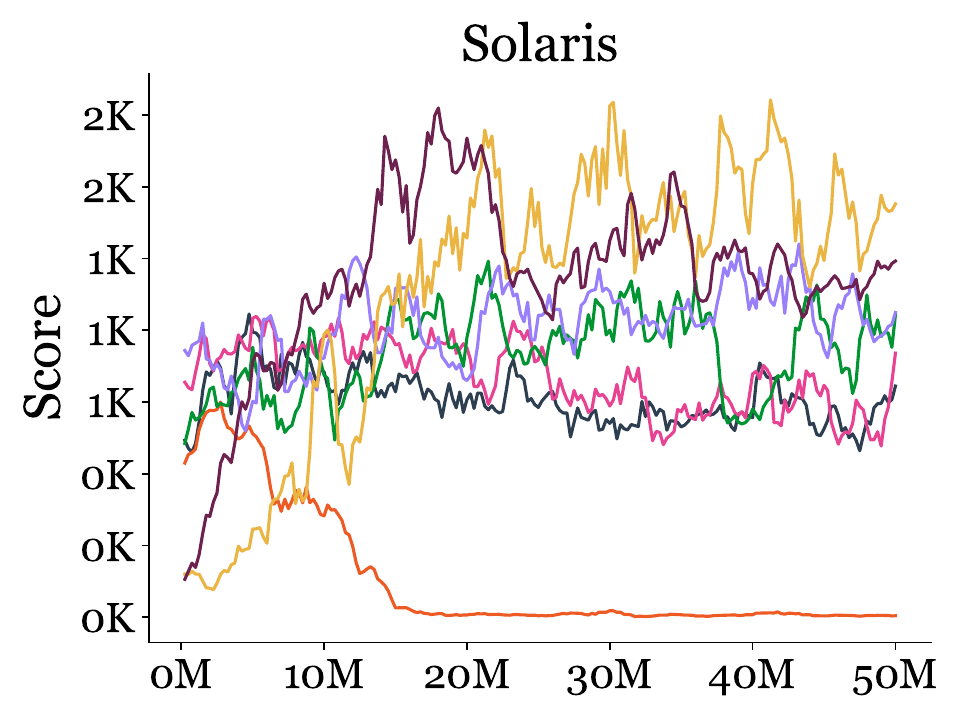} 
	\includegraphics[width=0.21\linewidth]{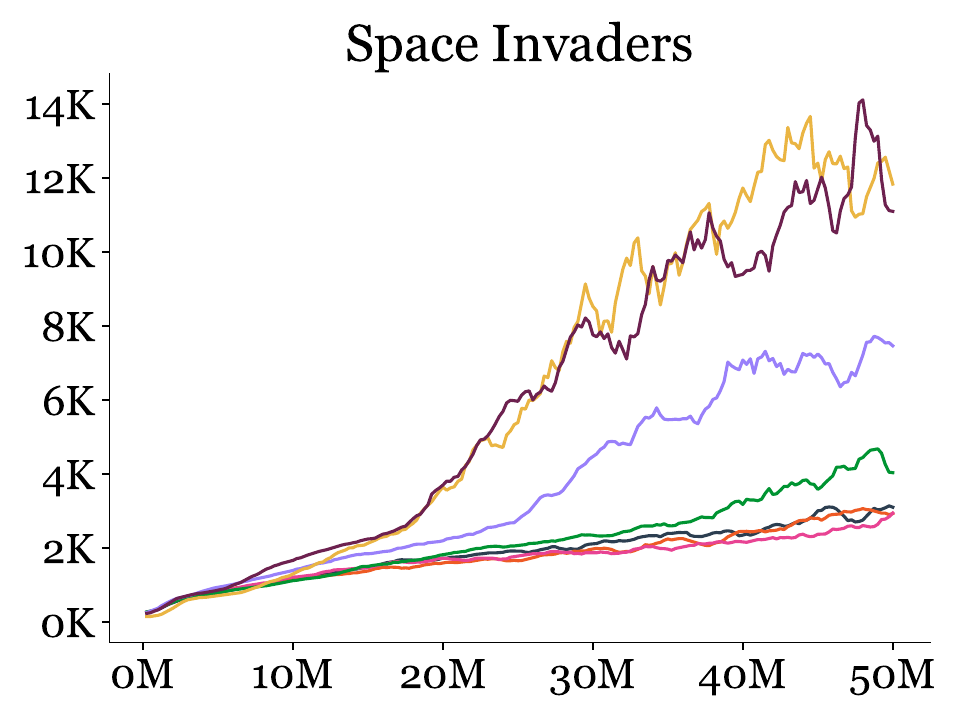} 
	\includegraphics[width=0.21\linewidth]{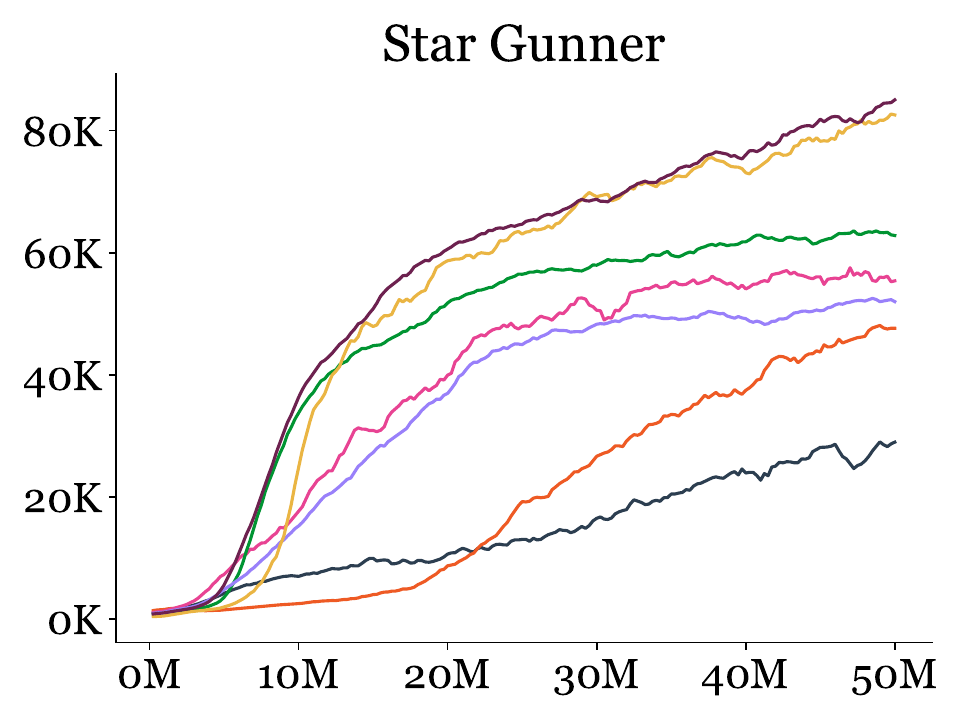} 
	\includegraphics[width=0.21\linewidth]{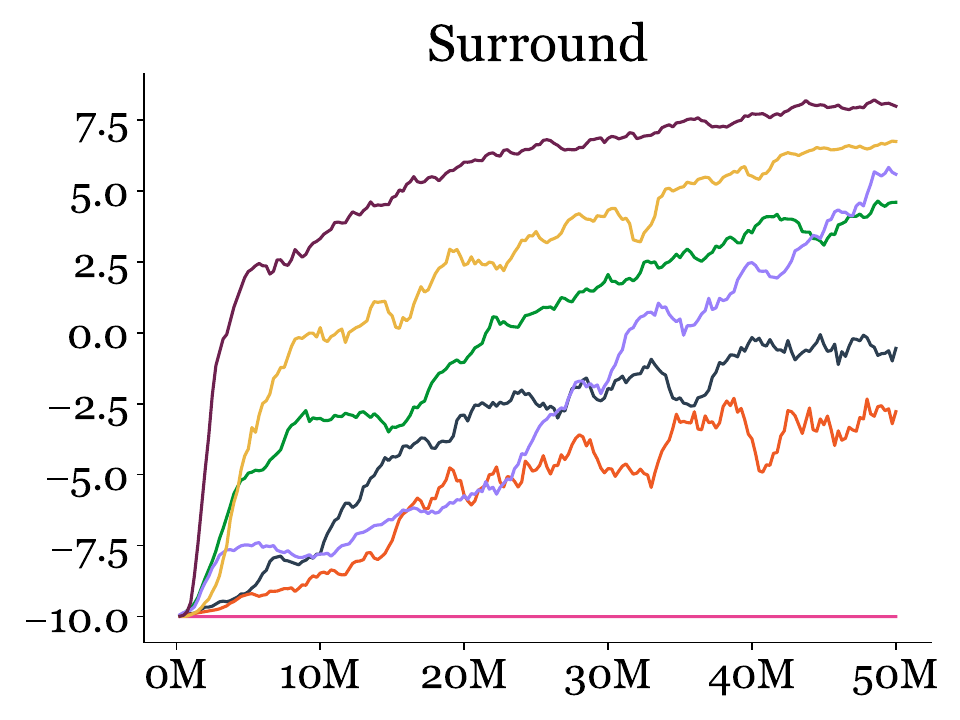} 
	\includegraphics[width=0.21\linewidth]{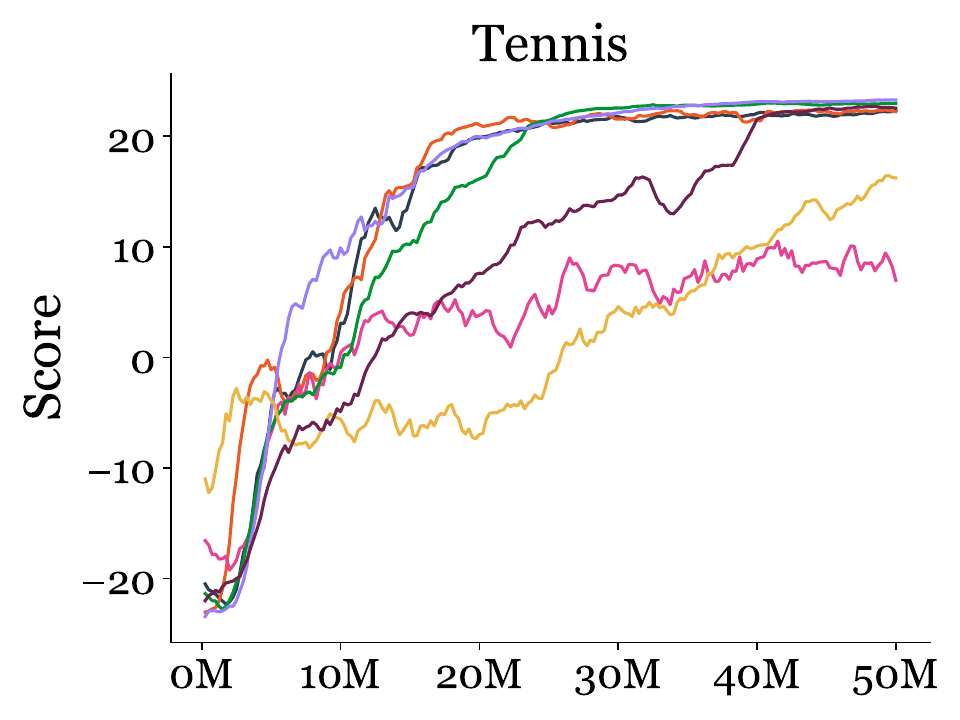} 
	\includegraphics[width=0.21\linewidth]{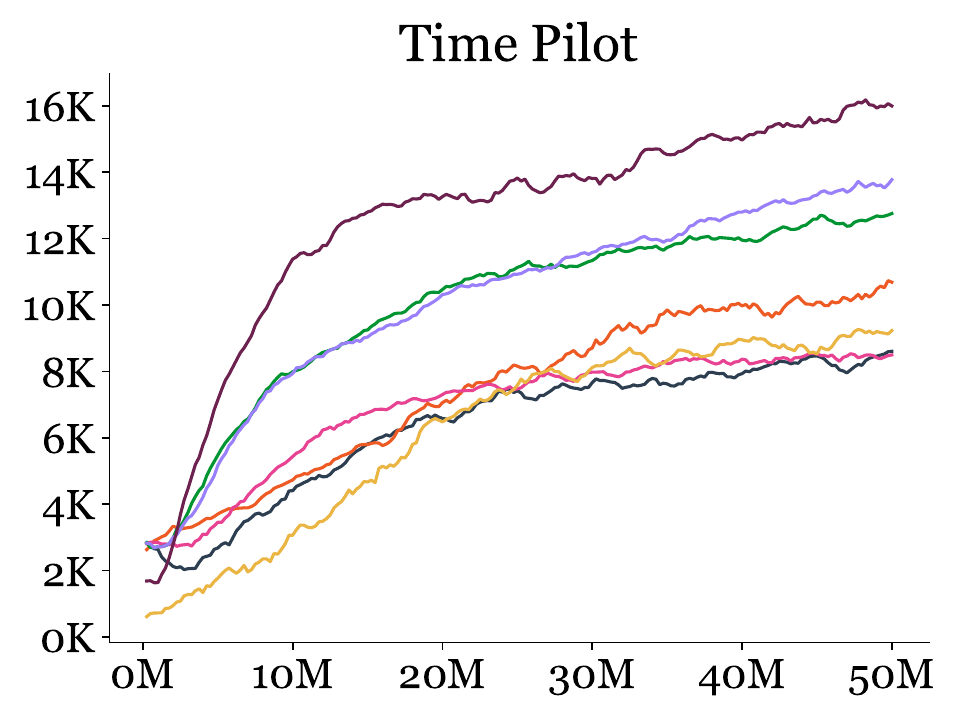} 
	\includegraphics[width=0.21\linewidth]{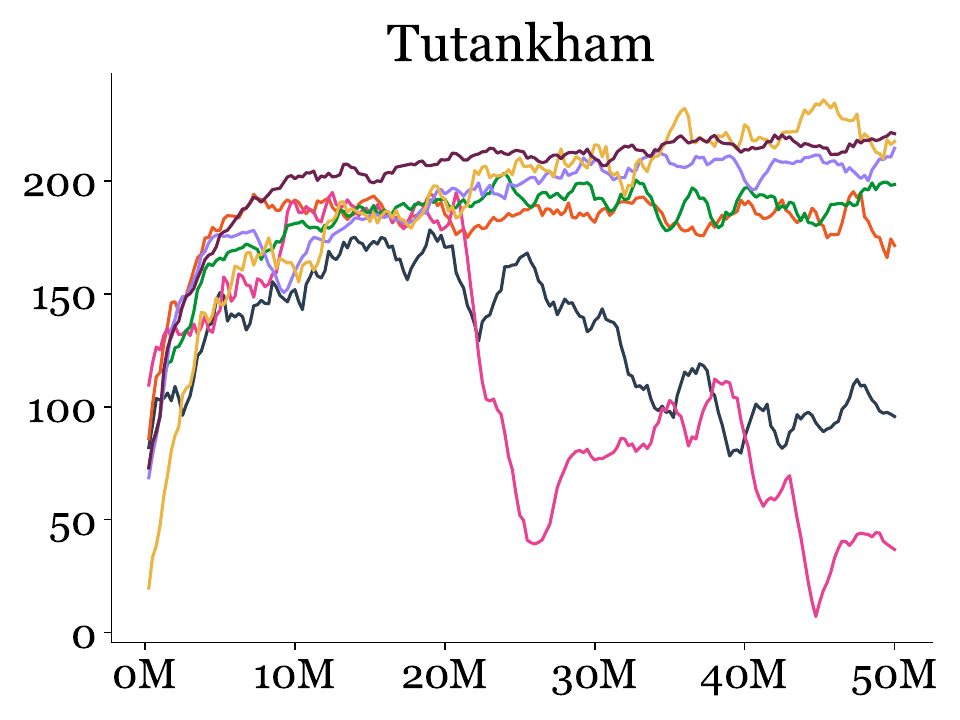} 
	\includegraphics[width=0.21\linewidth]{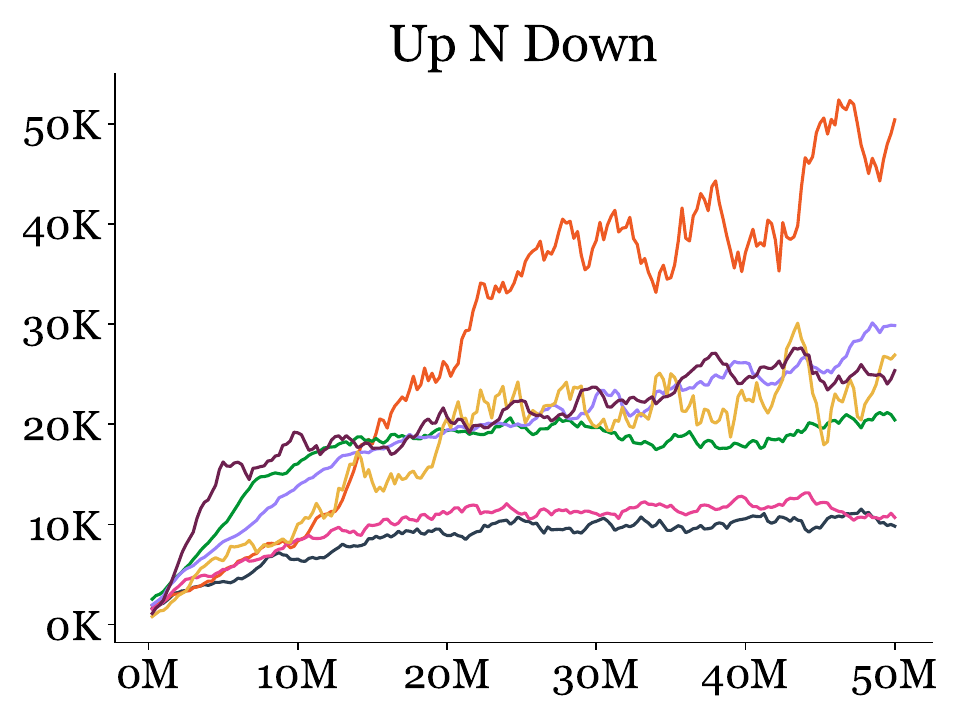} 
	\includegraphics[width=0.21\linewidth]{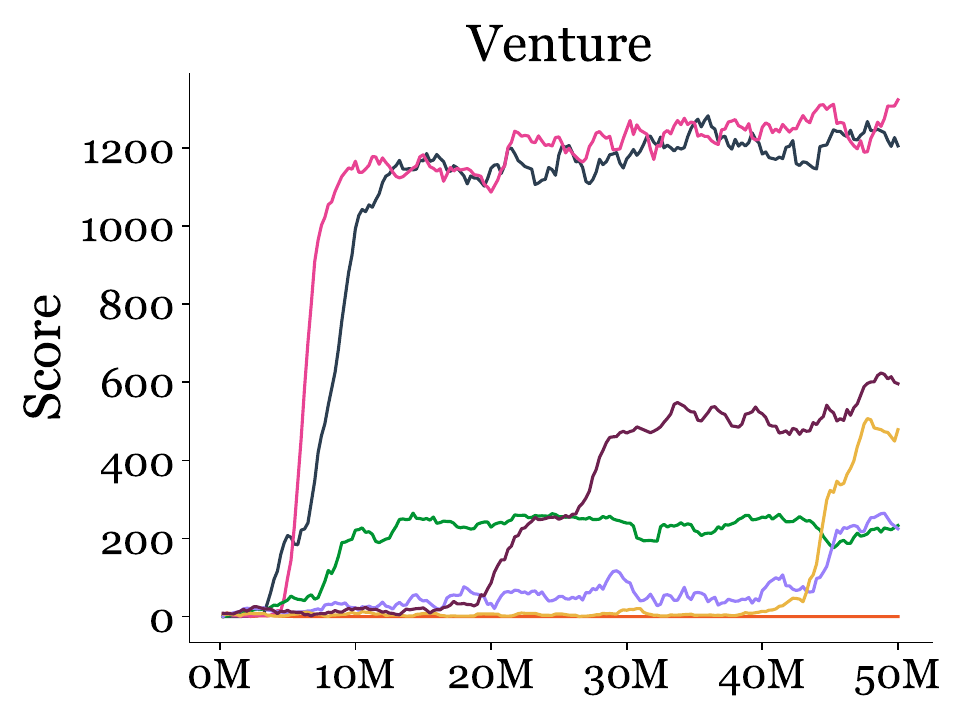} 
	\includegraphics[width=0.21\linewidth]{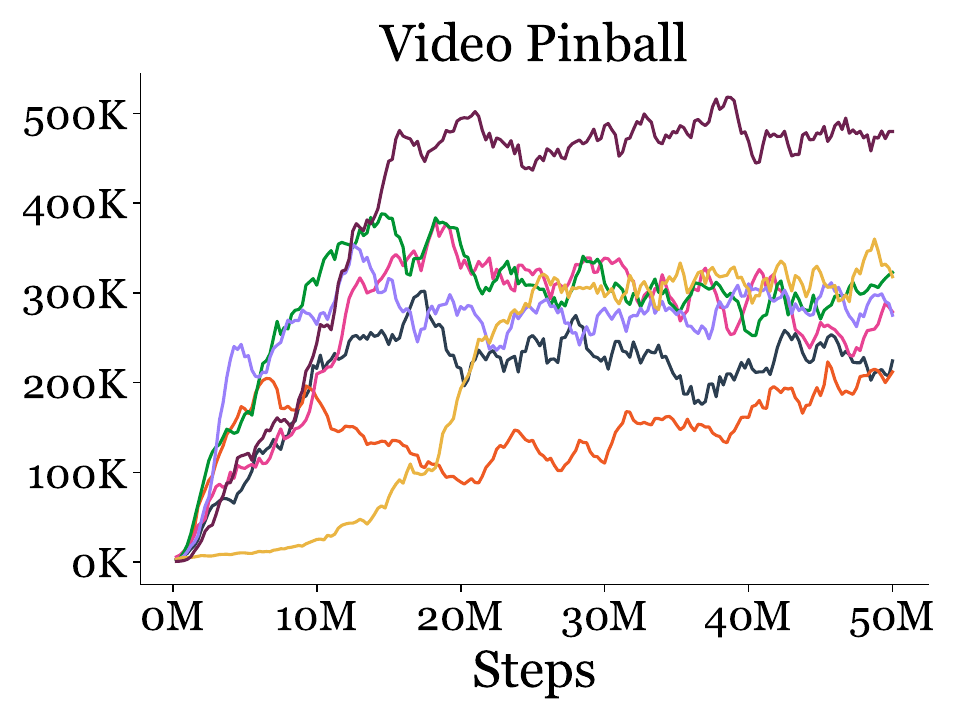} 
	\includegraphics[width=0.21\linewidth]{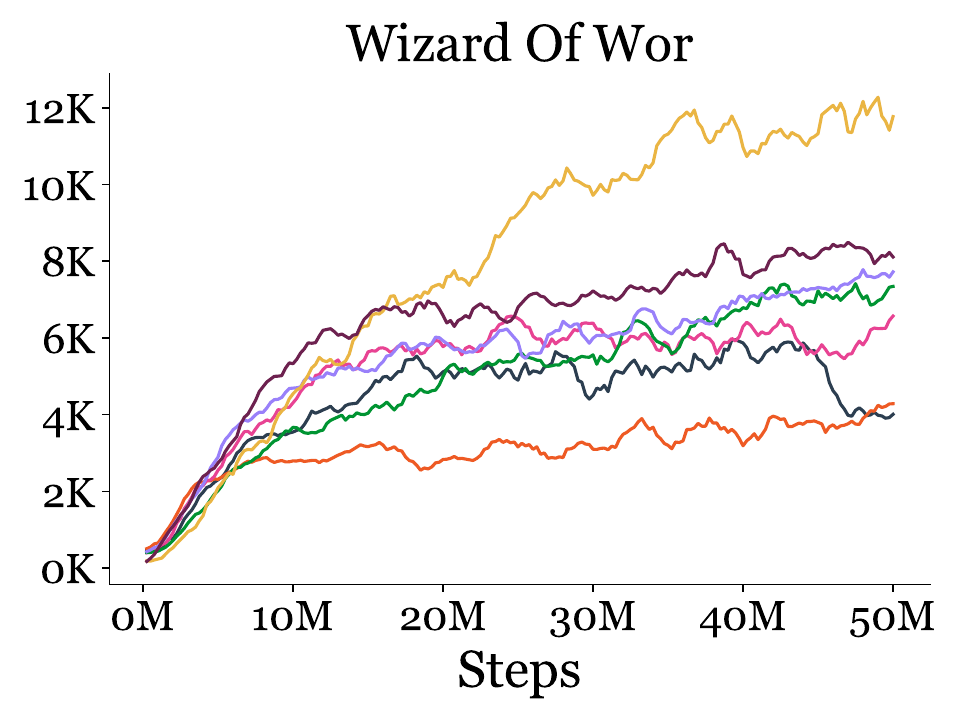} 
	\includegraphics[width=0.21\linewidth]{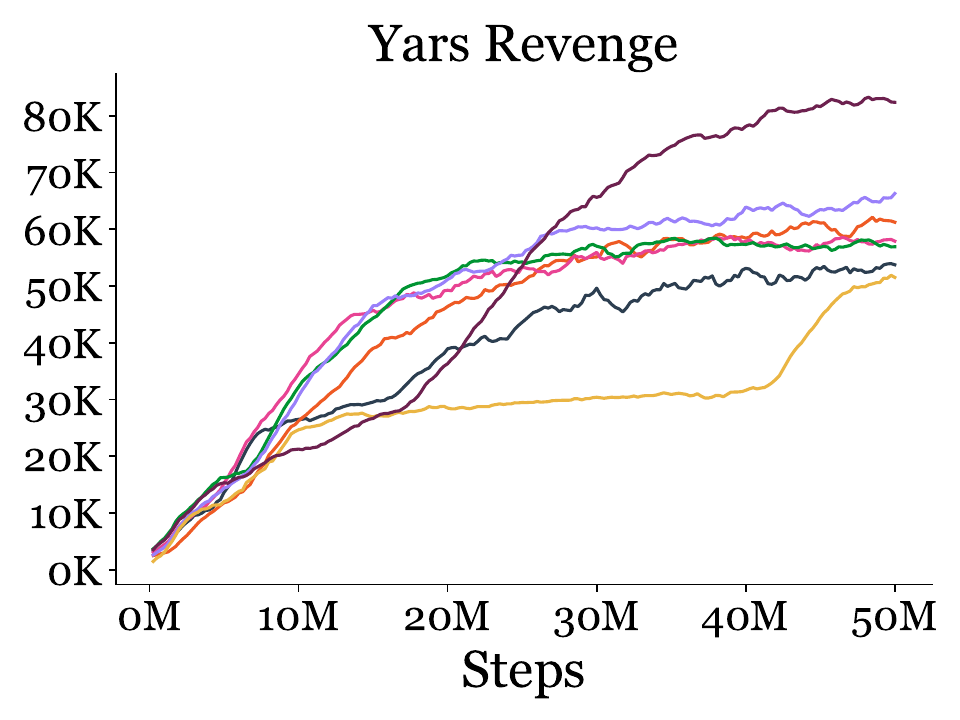} 
	\includegraphics[width=0.21\linewidth]{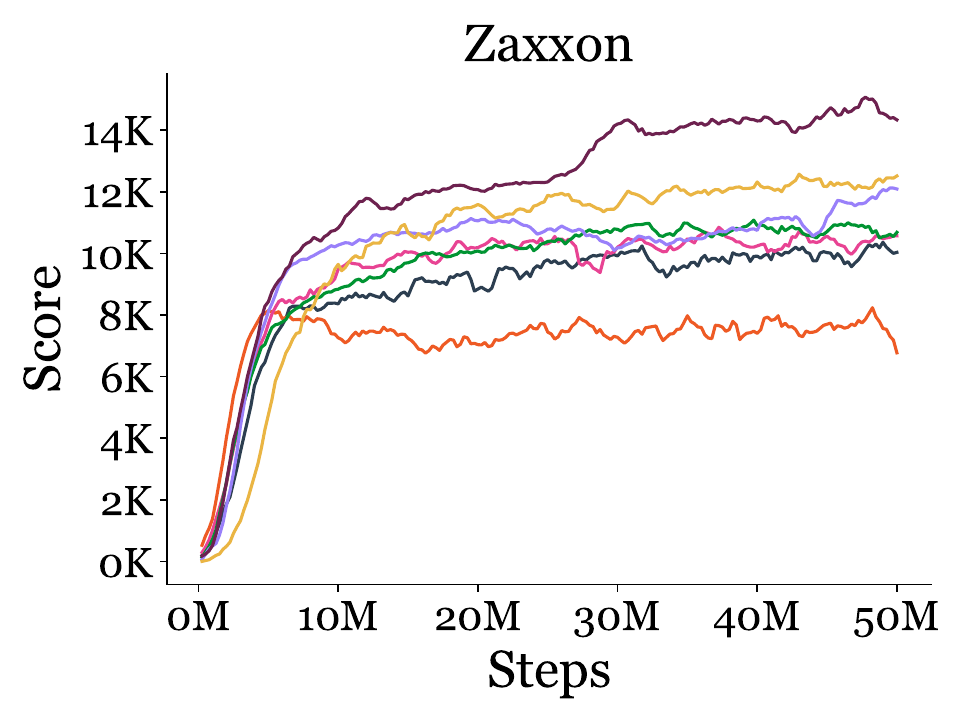} 
	\hspace{0.01\linewidth} \raisebox{2.2em}[0pt][0pt]{
    {\includegraphics[width=0.4\linewidth]
    {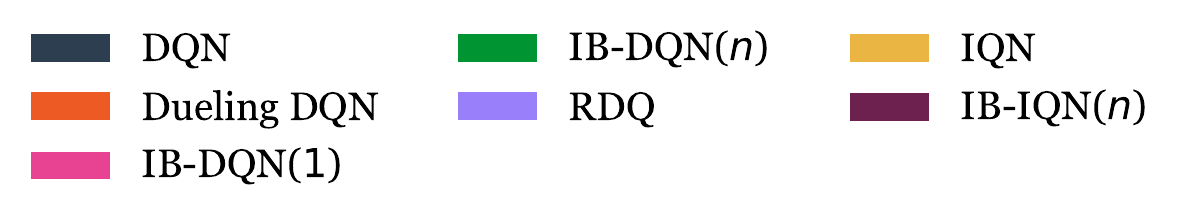}}}
    \hspace{0.205\linewidth}
	\caption{Mean score across 50M timesteps over five seeds per game. For readability, plots are smoothed with a moving average of seven.}
	\label{Atari57:score:page_2}
\end{figure}

\begin{figure}[p]
        \centering
    	\includegraphics[width=0.21\linewidth]{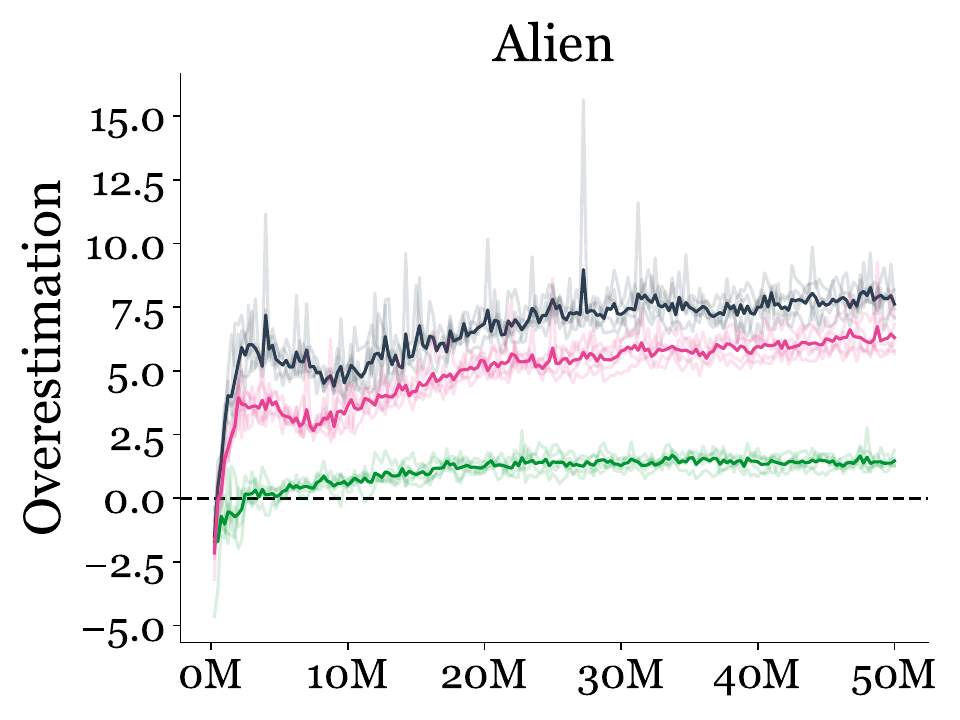} 
	\includegraphics[width=0.21\linewidth]{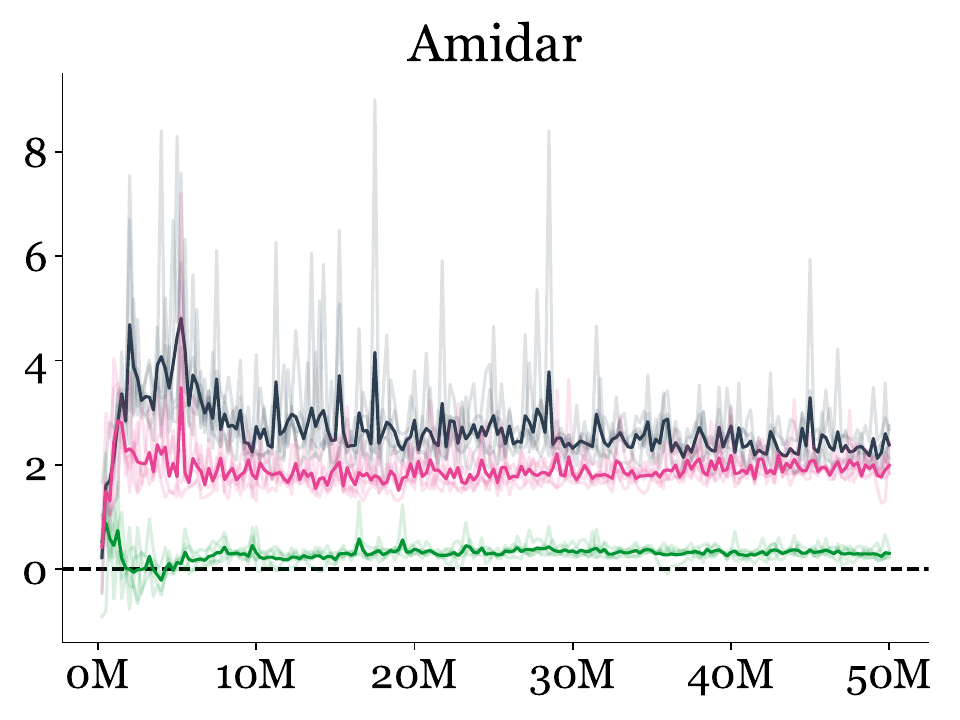} 
	\includegraphics[width=0.21\linewidth]{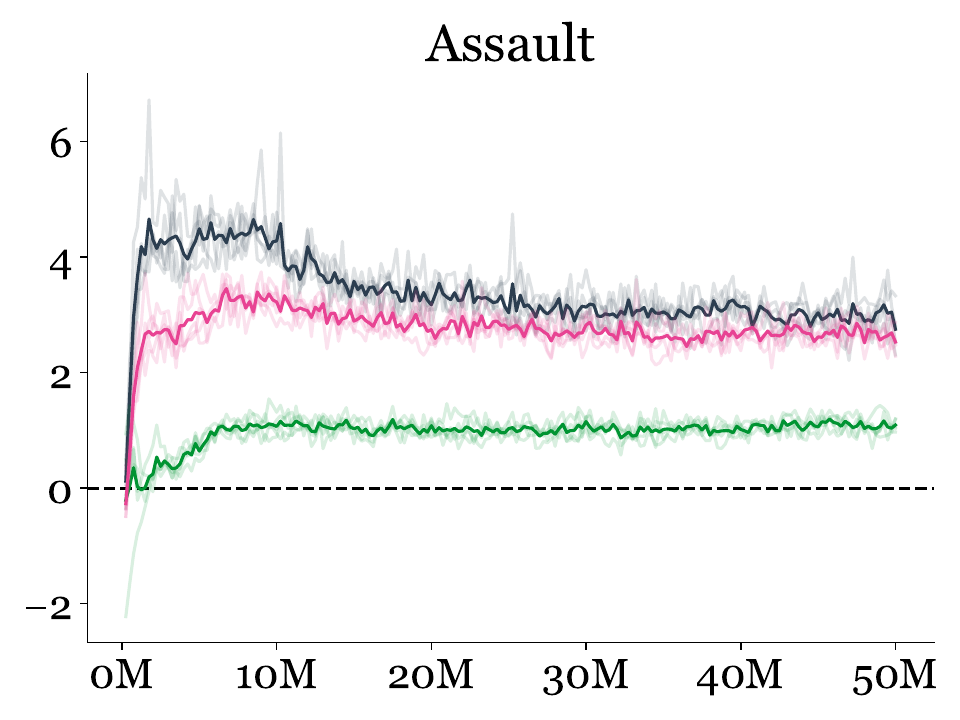} 
	\includegraphics[width=0.21\linewidth]{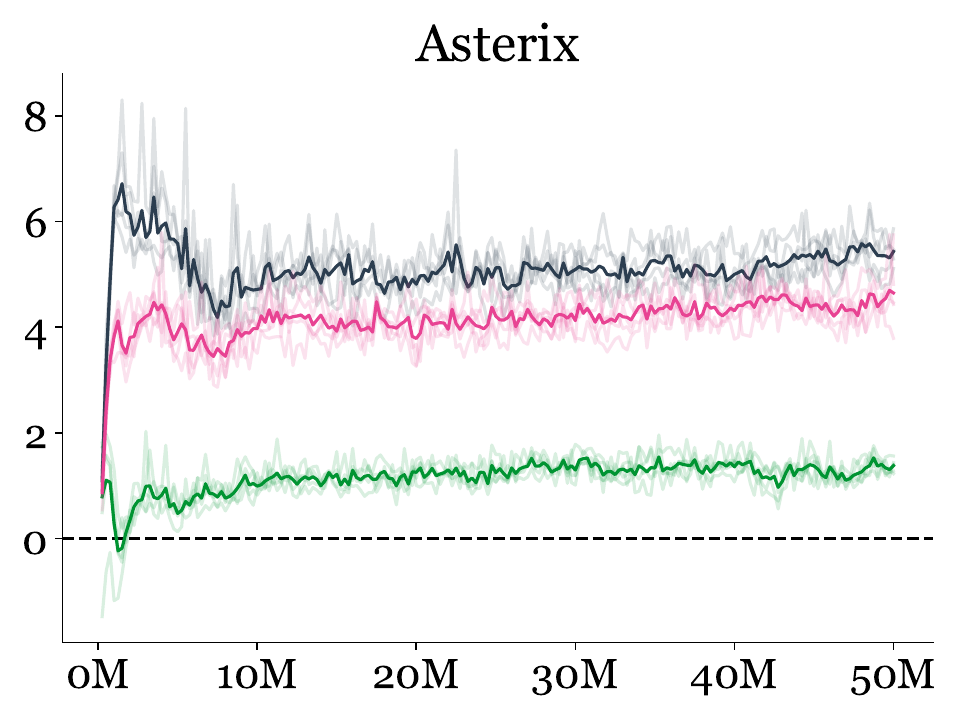} 
	\includegraphics[width=0.21\linewidth]{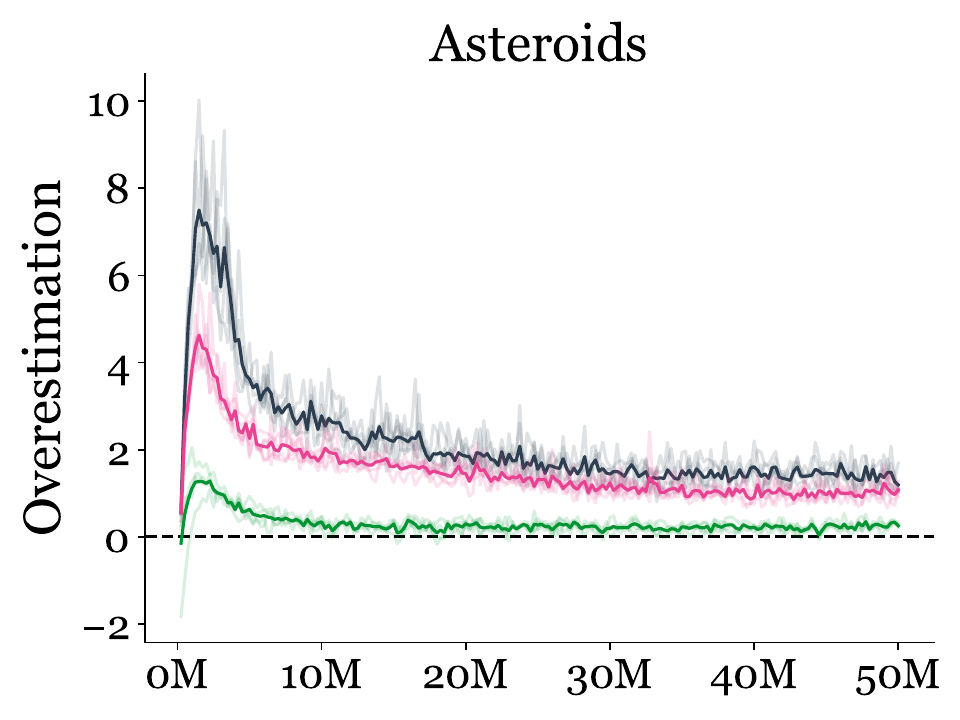} 
	\includegraphics[width=0.21\linewidth]{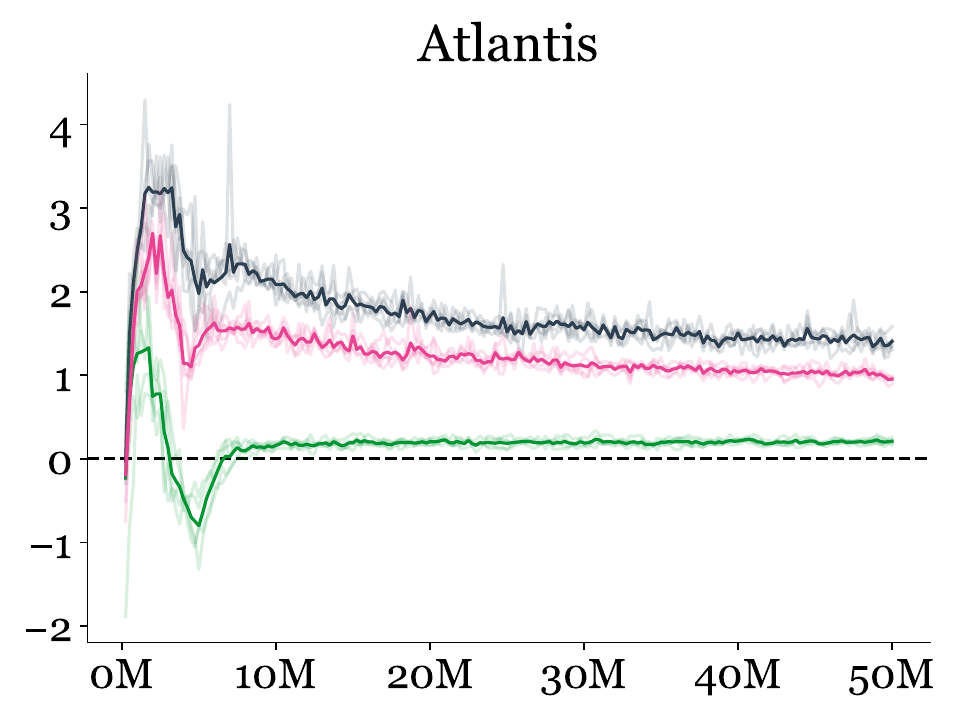} 
	\includegraphics[width=0.21\linewidth]{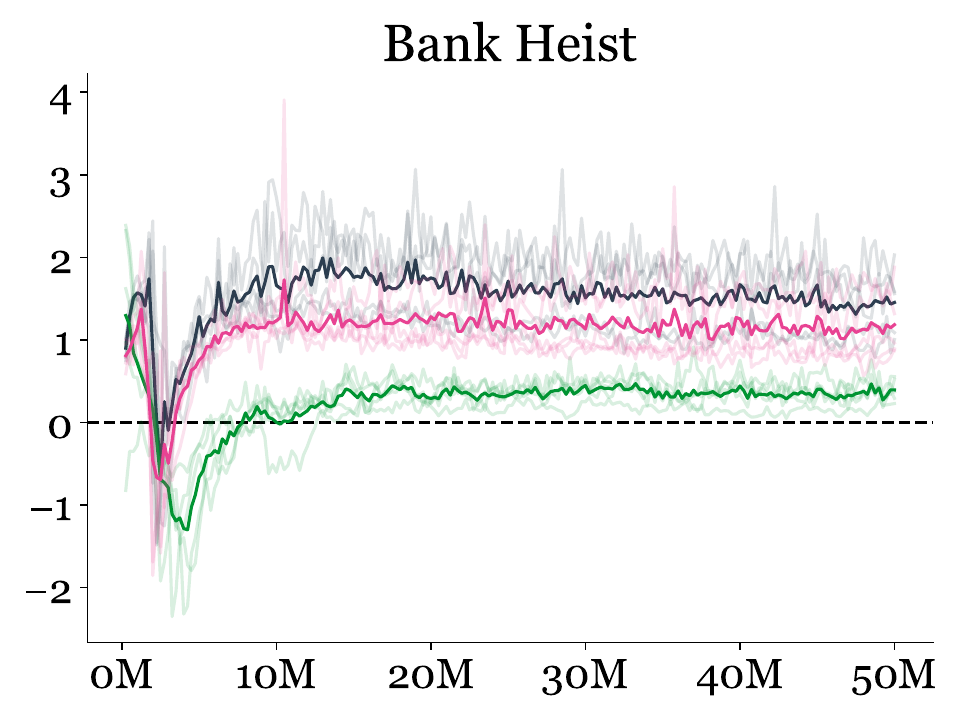} 
	\includegraphics[width=0.21\linewidth]{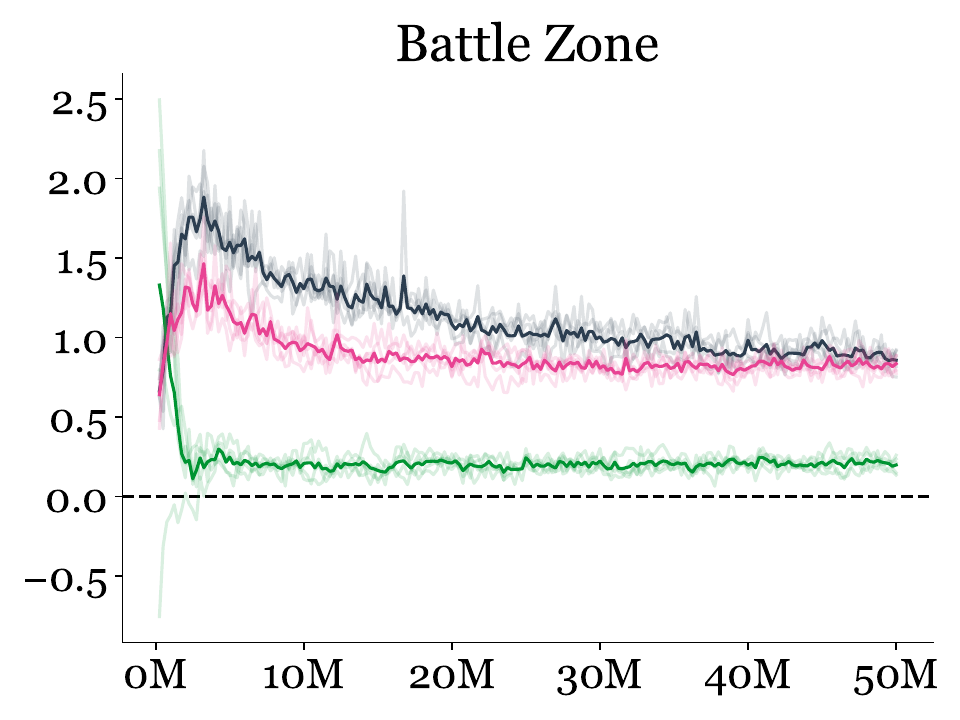} 
	\includegraphics[width=0.21\linewidth]{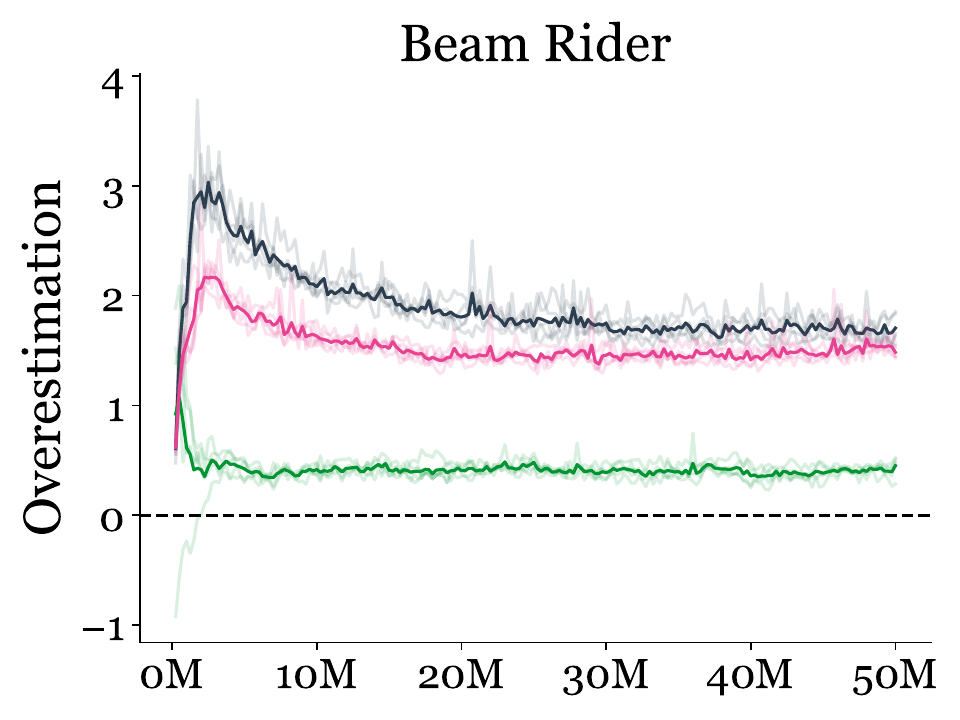} 
	\includegraphics[width=0.21\linewidth]{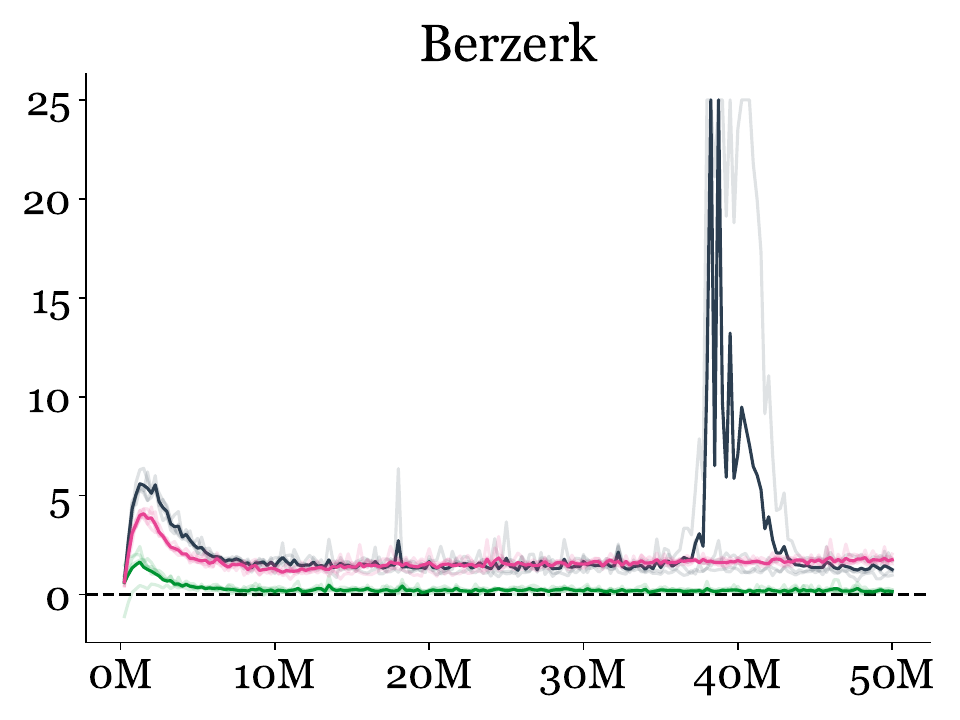} 
	\includegraphics[width=0.21\linewidth]{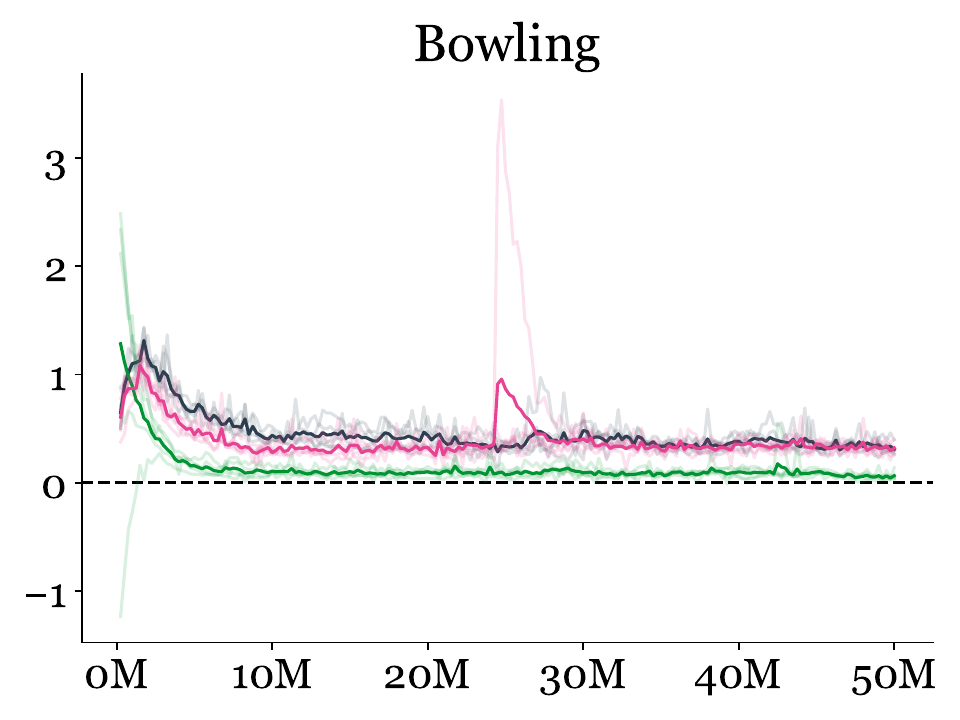} 
	\includegraphics[width=0.21\linewidth]{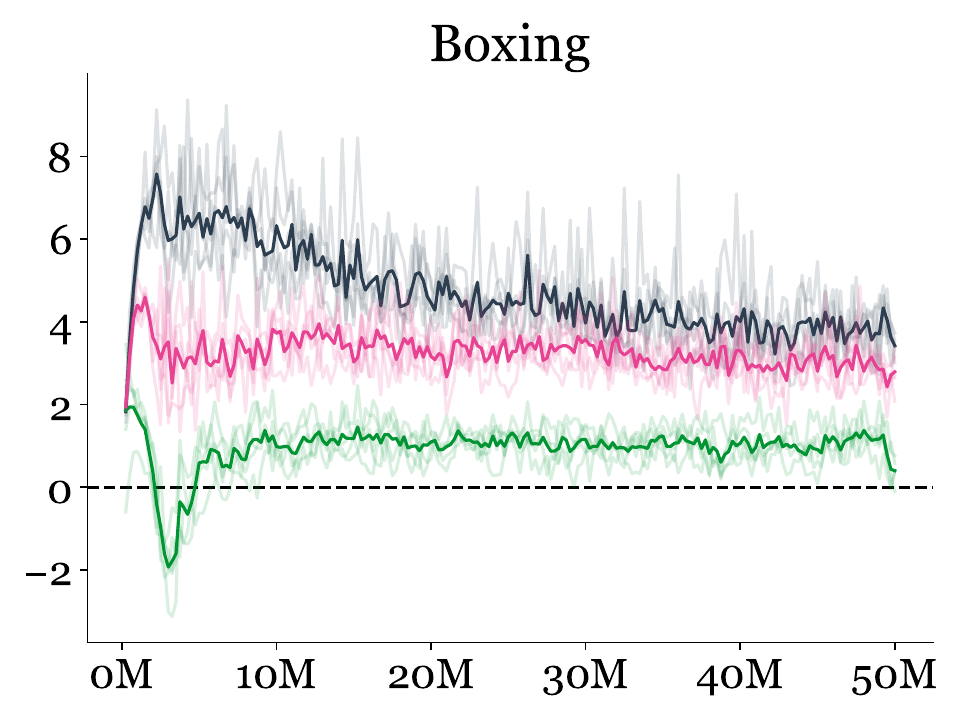} 
	\includegraphics[width=0.21\linewidth]{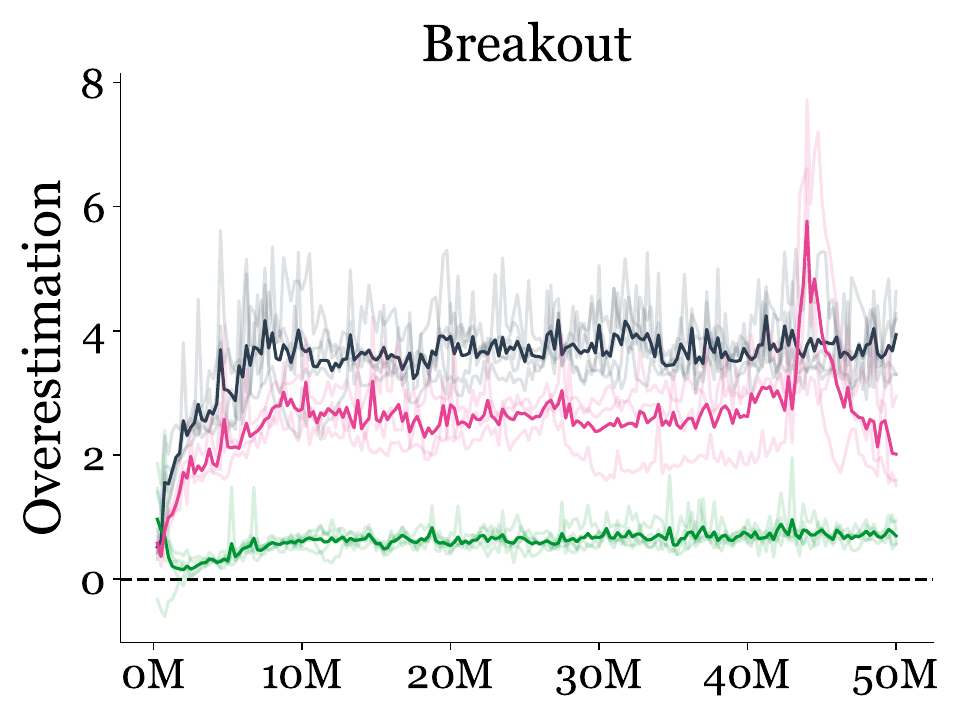} 
	\includegraphics[width=0.21\linewidth]{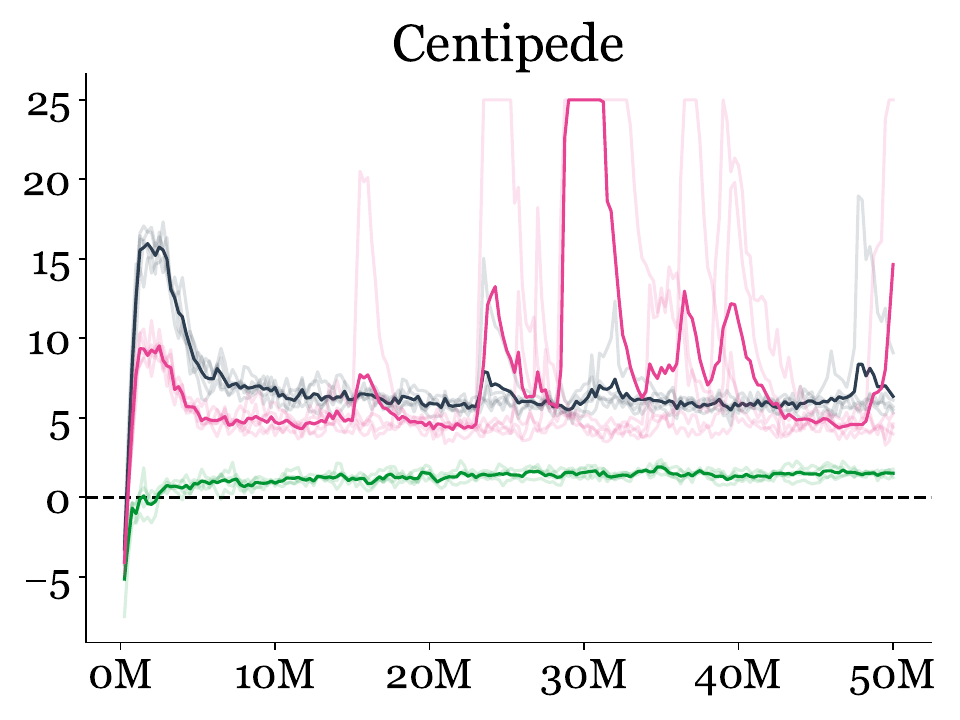} 
	\includegraphics[width=0.21\linewidth]{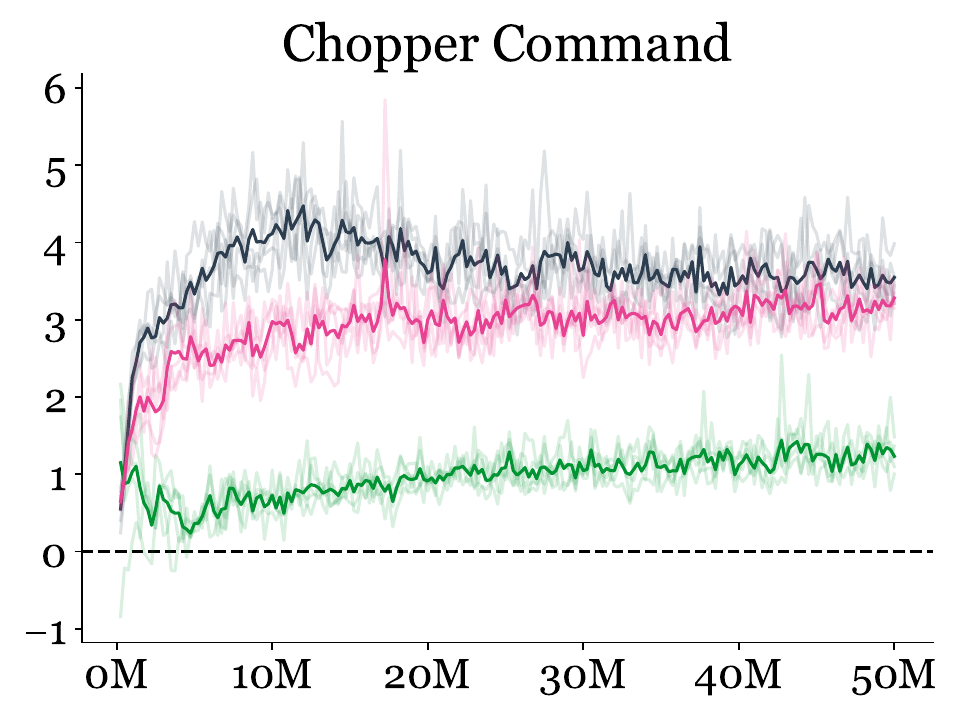} 
	\includegraphics[width=0.21\linewidth]{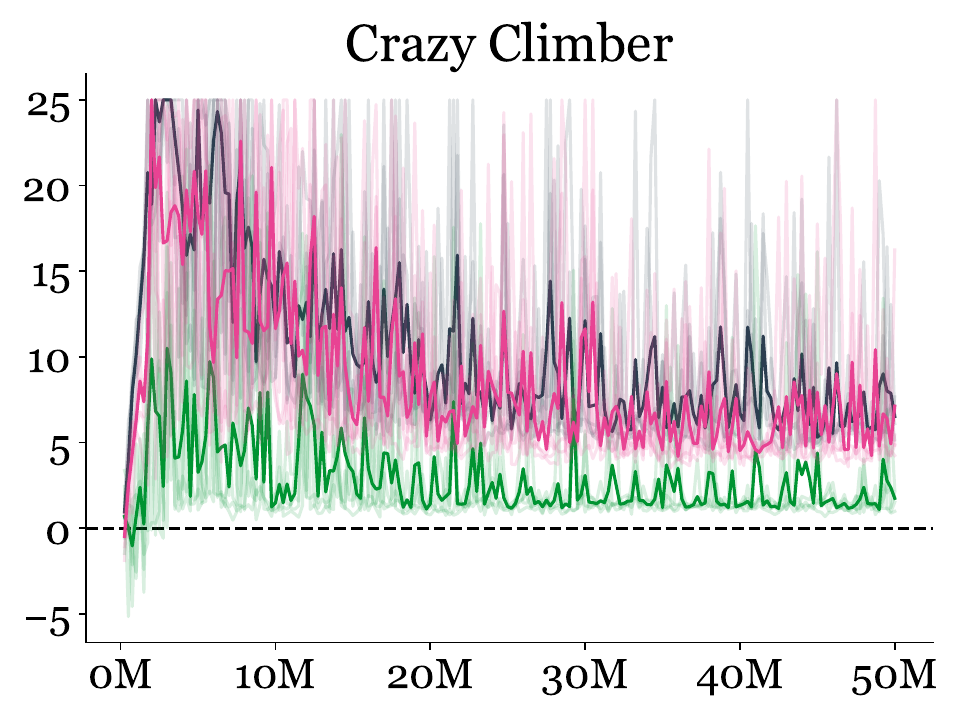} 
	\includegraphics[width=0.21\linewidth]{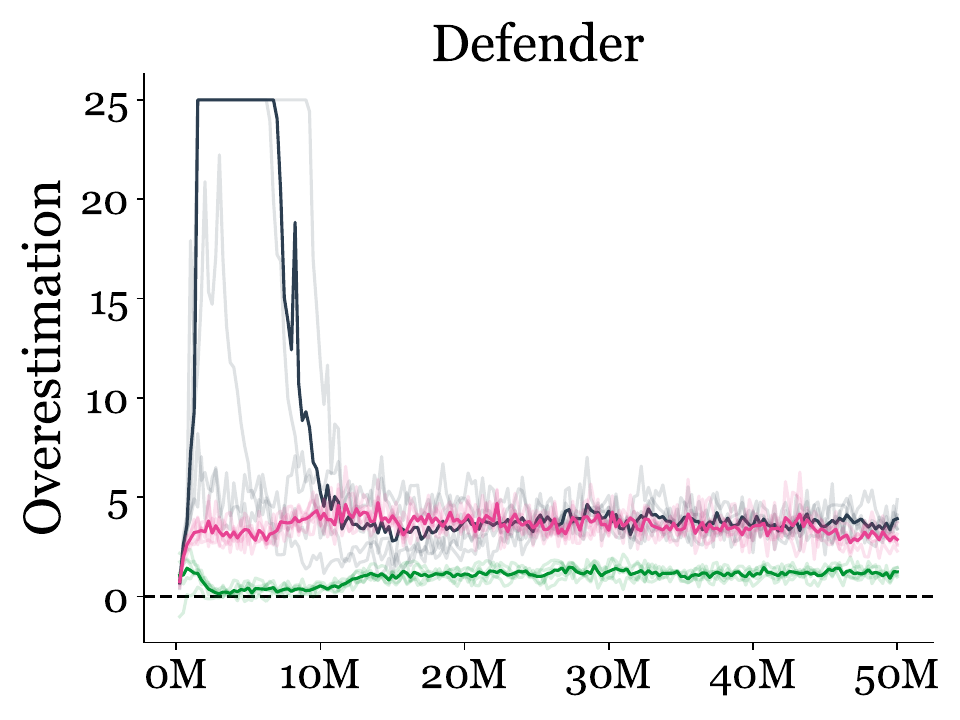} 
	\includegraphics[width=0.21\linewidth]{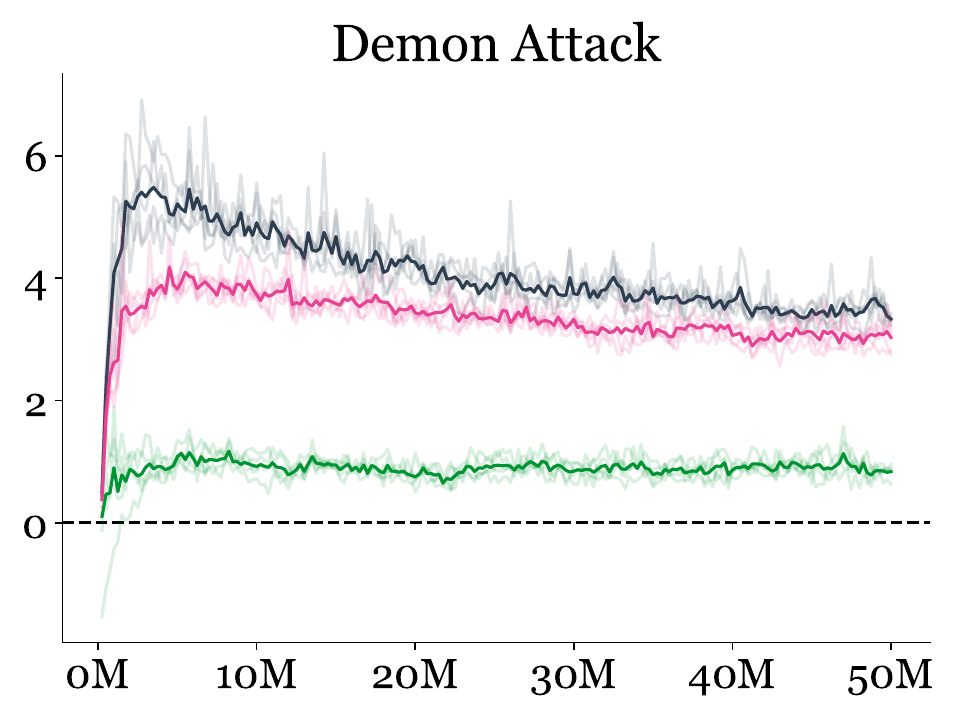} 
	\includegraphics[width=0.21\linewidth]{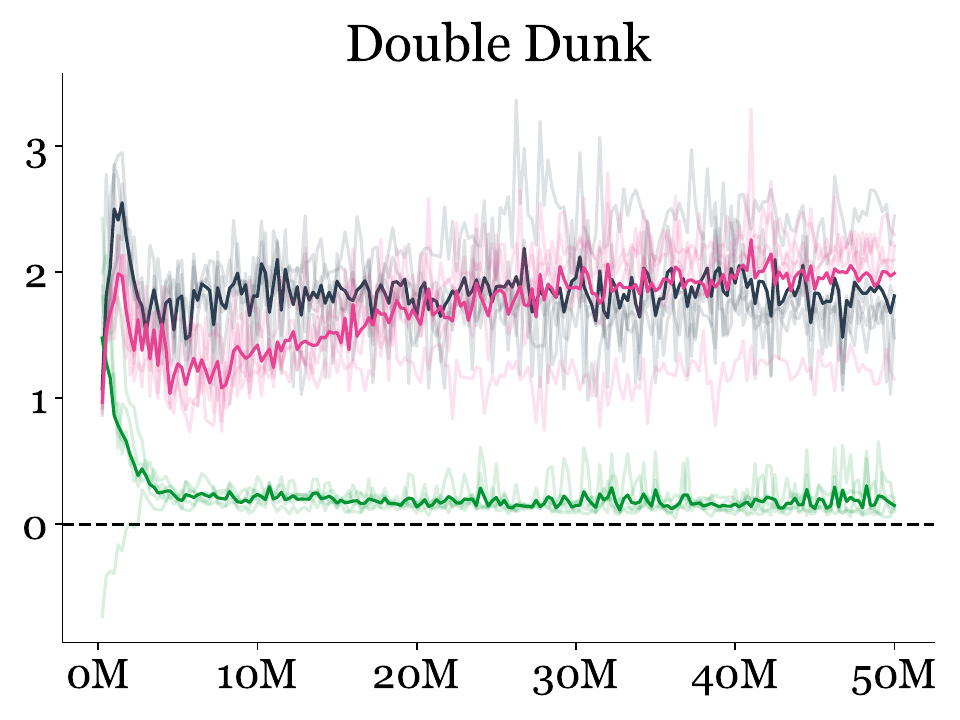} 
	\includegraphics[width=0.21\linewidth]{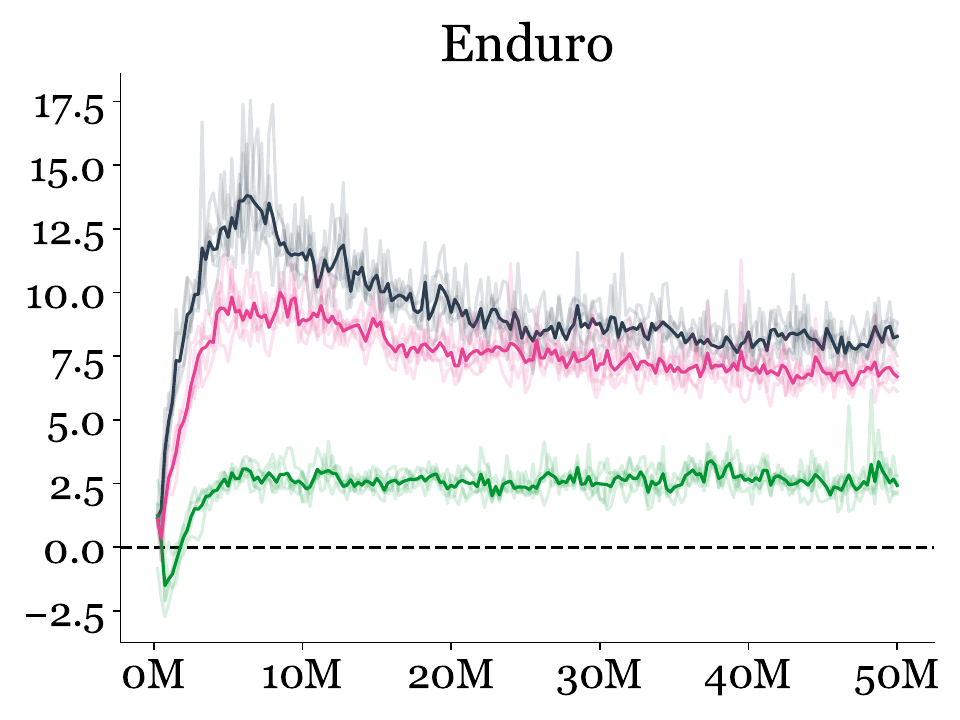} 
	\includegraphics[width=0.21\linewidth]{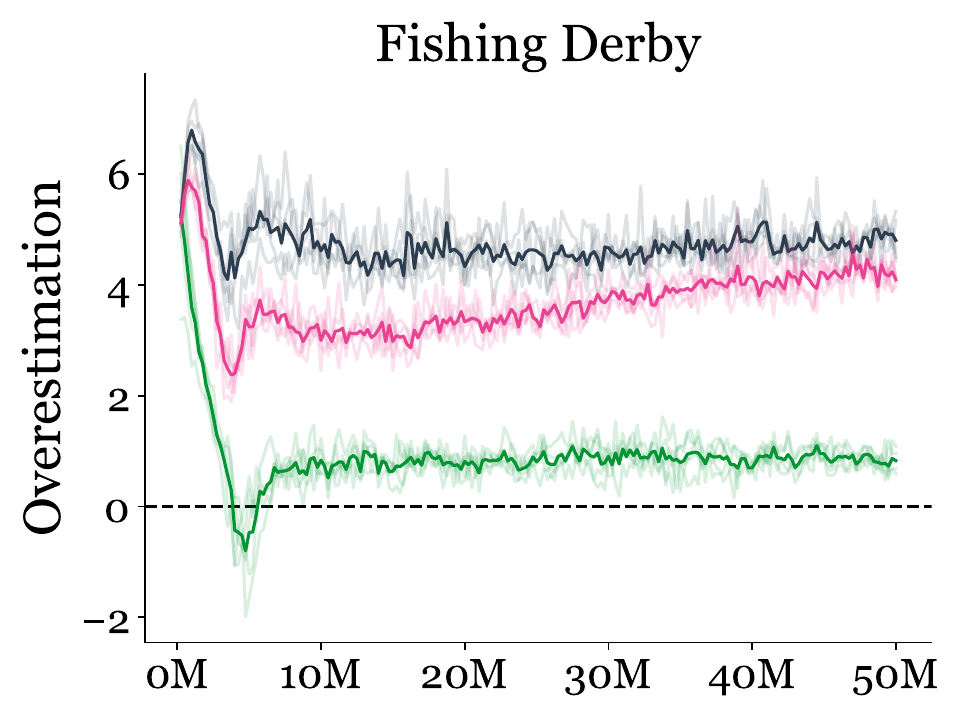} 
	\includegraphics[width=0.21\linewidth]{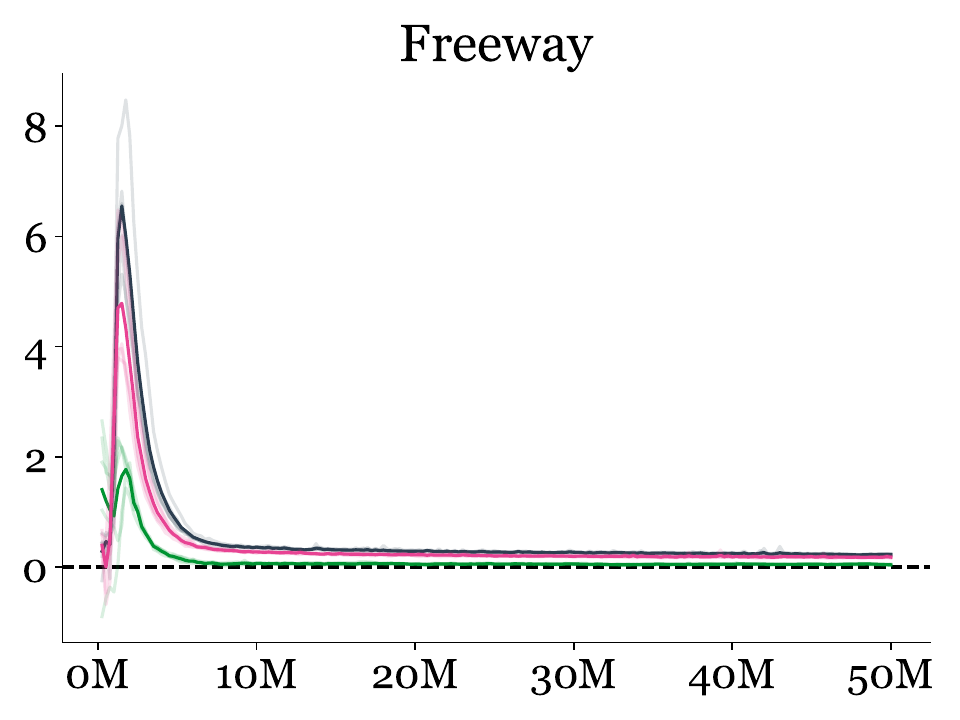} 
	\includegraphics[width=0.21\linewidth]{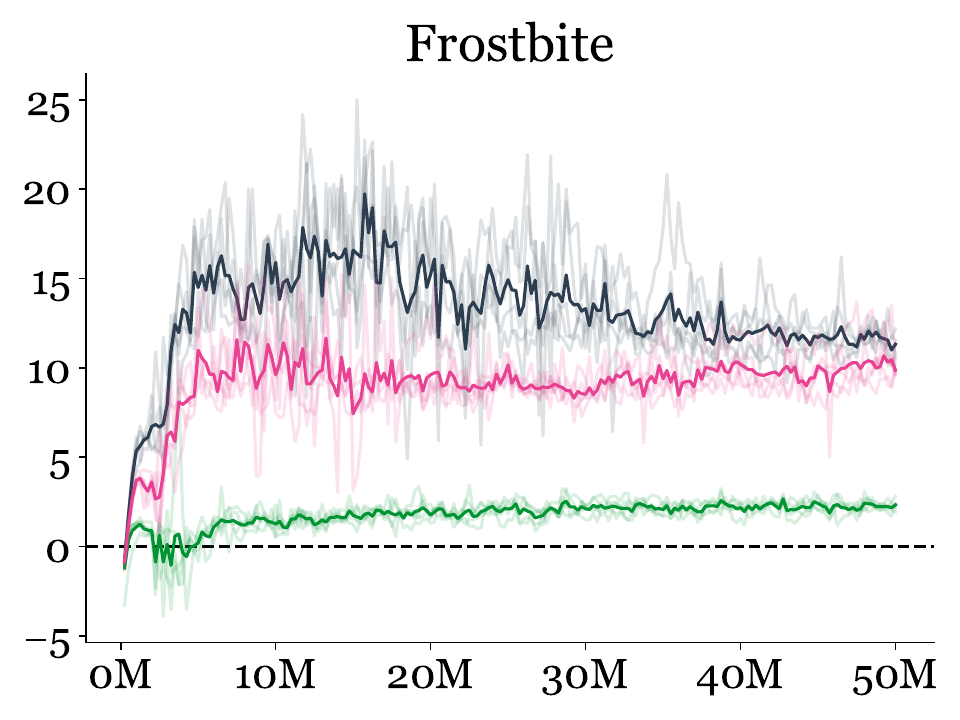} 
	\includegraphics[width=0.21\linewidth]{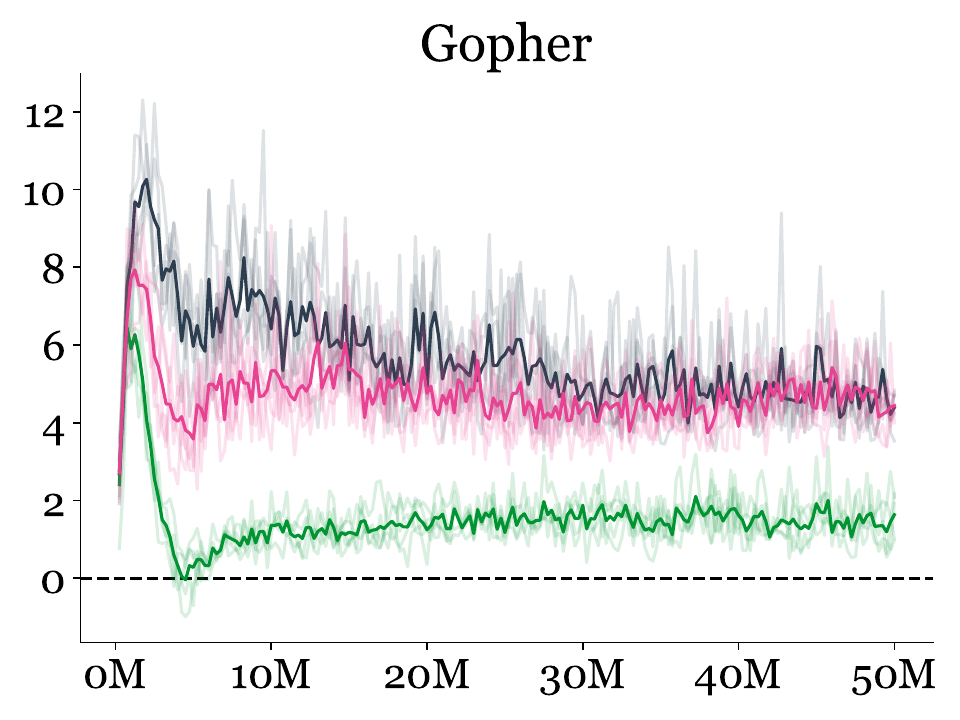} 
	\includegraphics[width=0.21\linewidth]{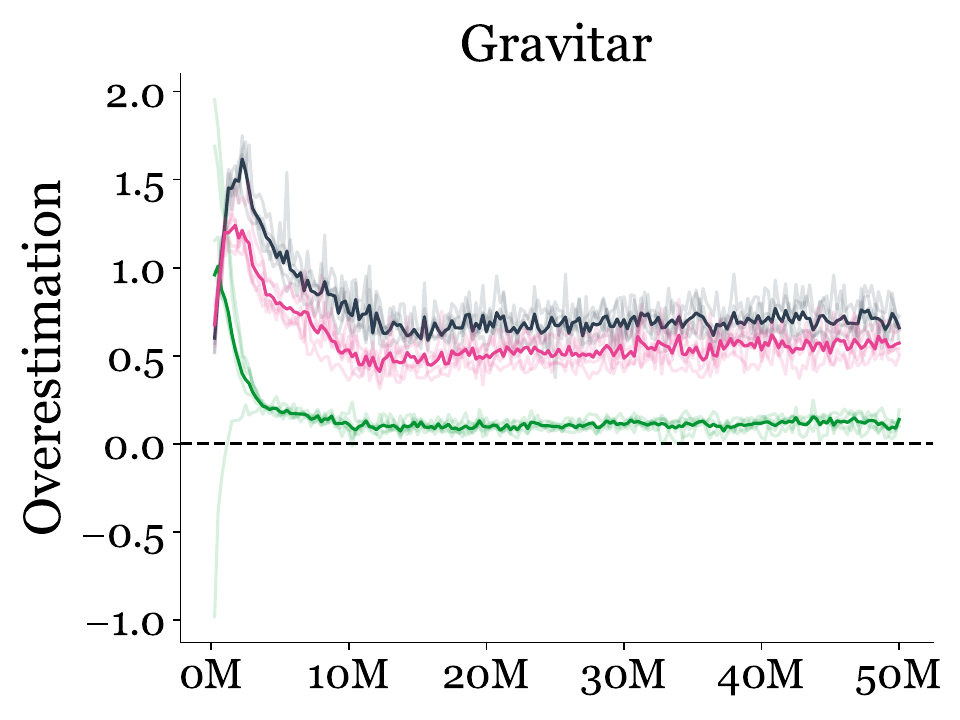} 
	\includegraphics[width=0.21\linewidth]{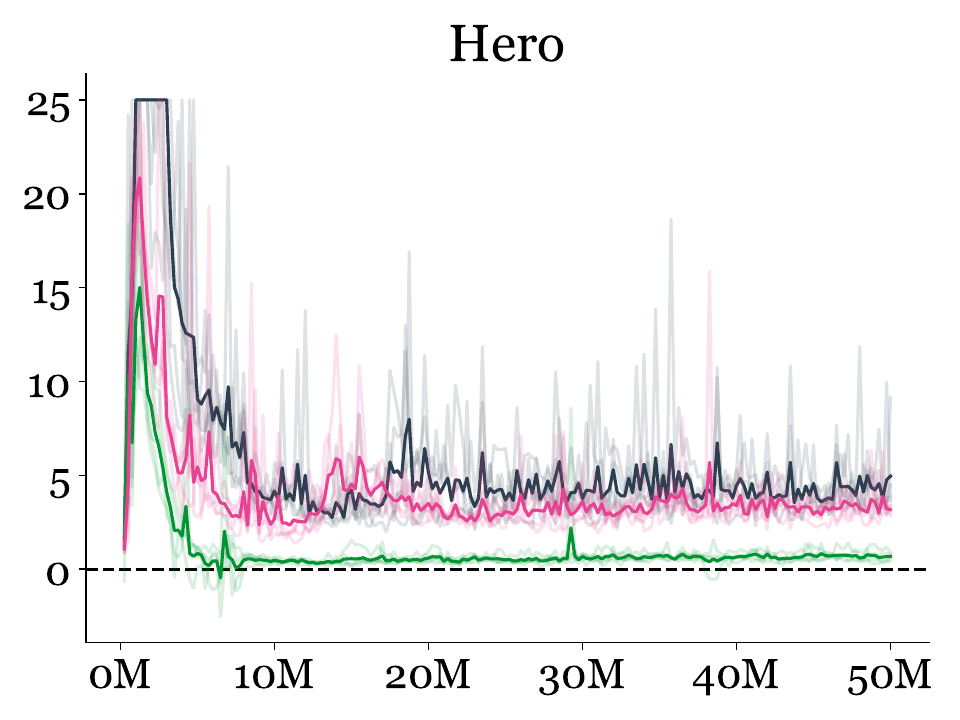} 
	\includegraphics[width=0.21\linewidth]{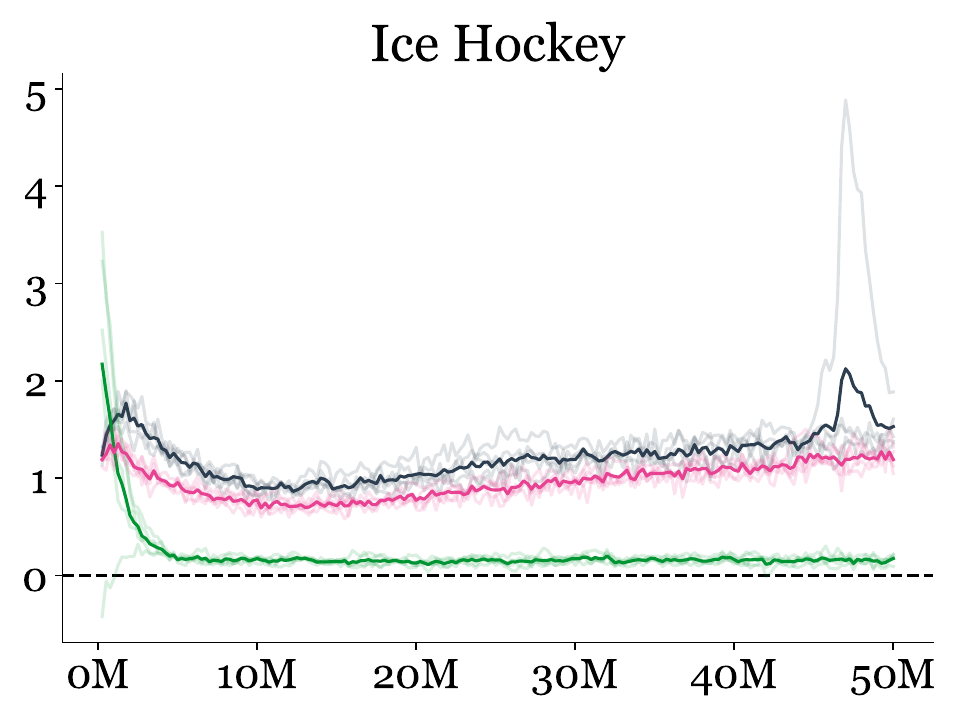} 
	\includegraphics[width=0.21\linewidth]{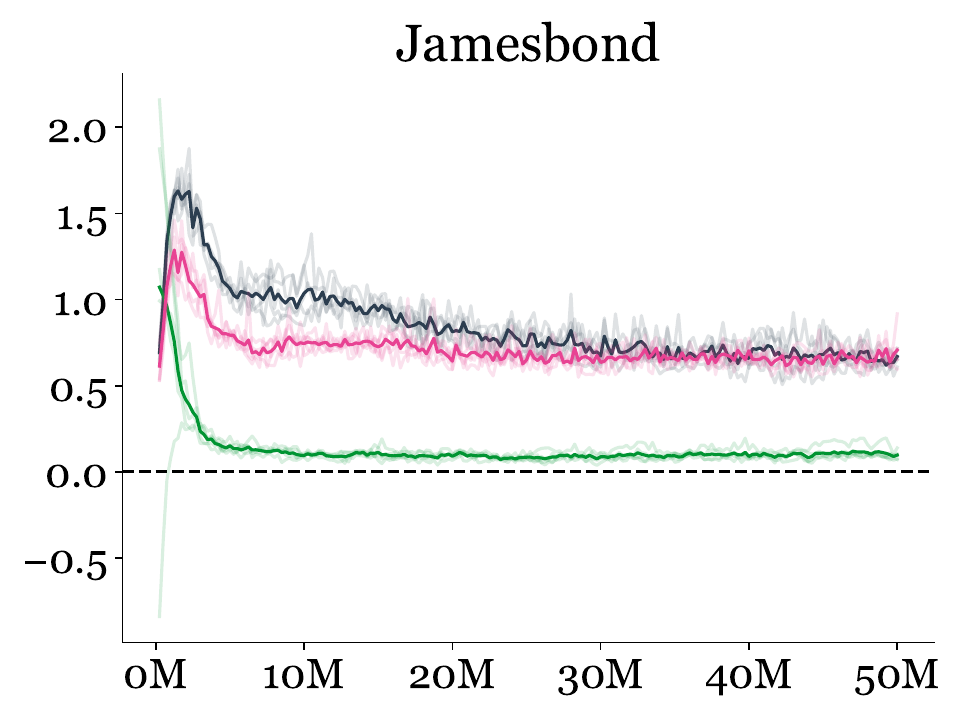} 
	\label{Atari57:Overestimation:page_1}
\end{figure}\begin{figure}[p]
        \centering
    	\includegraphics[width=0.21\linewidth]{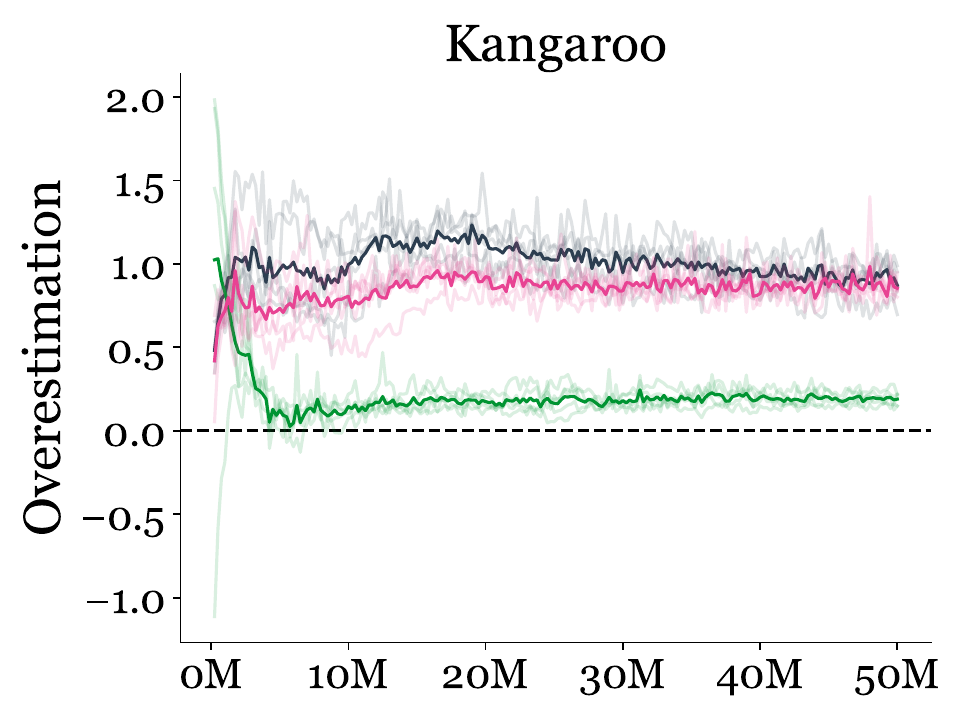} 
	\includegraphics[width=0.21\linewidth]{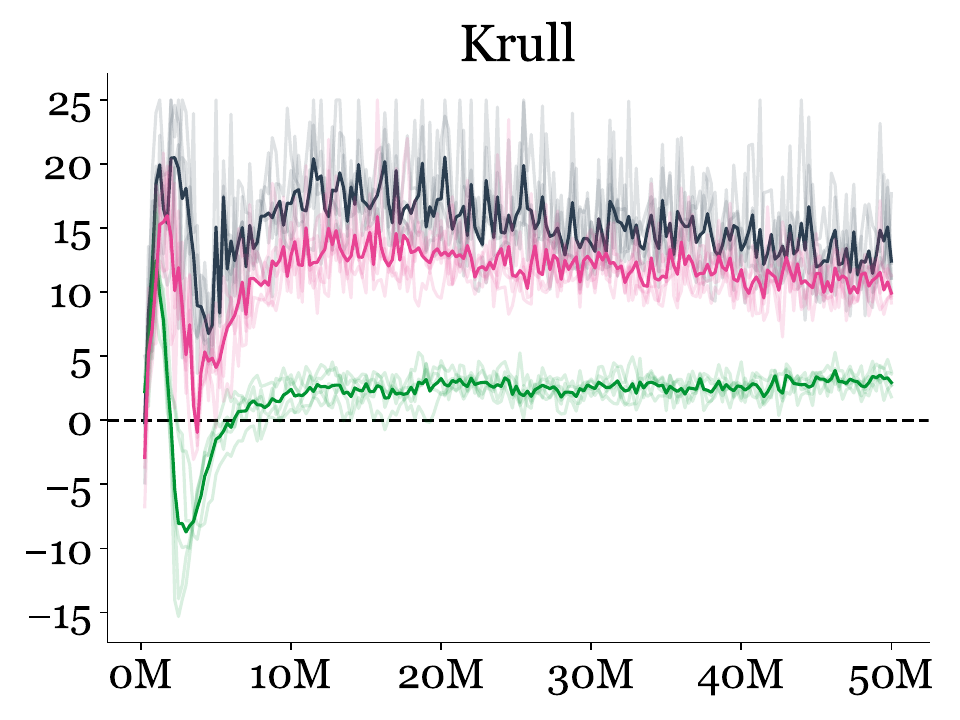} 
	\includegraphics[width=0.21\linewidth]{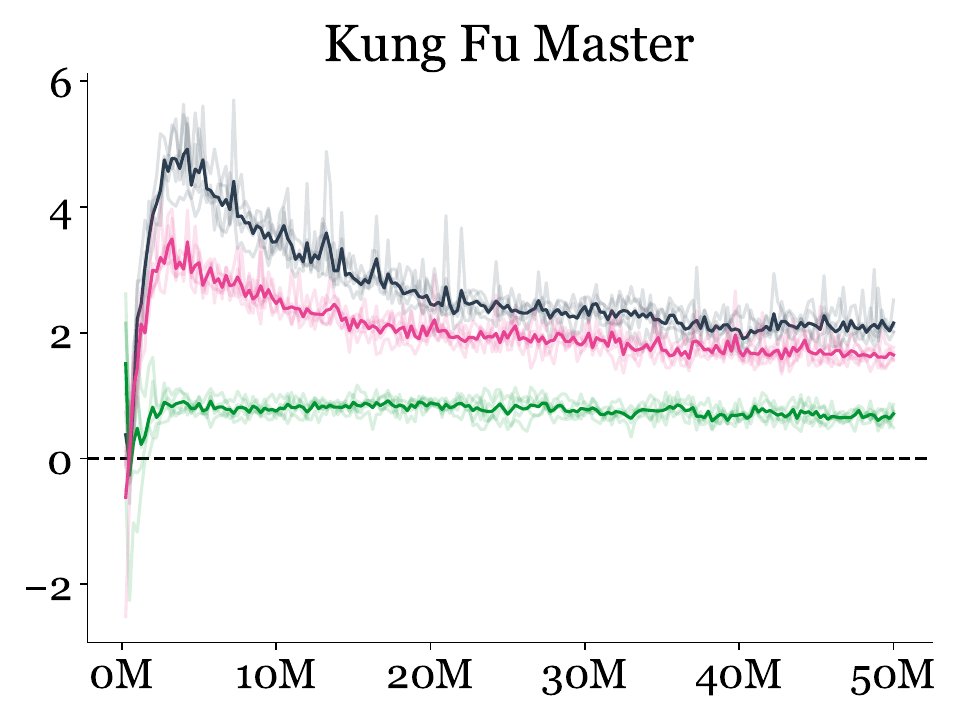} 
	\includegraphics[width=0.21\linewidth]{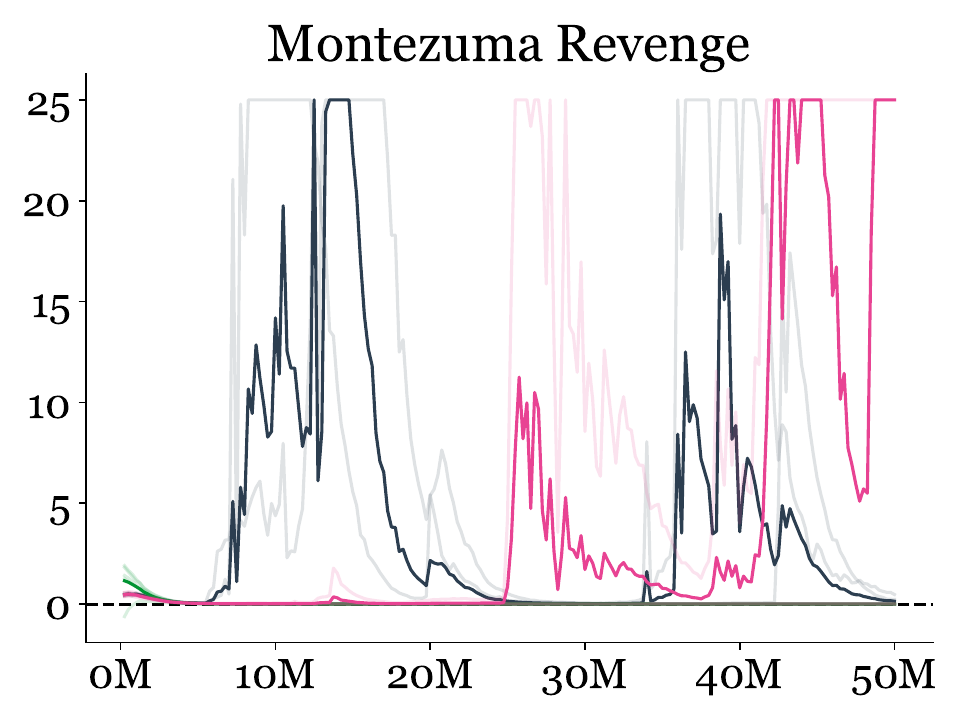} 
	\includegraphics[width=0.21\linewidth]{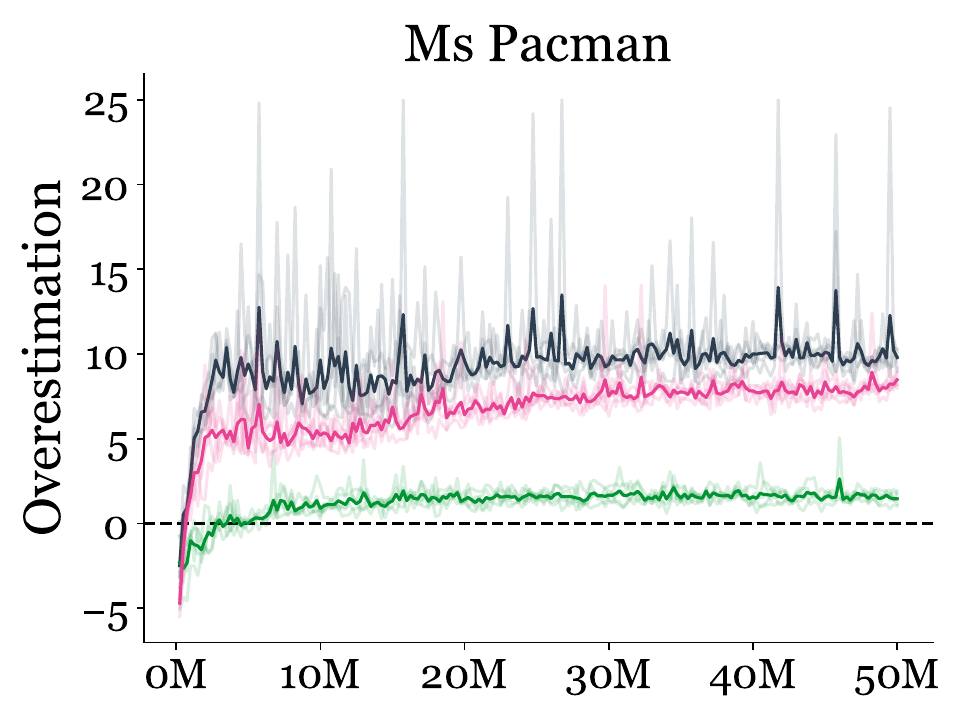} 
	\includegraphics[width=0.21\linewidth]{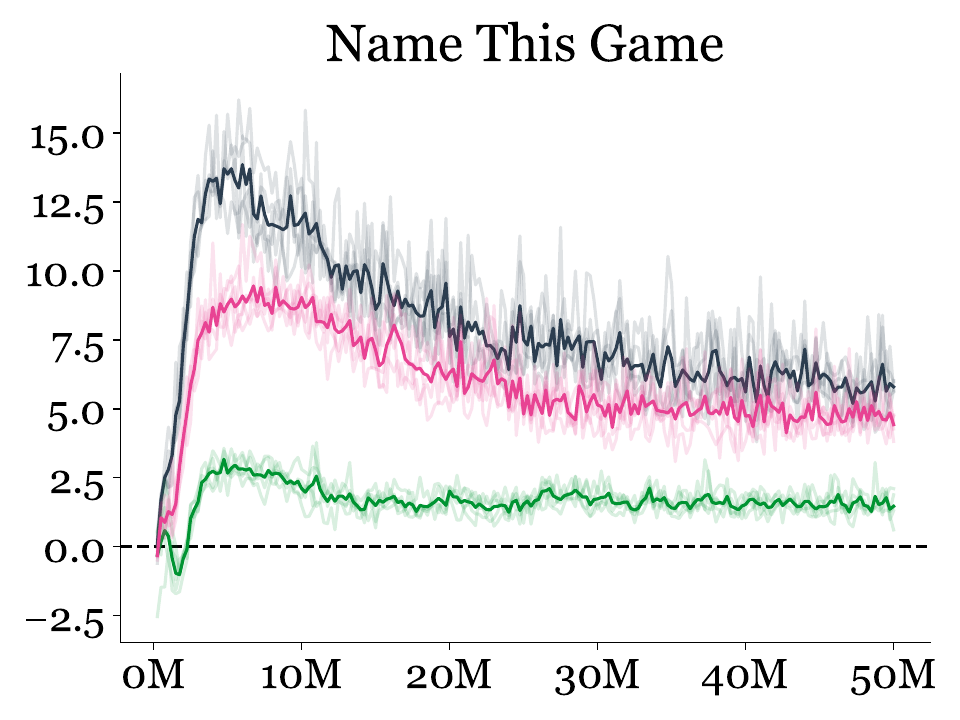} 
	\includegraphics[width=0.21\linewidth]{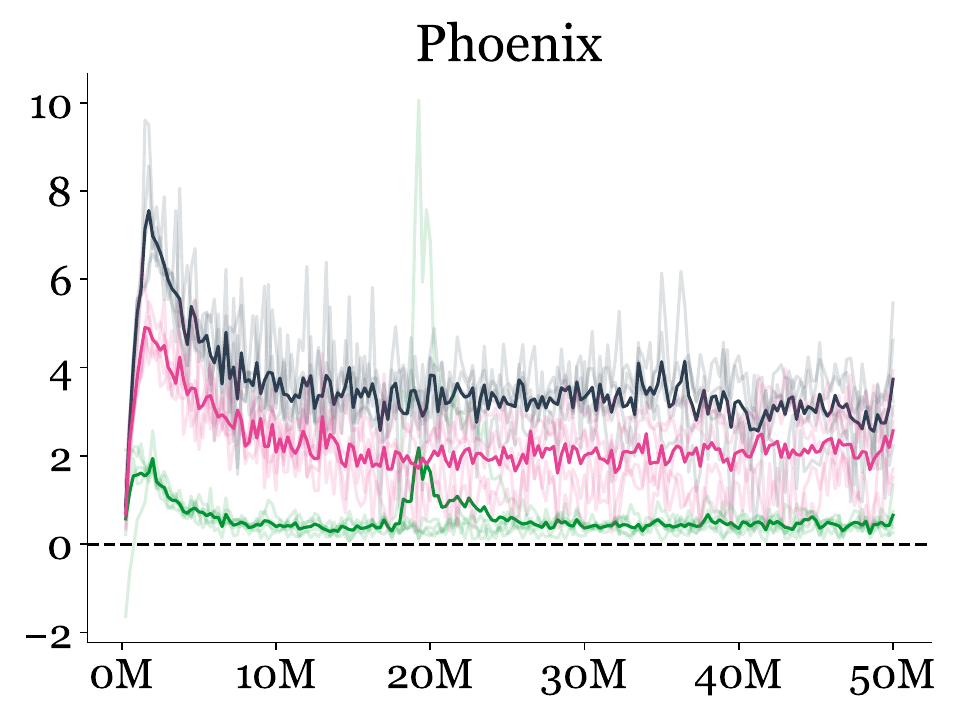} 
	\includegraphics[width=0.21\linewidth]{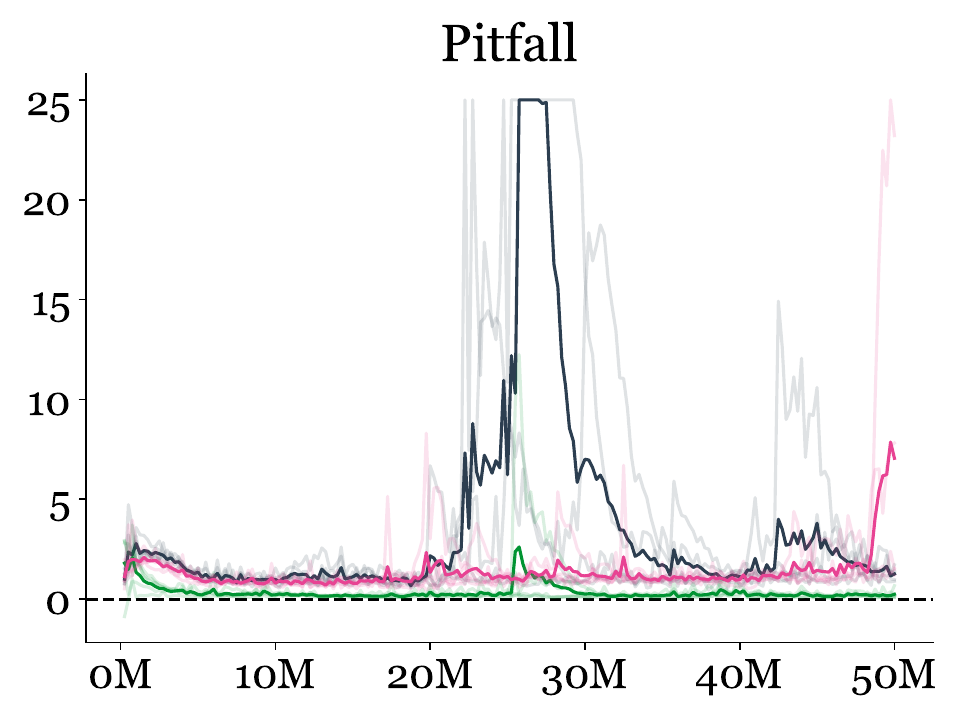} 
	\includegraphics[width=0.21\linewidth]{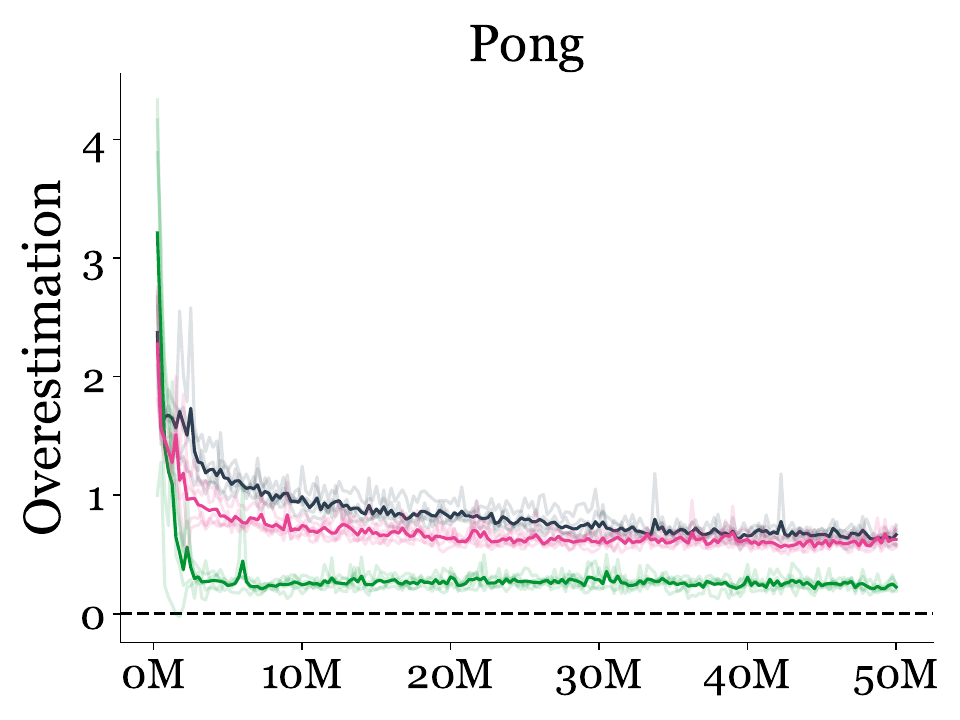} 
	\includegraphics[width=0.21\linewidth]{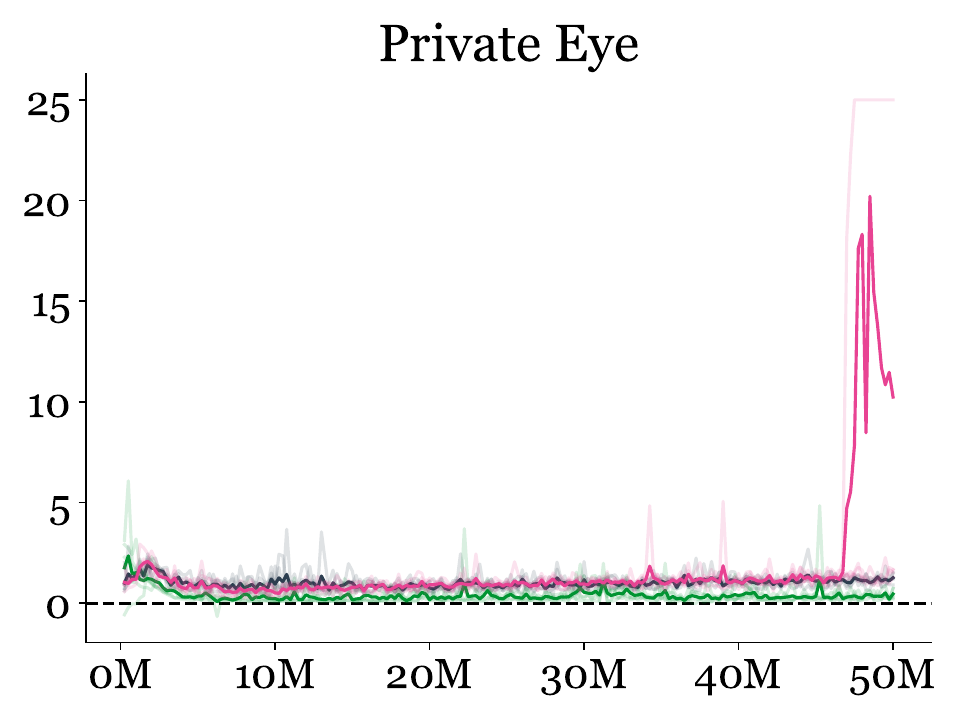} 
	\includegraphics[width=0.21\linewidth]{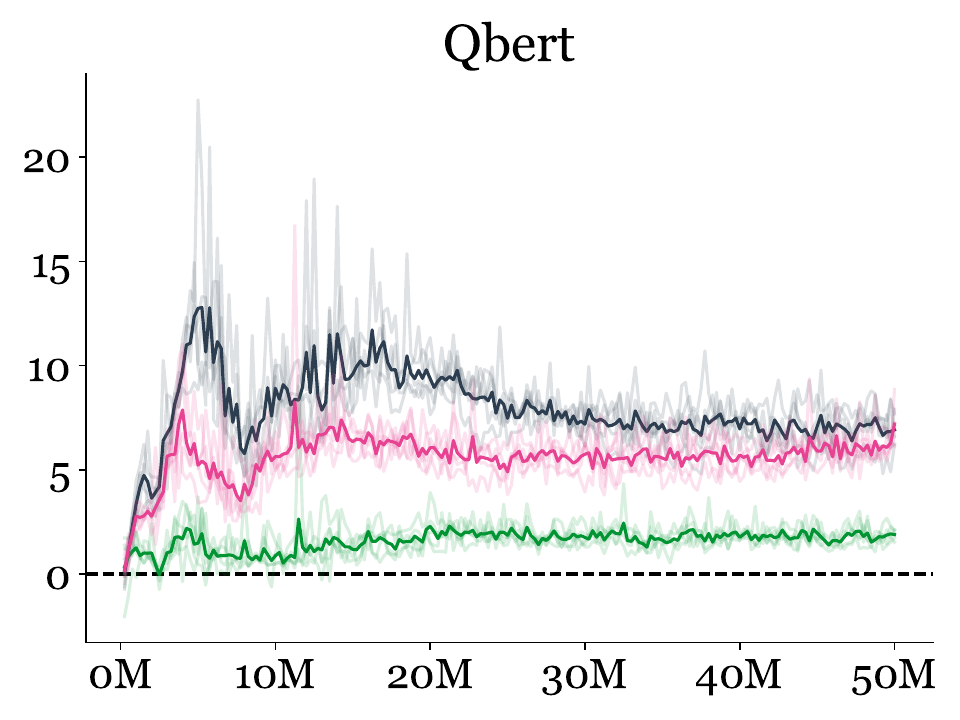} 
	\includegraphics[width=0.21\linewidth]{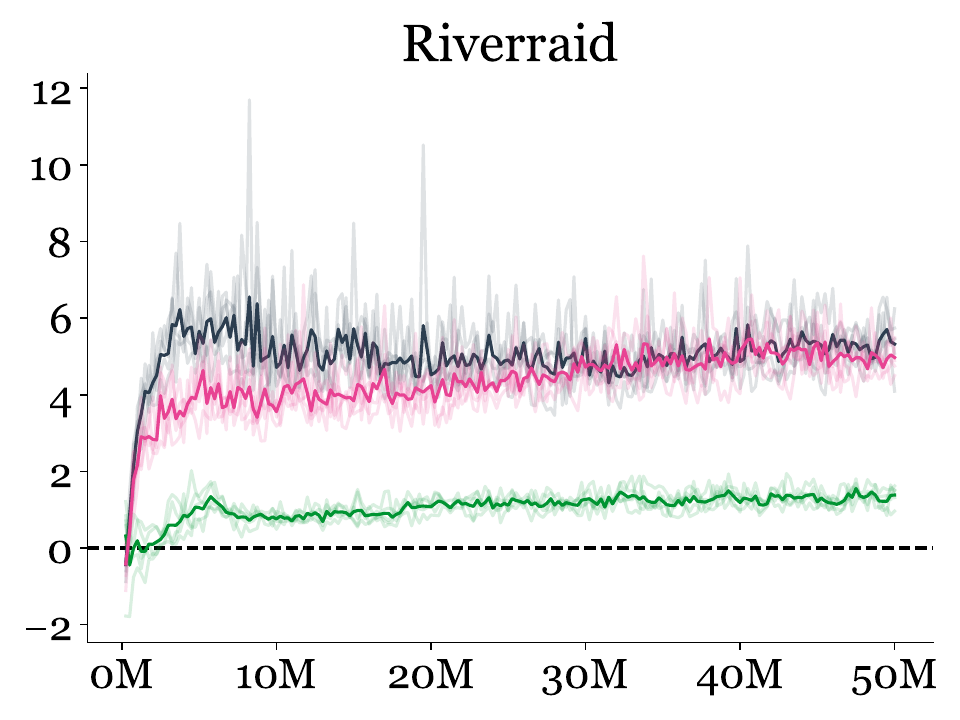} 
	\includegraphics[width=0.21\linewidth]{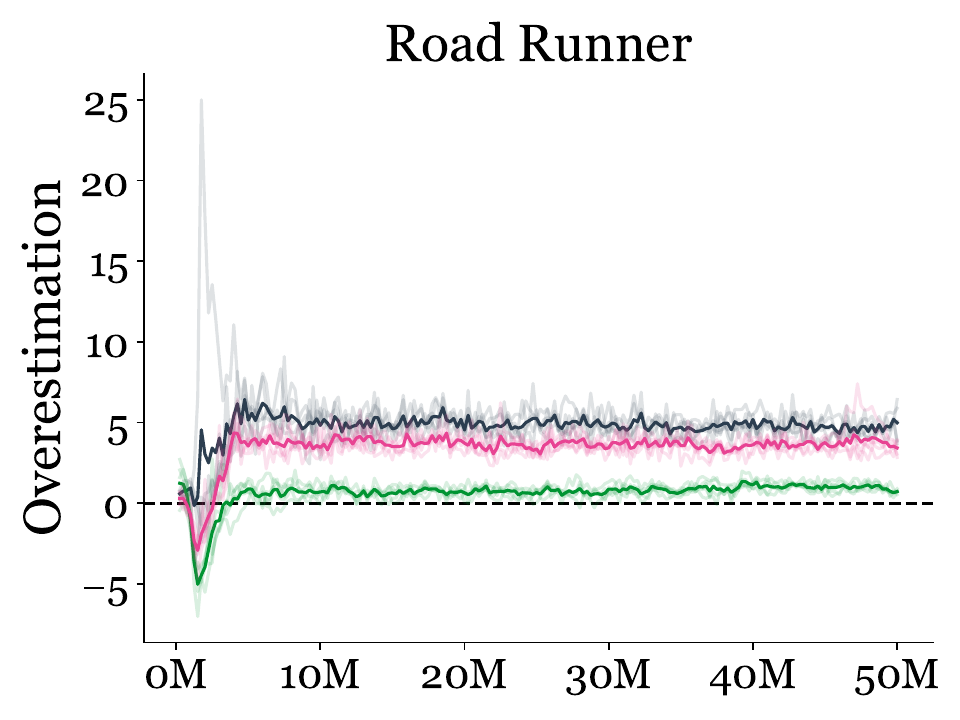} 
	\includegraphics[width=0.21\linewidth]{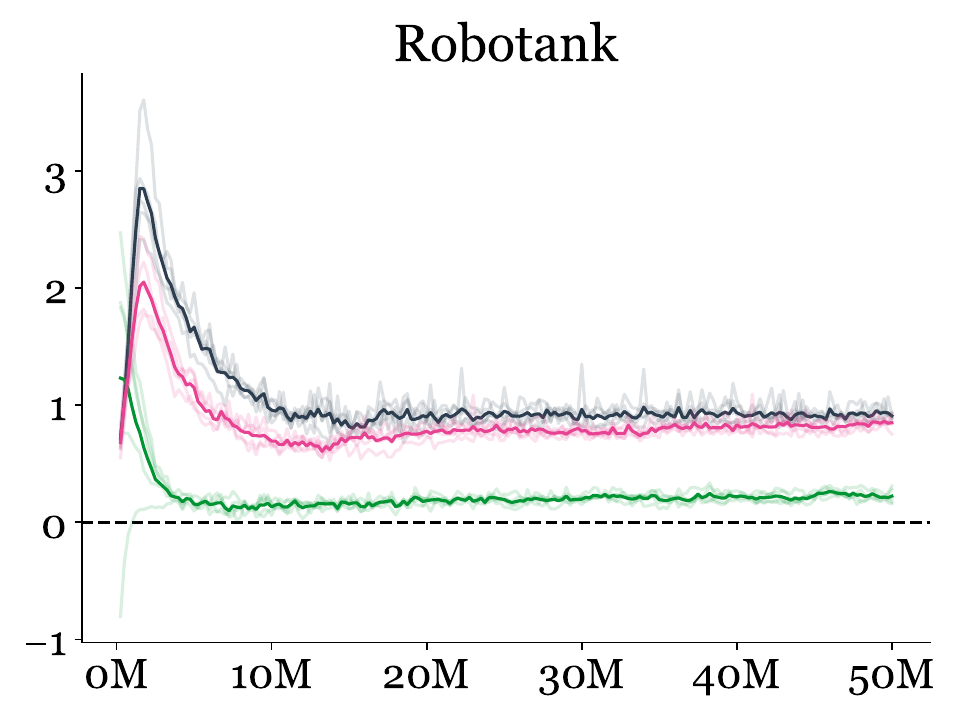} 
	\includegraphics[width=0.21\linewidth]{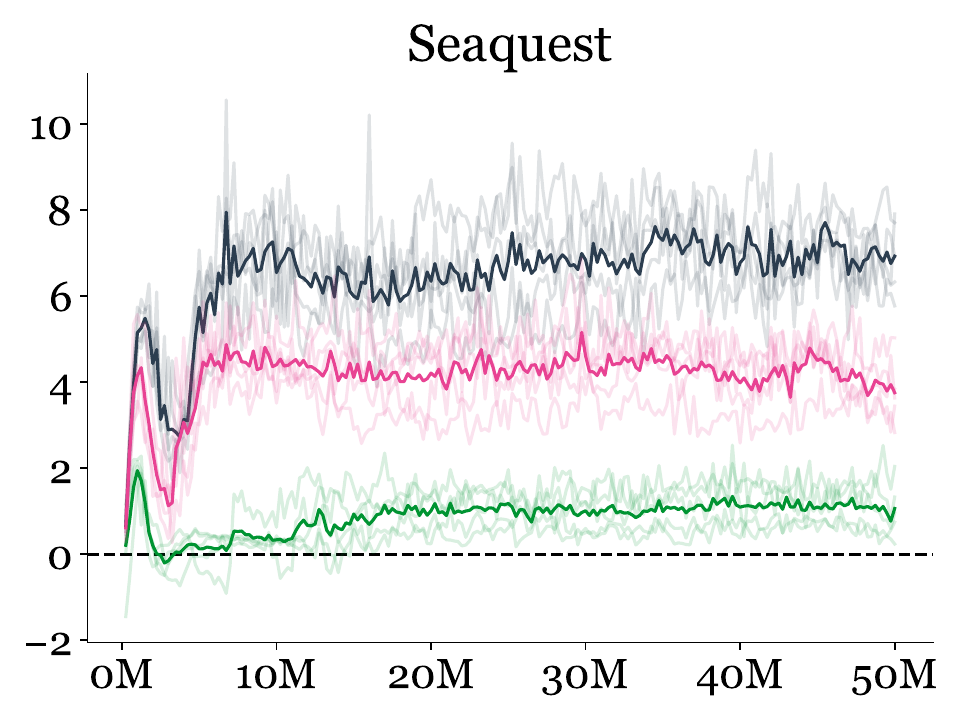} 
	\includegraphics[width=0.21\linewidth]{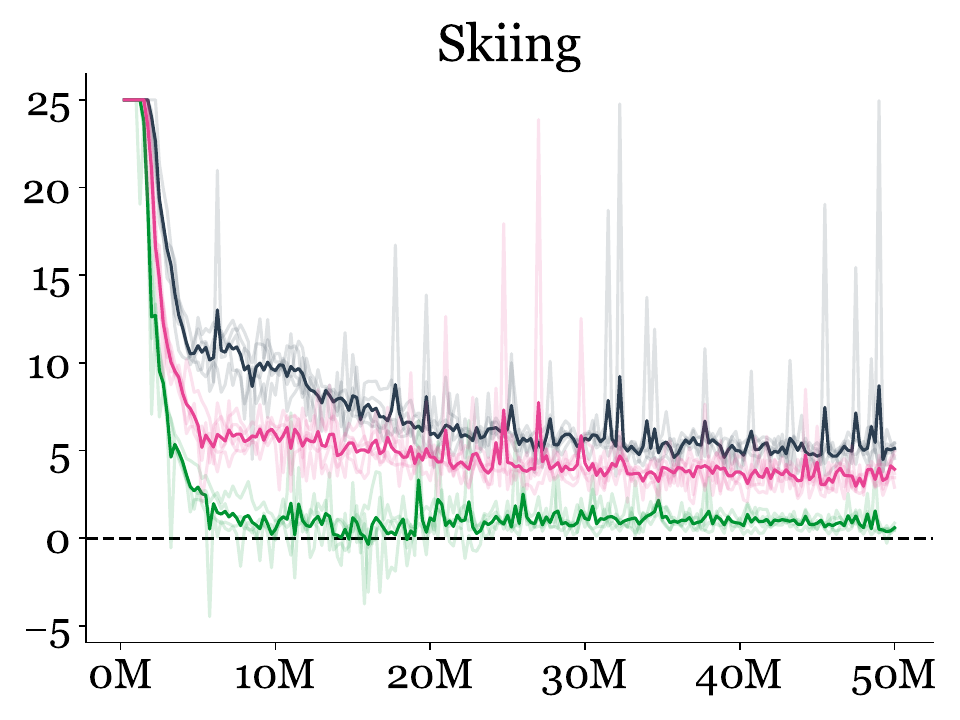} 
	\includegraphics[width=0.21\linewidth]{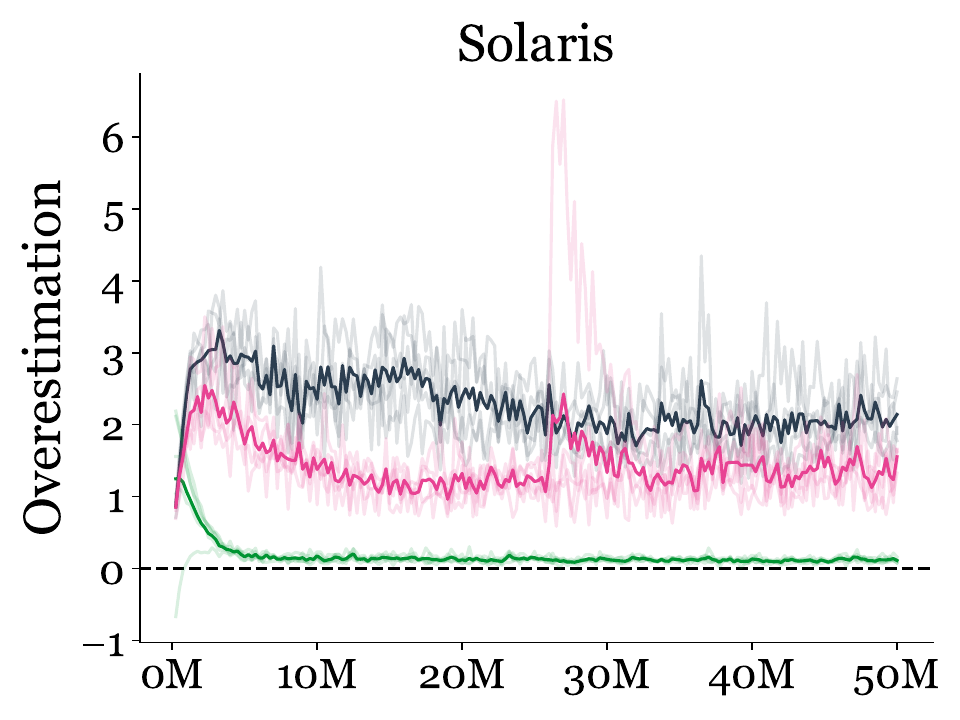} 
	\includegraphics[width=0.21\linewidth]{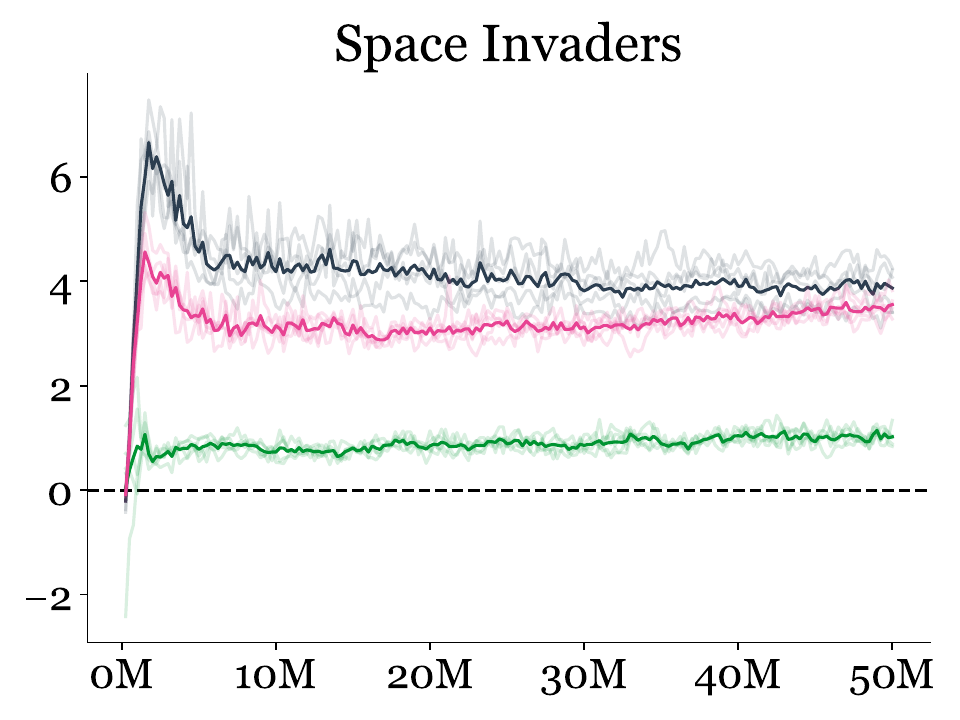} 
	\includegraphics[width=0.21\linewidth]{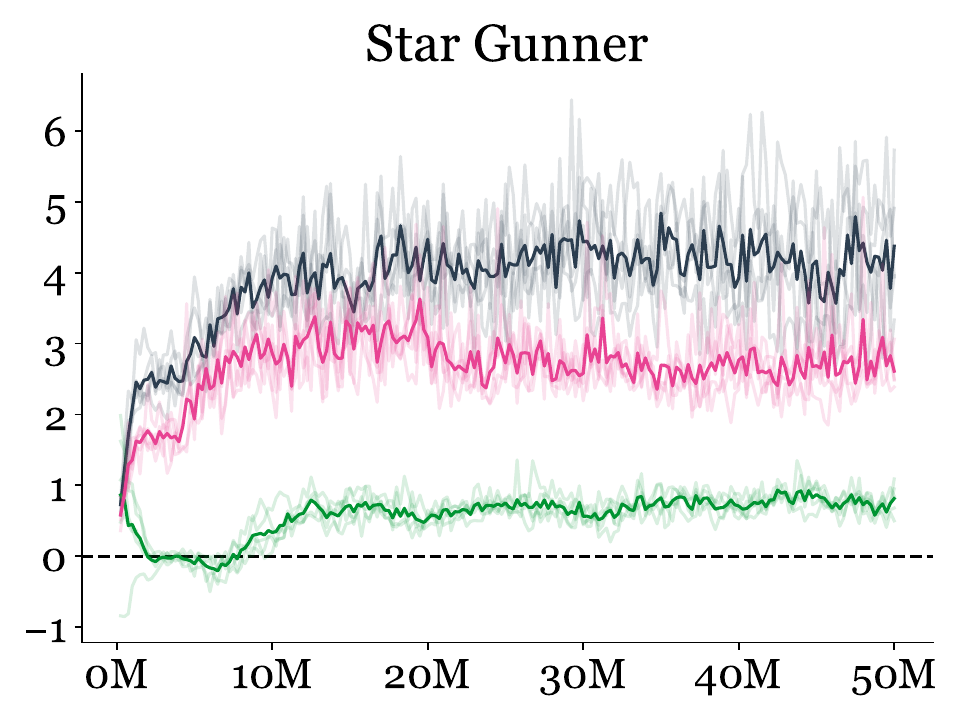} 
	\includegraphics[width=0.21\linewidth]{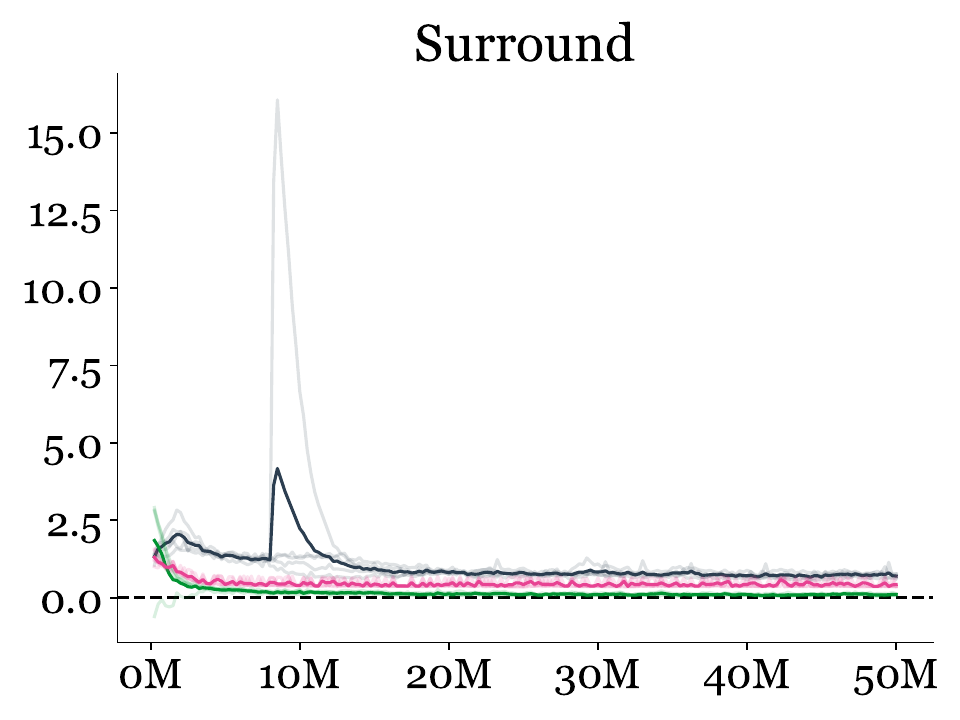} 
	\includegraphics[width=0.21\linewidth]{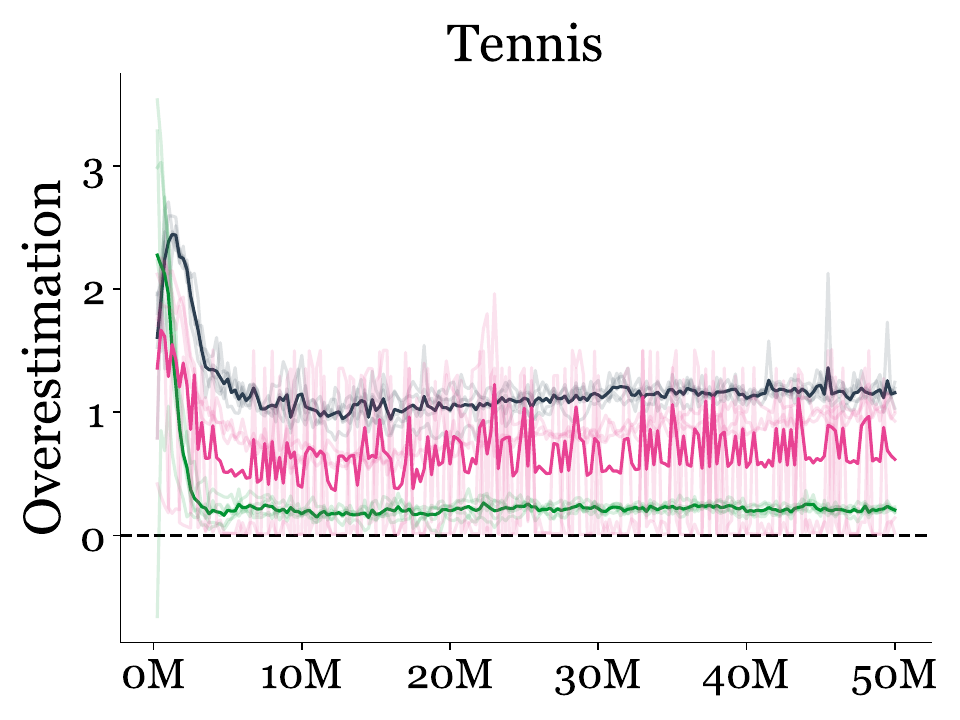} 
	\includegraphics[width=0.21\linewidth]{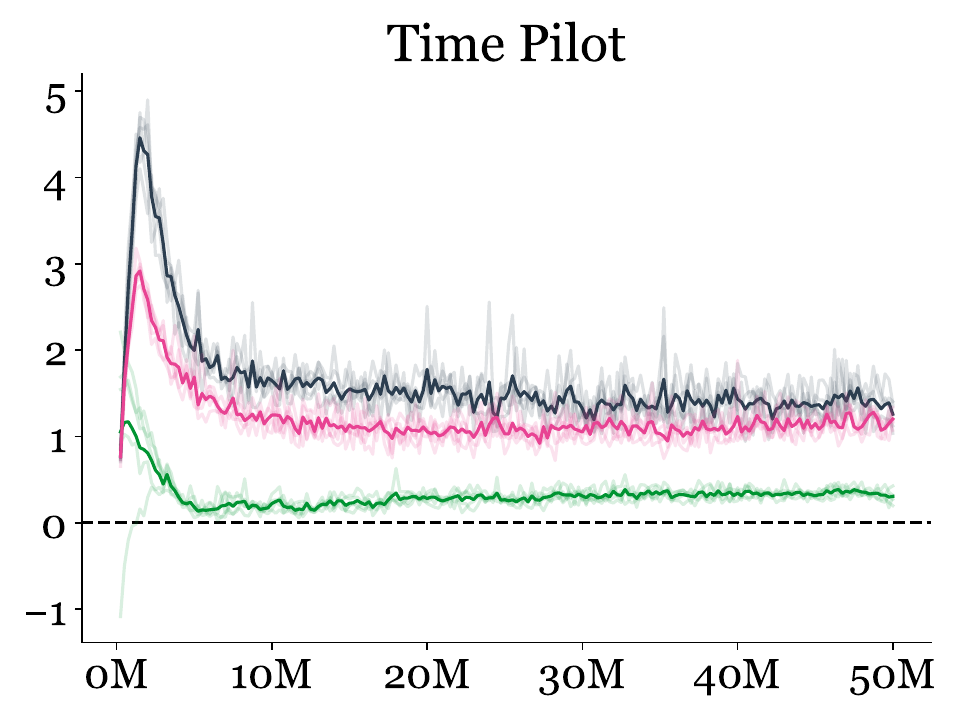} 
	\includegraphics[width=0.21\linewidth]{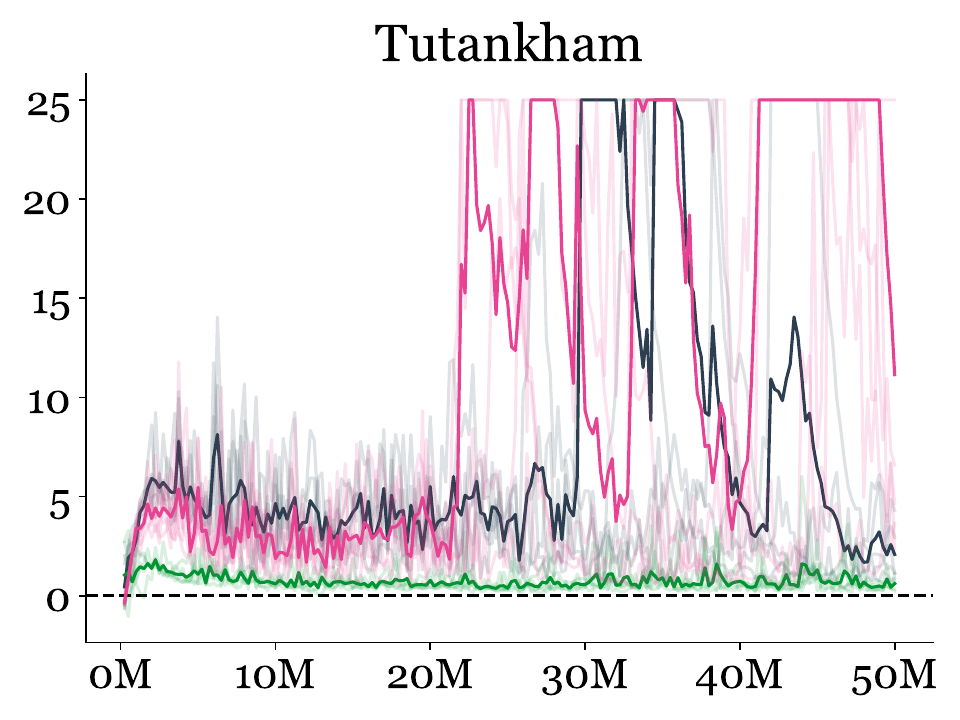} 
	\includegraphics[width=0.21\linewidth]{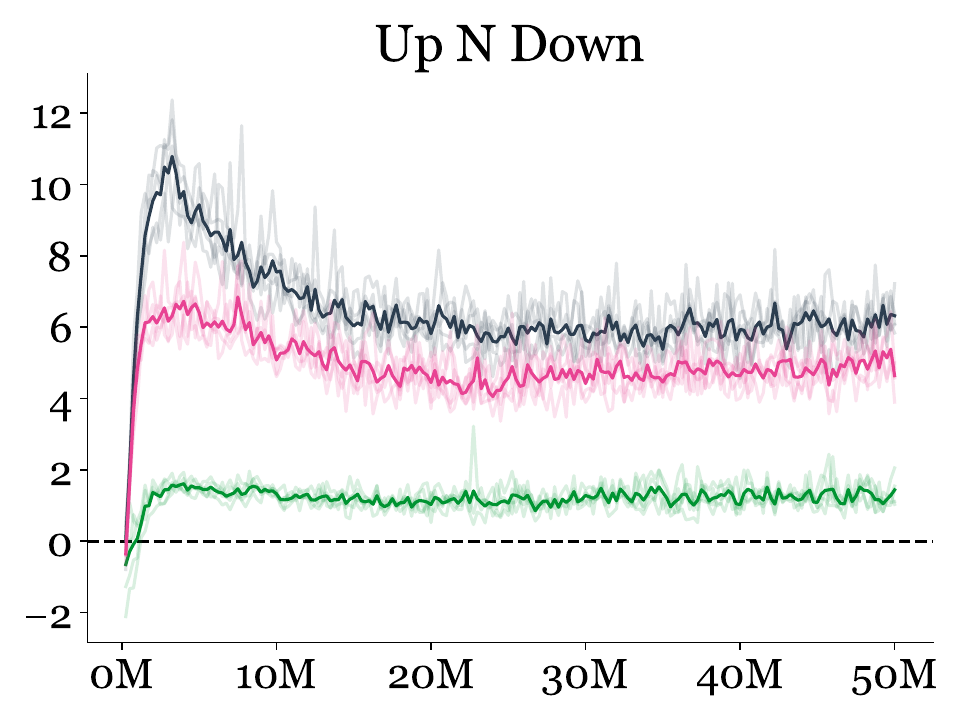} 
	\includegraphics[width=0.21\linewidth]{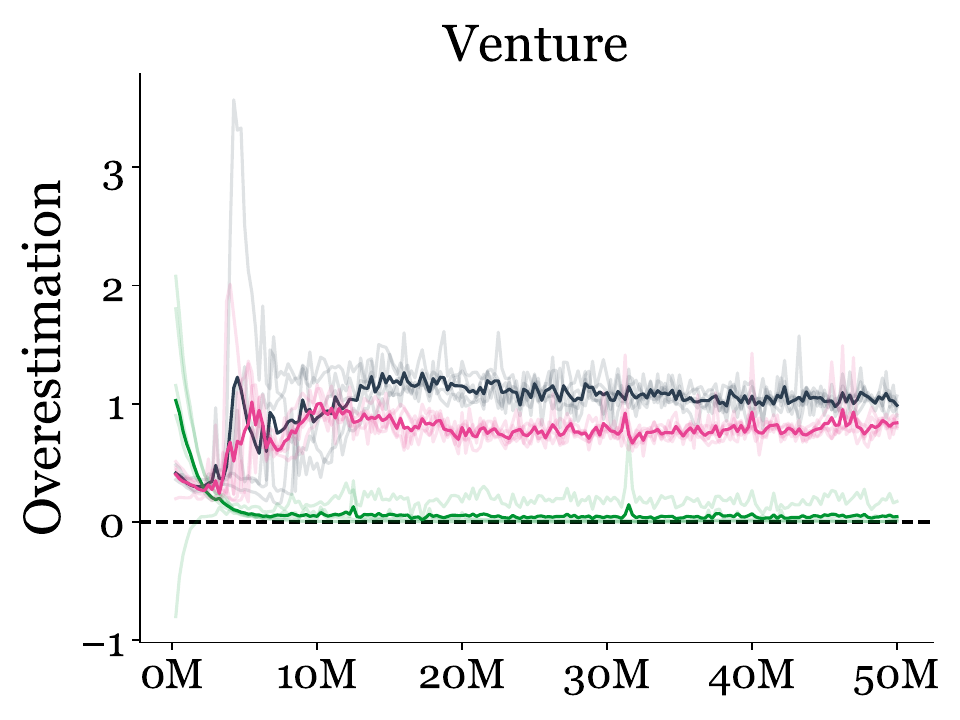} 
	\includegraphics[width=0.21\linewidth]{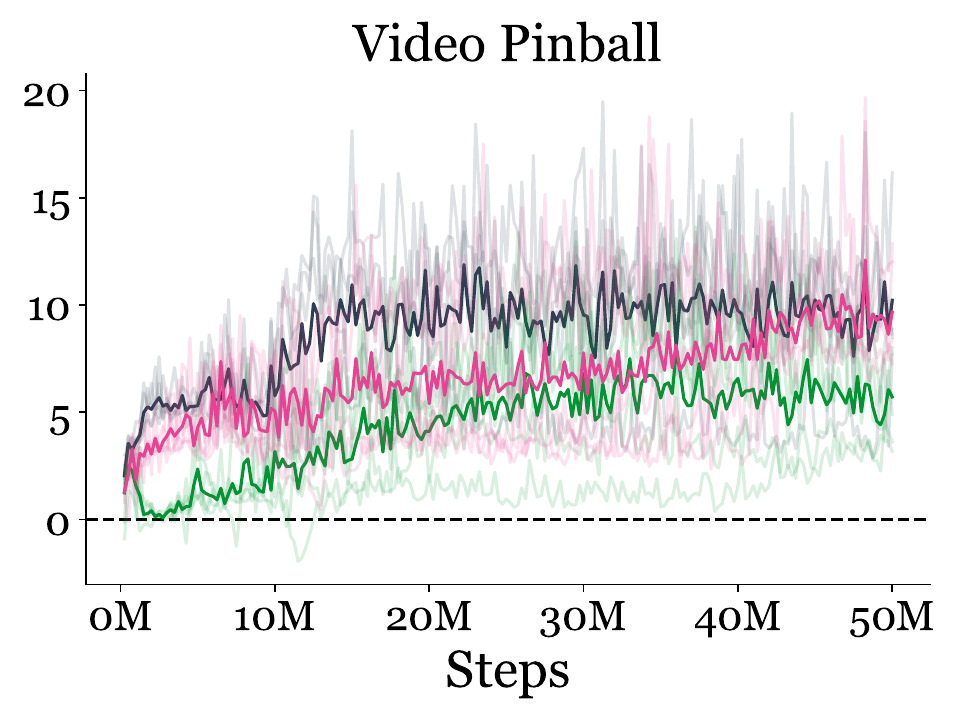} 
	\includegraphics[width=0.21\linewidth]{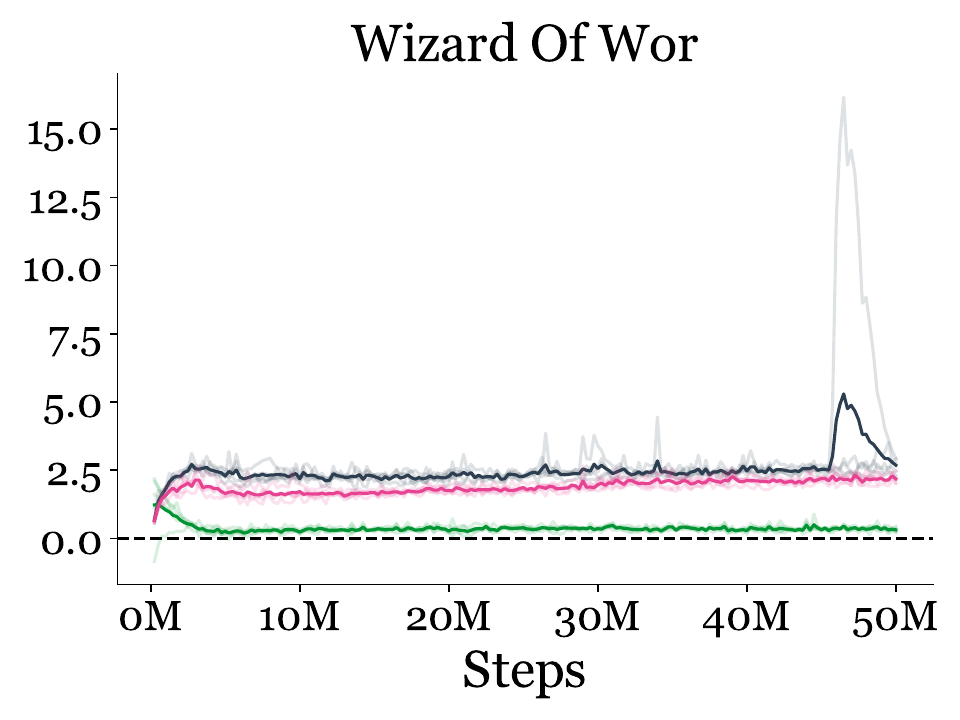} 
	\includegraphics[width=0.21\linewidth]{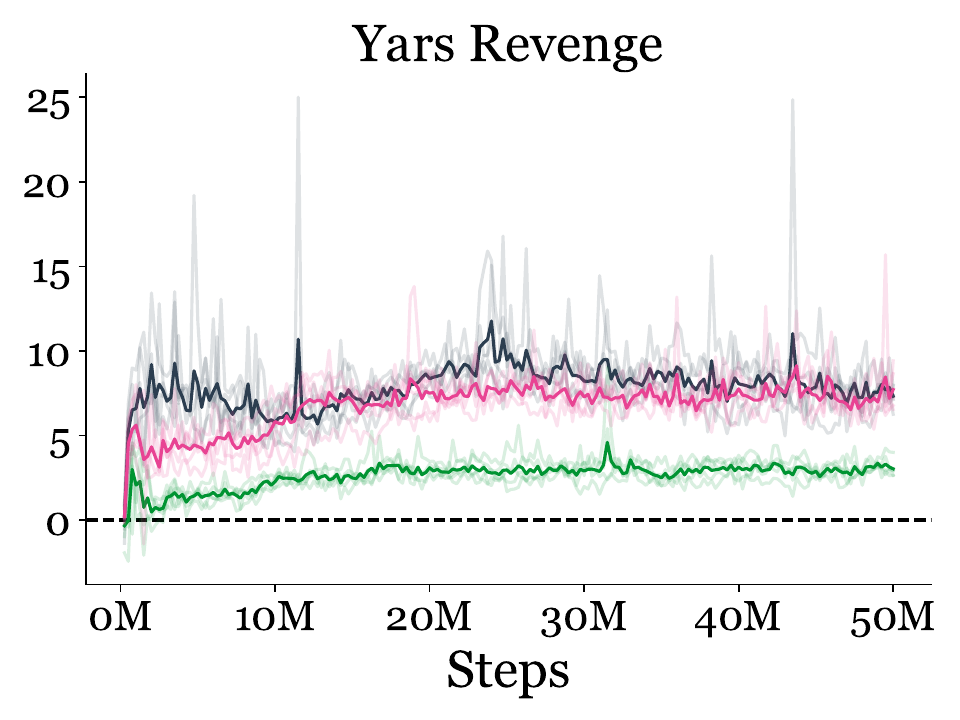} 
	\includegraphics[width=0.21\linewidth]{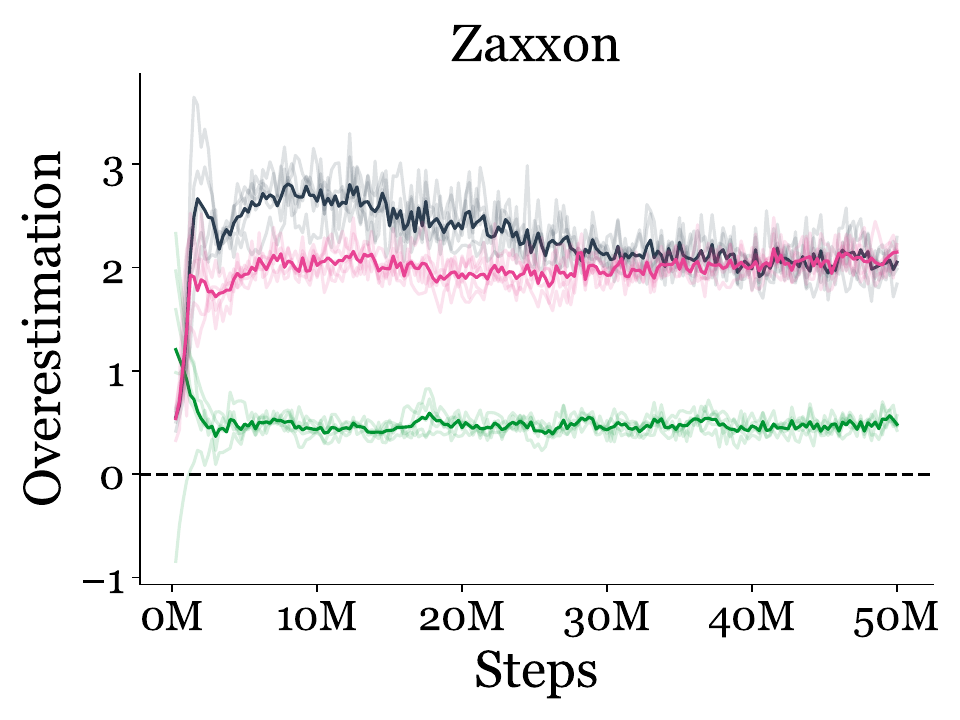}
	\hspace{0.02\linewidth}
    \raisebox{1.5em}{\includegraphics[width=0.19\linewidth]{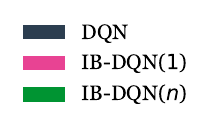}}
    \hspace{0.39\linewidth}
	\caption{Mean overestimation across 50M timesteps across five seeds. Overestimation capped at 25 for visibility. Translucent curves are the individual seeds.}
	\label{Atari57:Overestimation:page_2}
\end{figure}

\end{document}